\renewcommand*{\@opargbegintheorem}[3]{\trivlist
	\item[\hskip \labelsep{\bfseries #1\ #2}] \textbf{(#3)}\ \itshape}
\newtheorem{assumption}{Assumption}
\begin{document}

\title{Learning an Explicit Hyper-parameter Prediction Function Conditioned on Tasks}

\author{\name Jun Shu \email xjtushujun@gmail.com \\
	\addr School of Mathematics and Statistics and Ministry of  Education Key Lab of  Intelligent Networks and Network Security, Xi’an Jiaotong University, Xi'an, Shaan'xi Province, P. R. China\\
	Pazhou Lab (Huangpu), Guangzhou, Guangdong Province, P. R. China
	\AND
	\name Deyu Meng\thanks{Corresponding author} \email dymeng@mail.xjtu.edu.cn \\
	\addr School of Mathematics and Statistics and Ministry of  Education Key Lab of  Intelligent Networks and Network Security, Xi’an Jiaotong University, Xi'an, Shaan'xi Province, P. R. China\\
	Pazhou Lab (Huangpu), Guangzhou, Guangdong Province, P. R. China\\
	Macau Institute of Systems Engineering, Macau University of Science and	Technology\\
	Taipa, Macau, P. R. China.
	\AND
	\name Zongben Xu \email zbxu@mail.xjtu.edu.cn \\
	\addr School of Mathematics and Statistics and Ministry of  Education Key Lab of  Intelligent Networks and Network Security, Xi’an Jiaotong University, Xi'an, Shaan'xi Province, P. R. China\\
	Pazhou Lab (Huangpu), Guangzhou, Guangdong Province, P. R. China
}

\editor{Samuel Kaski}

\maketitle

\begin{abstract}
	Meta learning has attracted much attention recently in machine learning community. Contrary to conventional machine learning aiming to learn inherent prediction rules to predict labels for new query data, meta learning aims to learn the learning methodology for machine learning from observed tasks, so as to generalize to new query tasks by leveraging the meta-learned learning methodology. In this study, we achieve such learning methodology by learning an explicit hyper-parameter prediction function shared by all training tasks, and we call this learning process as \textit{Simulating Learning Methodology} (SLeM). Specifically, this function is represented as a parameterized function called meta-learner, mapping from a training/test task to its suitable hyper-parameter setting, extracted from a pre-specified function set called meta learning machine. Such setting guarantees that the meta-learned learning methodology is able to flexibly fit diverse query tasks, instead of only obtaining fixed hyper-parameters by many current meta learning methods, with less adaptability to query task's variations. Such understanding of meta learning also makes it easily succeed from traditional learning theory for analyzing its generalization bounds with general losses/tasks/models. The theory naturally leads to some feasible controlling strategies for ameliorating the quality of the extracted meta-learner, verified to be able to finely ameliorate its generalization capability in some typical meta learning applications, including few-shot regression, few-shot classification and domain generalization. The source code of our method is released at \url{https://github.com/xjtushujun/SLeM-Theory}.
\end{abstract}

\begin{keywords}
	Meta learning, simulating learning methodology, statistical learning theory, few-shot learning, domain generalization, structural risk minimization, meta-regularization
\end{keywords}

\newpage

\section{Introduction}\label{introduction}

The core goal of machine learning is to learn the inherent generalization rule underlying data from some empirical observations, so as to make label predictions for new query samples. Such a generalization rule is generally modeled as a parameterized function (i.e., learner), and extracted from a pre-specified function set (i.e., learning machine) \citep{jordan2015machine}. In the recent decade, deep learning approaches, equipped with highly parameterized learning machine (deep neural network architectures), have made great successes in a variety of fields \citep{he2016deep,silver2016mastering,devlin2018bert}, strongly substantiating the validity of this learning framework.

However, nowadays gradually more deficiencies have been emerging for this conventional machine learning framework. On the one hand, its success largely relies on vast quantities of pre-collected annotated data, and simultaneously huge computation resources. However, most applications in real world have intrinsically rare or expensive data, or limited computation resources. This inclines to largely degenerate the capability of conventional machine learning, especially deep learning, expected by general users. On the other hand, current deep learning methods are always designed with complicated architectures and possess huge amounts of hyper-parameters, making them easily trapped into the overfitting issues, and possibly perform poorly on the test domain.

The above limitations can be mainly attributed to highly complicated hyper-parametric configurations involved in almost every single component of the learning process in machine learning \citep{jordan2015machine}, e.g., data screening, model constructing, loss function presetting and algorithm designing, etc (see Section \ref{machinelearning} and Table \ref{tablexx}), for handling a practical learning task. The conventional assumption is that these hyparameters are pre-specified when executing machine learning algorithm. However, the specification of hyparameters can drastically affect performance measures like accuracy or data efficiency.
Therefore, seeking a hyper-parameter configuration where the machine learning algorithm will produce a model that generalizes well to new data attracts much resent research attention.

Conventional machine learning literatures have raised many elegant approaches for such hyper-parameter selection issue, like Akaike Information Criterion (AIC) \citep{akaike1974new}, Bayesian Information Criterion (BIC) \citep{schwarz1978estimating}, Minimum Description Length (MDL) \citep{barron1991minimum}, and validation set based approaches \citep{stone1974cross}. Their availability, however, is mostly restricted to the problems with relatively small-scale hyper-parametric structures. When facing complicated and massive hyper-parametric configurations, especially those related to a deep neural network, these concise manners are generally incapable of taking effect. Instead, in most cases the task still highly relies on manual attempts by human experts to the problem. By deeply understanding the problem and accumulating heuristic experience of hyper-parameter tuning, such human-designed manner does be able to make effect to specific tasks. However, when the task is with dynamic variations, the learning method is always required to be re-designed from scratch based on the new understandings of humans to the varying task. Especially, a learning method with carefully modulated hyper-parameters might be excessively good to the investigated learning tasks, while hardly to be readily generalized to a new query task with insightful correlations but also evident variations, like data modalities, network architectures, loss formulations and utilized algorithms, with the trained tasks. Then such laborsome process for hyper-parameter tuning has to be started again, which then inclines to result in most core issues encountered by current machine learning. It has been gradually more widely recognized nowadays that appropriately specifying these hyper-parameters contained in the entire learning process of machine learning algorithm has become significantly more difficult, time-consuming and laborious than directly running a well designed learning process itself for a machine learning algorithm \citep{hospedales2020meta}.

\subsection{Background of Meta-learning Research}

Meta learning, or learning to learn, seeks to improve conventional machine learning by nesting two search problems \citep{hospedales2020meta,franceschi2018bilevel}: at the inner level (or with-task level) we seek a good task-specific model (as in standard machine learning), while at the outer level (or meta level) we seek a good hyper-parameters configuration rather than assuming it is pre-specified and fixed, that ensures the produced model at the inner level to generalize well. The realization strategy is to learn the common hyper-parameter specification principle among a set of training tasks, instead of training samples as conventional machine learning. The aim is to get the hyper-parameter setting rule shared by training tasks, which is thus expected to improve future task learning performance. Such a learning manner can lead to a variety of benefits such as improved data and compute efficiency of machine learning, and being better aligned with human learning, that can learn new concepts quickly from few examples \citep{hospedales2020meta}.
In the recent years, many meta learning studies have been raised, constructed on specific problems, e.g., AutoML \citep{yao2018taking} and algorithm selection \citep{vanschoren2018meta}, or particular applications, e.g., few-shot learning \citep{finn2017model,wang2020generalizing,shu2018small}, neural architecture search (NAS) \citep{elsken2019neural}, or hyper-parameter optimization \citep{franceschi2018bilevel}. Typical researches along this line are listed in Table \ref{tablexx}.

\begin{table*}[t] \vspace{-0mm}
	\caption{Taxonomy of some typical recent literatures on meta learning based on their specified hyper-parameters to learn.}
	\label{tablexx}\vspace{-8mm}
	\centering
	\vskip 0.1in
	\begin{center}
		\begin{tiny}
			\begin{tabular}{l|l}
				\toprule
				Taxonomy	& \hspace{10em} Hyper-parameters to learn in the learning process \\
				\midrule
				\multirow{5}*{Data collection}	& \multirow{5}{*}{\shortstack[l]{data simulator \citep{ruiz2018learning}, dataset distillation \citep{wang2018dataset}, \\  instance weights \citep{shu2019meta,shu2020meta, shu2022cmw,shu2023cmw}, label corrector \citep{wu2020learning,zheng2021meta},\\
						exploration policy \citep{xu2018learning,garcia2019meta}, noise generator \citep{madaan2020learning}\\data annotator \citep{konyushkova2017learning}, data augmentation policy \citep{cubuk2018autoaugment}
				}}   \\
				&           \\
				&           \\
				&              \\
				&              \\
				\midrule
				\multirow{5}*{Model construct}	&  \multirow{5}*{\shortstack[l]{neural architecture \citep{zoph2016neural,liu2018darts}, activation function \citep{ramachandran2017searching}, \\ feature modulation function \citep{ryu2020metaperturb,wang2020meta}, neural modules \citep{alet2019neural}\\ dropout \citep{lee2019meta},
						neural processes \citep{garnelo2018conditional,requeima2019fast},  \\attention \citep{madan2021meta}, batch normalization \citep{bronskill2020tasknorm,du2021metanorm}}} \\
				&           \\
				&           \\
				&           \\
				&            \\
				\midrule
				\multirow{5}*{Loss function preset}	& \multirow{5}*{\shortstack[l]{   loss predictor \citep{houthooftevolved,huang2019addressing,gonzalez2020improved}\\ metric \citep{sung2018learning,lee2018gradient}, auxiliary loss \citep{li2019feature,veeriah2019discovery}, \\ critic predictor \citep{sung2017learning}, regularization \citep{balaji2018metareg,denevi2020advantage}, \\ robust loss \citep{shu2020learning,shu2020meta,ding2023improve,rui2022hyper}}}\\
				&           \\
				&          \\
				&          \\
				&          \\
				\midrule
				\multirow{5}*{Algorithm design}	& \multirow{5}*{\shortstack[l]{gradient generator
						\citep{andrychowicz2016learning,wichrowska2017learned,ravi2016optimization} \\
						initialization \citep{finn2017model,fakoor2019meta,song2019maml,wang2020structured},  \\
						preconditioning matrix \citep{flennerhag2019meta}, curvature \citep{park2019meta}, \\ learning rate schedule \citep{li2017meta,shu2022meta}, minimax optimizer \citep{shen2021learning} }}  \\
				&           \\
				&          \\
				&           \\
				&           \\
				\bottomrule
			\end{tabular}
		\end{tiny}
	\end{center}
		\vspace{-8mm}
\end{table*}

Recent studies on meta learning, however, still have two major issues. Firstly, many of current meta learning works put emphasis on learning fixed hyper-parameters shared from the training tasks, and then directly applying them to adapt to new query tasks. Typical works include MAML \citep{finn2017model} and its variants \citep{li2017meta,nichol2018first,antoniou2018train,finn2019online}, AutoML \citep{yao2018taking}, hyper-parameter optimization \citep{franceschi2018bilevel}, and also the meta learning framework summarized in the recent excellent survey work in \citep{hospedales2020meta}.
Relying on the shared hyper-parameters is challenging for complex task distributions (e.g., those with distribution shift), since different tasks may require to set substantially different hyper-parameters. This makes it always infeasible to find a set of common hyper-parameters performable for all tasks, which may fail to adapt to heterogenous environments of tasks.

Secondly, although some solid theoretical justifications have been presented along this research line, most existing meta learning theories are developed under conventional machine learning framework and put emphasis on evaluating the generalization capability of the traditional learning model (i.e. the learner). And the
most theoretical results are seldom well-aligned with the current meta learning practice with innovatory support/query episodic training mode \citep{vinyals2016matching,finn2017model}\footnote{More details are presented in Section \ref{matheory}.}. More importantly, the current theoretical results have less mentioned theory-inspired controlling strategies for meta-learner to improve the capability of meta learning, and thus are less functioned to feedback meta learning models for helping improve their practical generalization performance.

\subsection{Our Contributions}

To alleviate the aforementioned deficiencies of current meta learning, we can revisit the hyper-parameter setting for machine learning. In fact, if we take the entire learning process as an implicit function\footnote{In many conventional literatures on learning theory, this implicit function is often called a learning algorithm \citep{maurer2005algorithmic,chen2020closer}. Yet it intrinsically represents the whole learning process from input data to output learner. In current machine learning, this process should contain more general learning component settings besides the pre-specified learning algorithm itself, like the designing of network architecture and loss function. To avoid possible clutters of readers, in this paper we call it as the function of learning method, or $\mathcal{LM}(\cdot)$ briefly.} mapping from an input training dataset to a decision model (i.e., a learner extracted from the learning machine by regular machine learning), as shown in Fig.\ref{fig1}(a), these hyper-parameters involved in the entire learning process then constitute the parameters of this function. Note that such hyper-parametric configurations are involved in almost every single component of the learning process in machine learning \citep{jordan2015machine}, e.g., data screening, model constructing, loss function presetting and algorithm designing, etc (see Section \ref{machinelearning} and Table \ref{tablexx}), containing broader essence compared with simple hyper-parameters needed to be tuned in conventional machine learning, like regularization strength or learning rate.
From this perspective, we can interpret such hyper-parameter configuration as the ``learning method'' of the learning task at hand, since their proper settings essentially determine the ultimate capability, especially the generalization, of the extracted learner by this function through implementing the corresponding learning process equipped with the hyper-parameter configuration.

Compared with many current meta learning methods aiming to learn hyper-parameter configurations themselves, we propose to learn an explicit hyper-parameter setting function for predicting proper hyper-parameter configurations when adapted to new query tasks, as shown in Fig. \ref{fig1}(b).
Specifically, equipped with such an explicit function, a novel machine learning system can automatically produce proper learning method for different tasks without need of much extra human intervention. More strictly speaking, what we want is a mapping from the learning task space to the hyper-parameter space covering the whole learning process. From this perspective, learning an automatic hyper-parameter setting function conditioned on tasks is expected to get the ``learning methodology" shared among various learning tasks, and directly be used to allocate learning method for new tasks. In this way, such a ``learning methodology function" is hopeful to be employed to finely adapt varying query tasks from heterogeneous environment with less computation/data costs, as well as fewer human interventions \citep{biggs1985role,schrier1984learning,schmidhuber1987evolutionary,thrun2012learning}. We call this learning process as \textit{Simulating Learning Methodology} (SLeM).

\begin{figure}[t]  \vspace{-10mm}
	\vskip -0.15in
	\centering
	\subfigure[A learning method producing a decision model from an input dataset.]{\label{fig1a}    \includegraphics[width=0.48\columnwidth]{./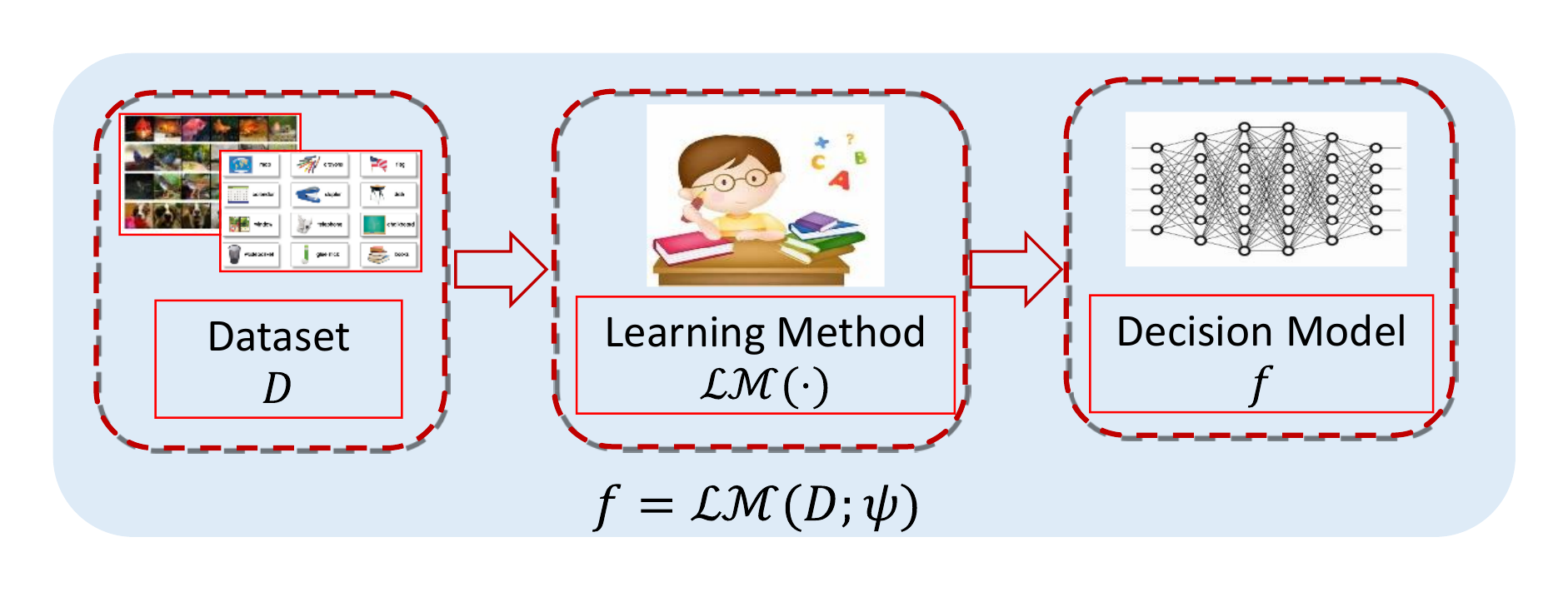}}
	\subfigure[A learning methodology producing a learning method from a query task.]{	\label{fig1b} \includegraphics[width=0.48\columnwidth]{./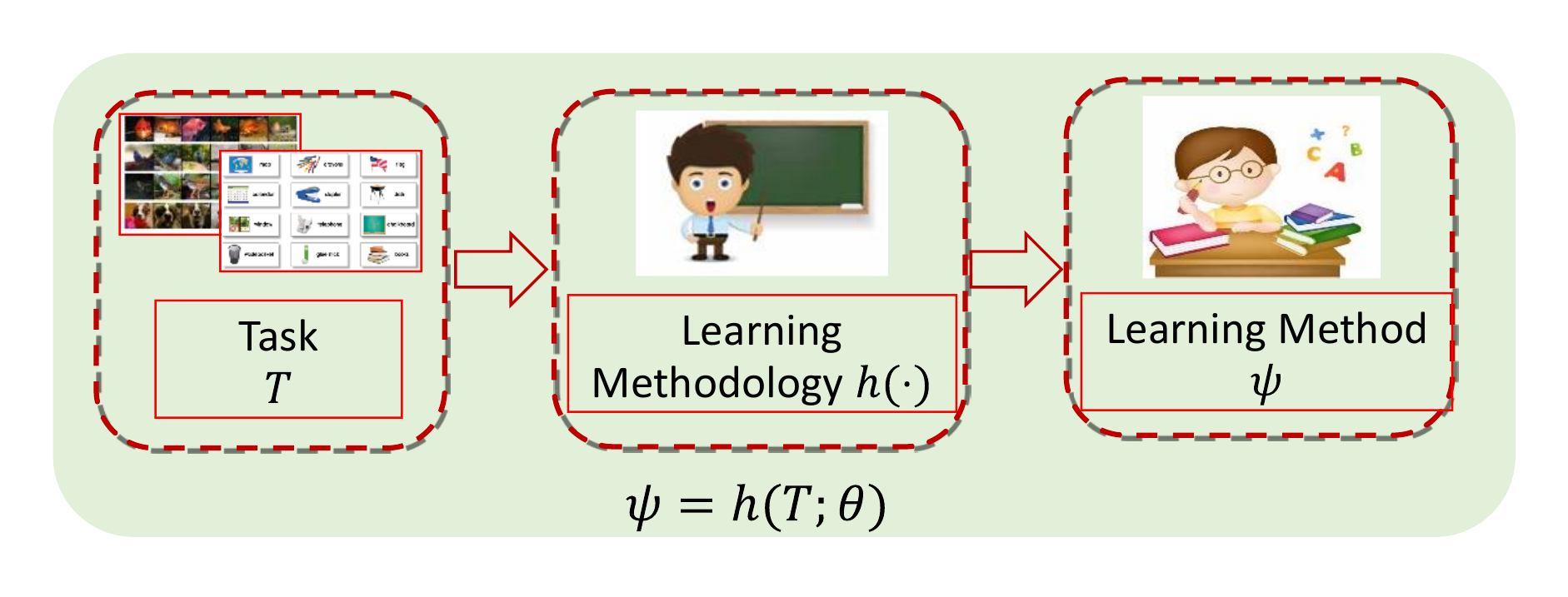}} \vspace{-4mm}
	\caption{(a) depicts the executive process when we deal with a machine learning problem. The learning method can be seen as an implicit function mapping from an input training dataset to a decision model. This function represents the entire learning process equipped with proper hyper-parameter configurations, guaranteeing the machine learning system to produce a decision function by executing it. (b) is the principle of how to determine the learning method. There exists a common learning methodology among various learning tasks, which can be seen as a function that maps an input task to the corresponding learning method. We propose a SLeM framework to learn the learning methodology function.
		Once we achieve the function, it can allocate a proper learning method for a specific query task. The machine learning system can then automatically produce the decision model for the task by running learning pipeline (a), equipped with the learning method produced by the learning methodology function.}  \vspace{-6mm}
	\label{fig1}
\end{figure}

Formally, we can study the statistical guarantee of our novel SLeM framework to understand the generalization capability of such hyper-parameter prediction function.
Especially, just succeeded from the conventional machine learning framework, this function is mathematically modelled as a parameterized function, called meta-learner, mapping from a learning task to its proper hyper-parameter configuration, and extracted from a pre-specified function set, called meta learning machine. On the one hand, such meta-learner setting facilitates a better flexibility of the learned hyper-parameters adaptable to diverse query tasks than fixed hyper-parameters. Specifically, the extracted meta-learner can be readily used to produce proper hyper-parameters conditioned on new query tasks.
On the other hand, such SLeM framework with explicit hyper-parameter prediction function conditioned on learning tasks can be seen as a substantial but homologous extension from the conventional machine learning framework imposed on the pre-specified learning machine, and the corresponding statistical learning theory can be subsequently derived. Especially, similar to the structural risk minimization (SRM) principle in the conventional statistical learning theory \citep{vapnik1999overview,shawe1998structural}, some beneficial theoretical results can then be achieved for revealing and controlling the intrinsic generalization capability of the preset meta learning machine. In summary, the main contributions of this work are as follows:

(1) We propose a new meta learning method aiming at learning an explicit hyper-parameter prediction function conditioned on learning tasks. This formulation provides a general framework to understand meta learning, and reveals that the essential character of meta learning is to construct a meta-learner for simulating the learning methodology, and transferably use it to help fulfill new query tasks.

(2) We introduce a problem-agnostic definition of meta-learner to realize learning the hyper-parameter prediction function conditioned on learning tasks. The parameterized meta-learner can be easily integrated into the traditional machine learning framework to provide a fresh understanding and extension of the original machine learning framework. We further provide generalization bounds for the new SLeM framework with general losses, tasks, and models.

(3) We provide general-purpose bounds for SLeM with decoupling the complexity of learning the task-specific learner from that of learning the task-transferrable meta-learner. The meta-learner is extracted from a pre-specified function set (i.e., meta learning machine), and the theoretical results facilitate some feasible controlling strategies on the meta-learner, capable of being easily embedded into current off-the-shelf meta learning programs and yet helping generally improve its generalization capability.

(4) We highlight the utility of our SLeM framework for obtaining the learning guarantees of some typical meta learning applications, including few-shot regression, few-shot classification and domain generalization. The theory-induced meta-regularization control effects of the meta-learner are empirically verified to be effective for consistently improving its generalization capability on new query tasks.	

The rest of the paper is organized as follows. The proposed SLeM meta learning framework is formulated in Section \ref{framework}, and in Section \ref{matheory}, we derive the generalization bounds of this meta learning manner during the meta-training and meta-test stages, respectively. Related works are introduced in Section \ref{relatedwork}. We further instantiate our general theoretical framework on few-shot regression in Section \ref{application}, few-shot classification in Section \ref{classification}, and domain generalization in Section \ref{domain}. Then Section 8 discusses the online version of our method and its theoretical rationality. The conclusion and discussion are finally made.

\section{Exploring a Task-Transferable Meta-learner for Meta-learning}	\label{framework}
For clarity, in the following we focus on supervised learning. We firstly recall the conventional machine learning framework, and then discuss the deficiencies of existing meta learning methods. Finally, we present the proposed SLeM meta-learning framework and methods for addressing these deficiencies, which can be seen as a substantial but homologous extension from the conventional machine learning framework.

Let's first introduce some necessary notations. $\mathbf{X}^{\mathsf{T}}$ denotes the transpose of a matrix $\mathbf{X}$.  Let $\mathcal{D} = \mathcal{X}\times \mathcal{Y}$ be the data space, where $\mathcal{X} \subset \mathbb{R}^d$ and $\mathcal{Y}\subset \mathbb{R}$ (regression) or $\mathcal{Y}= \{0,1,\cdots,K-1\}$ (multi-class classification) are the input and output spaces, respectively. We use the bracketed notation $[k]=\{1,2,\cdots,k\}$ as shorthand for index sets. The norm $\|\cdot\|$ appearing on a vector or a matrix refers to its $\ell_2$ norm or Frobenius norm.

\subsection{Machine Learning: Learning a Learner from Learning Machine} \label{machinelearning}
Machine learning \citep{jordan2015machine} aims to extract a learner (decision model) from a pre-specified learning machine based on a set of training observations. It generally includes the following ingredients.

\textbf{Training dataset.}
It is usually composed of a finite paired dataset $D=\{(x_i,y_i),i\in[n]\}$ drawn i.i.d. from a task (probability distribution) $\mu$, which simulates the input and output of the learner to be estimated.

\textbf{Learner and learning machine.} The learner corresponds to a mapping  $f(x;\omega):\mathcal{X}\rightarrow\mathcal{Y}$, which is requested to produce a prediction rule from $x \in \mathcal{X}$ to its label $y \in \mathcal{Y}$, and $\omega \in \Omega$ represents the parameters of $f$ to be estimated. $f$ is chosen from a preset learning machine (i.e., hypothesis space) $\mathcal{F}$, constituting a set of candidate learners.

\textbf{Performance measure.} The performance is measured by a loss function $\ell: \mathcal{Y} \times \mathcal{Y} \rightarrow \mathbb{R}_+$, which reflects the extent of how well the learner fits training data. The expected risk $R_{\mu}(f)$ and empirical risk $\hat{R}_{D}(f)$ are respectively defined as:

\begin{equation}\label{emrisk}
R_{\mu}(f) = \mathbb{E}_{(x,y) \sim \mu} \ell(f(x),y),\ \ \
\hat{R}_{D}(f) = \frac{1}{n} \sum_{i=1}^{n} \ell(f(x_i),y_i).
\end{equation}

\textbf{Optimization algorithm.} The learning algorithm $A$ is employed from optimization toolkits to extract a learner $f$ from $\mathcal{F}$ guided by the performance measure.
E.g., the commonly used algorithm for deep learning methods is SGD or Adam \citep{goodfellow2016deep}.

The overall learning process of machine learning is shown in Fig.\ref{fig2a}.
Actually, before executing machine learning for a given learning task in practice, it needs to firstly determine complicated hyper-parametric configurations involved in all components of the learning process.
Then the whole machine learning process can be executed to produce the decision model. If we take the entire learning process as an implicit function mapping from an input training dataset to a decision model, all involved hyper-parameters then constitute the parameters of this function. We can represent this function as $\mathcal{LM}(D; \psi): \mathcal{D} \rightarrow \mathcal{F}$. We denote $\psi$ as all hyper-parameters involved in the learning method\footnote{
In practice, the hyper-parameters to be learned by meta learning generally only contain a certain subset of hyper-parameters among all involved ones in the learning process. Some typical hyper-parameters meta learning aims to learn are listed in Table \ref{tablexx}.
}. For example, we can write $\psi = (\psi_D,\psi_f,\psi_{\ell},\psi_A)$ to record hyper-parameters in the learning configurations with respect to the data $D$, the learner $f$, the loss function $\ell$ and the optimization algorithm $A$, respectively. The ultimate capability, especially the generalization, of the decision model then essentially depends on the proper hyper-parameter settings $\psi$.

After the learning process outputs the decision model, it can produce the label predictions for new query samples. 
Current great successes of deep learning have strongly substantiated the validity of this learning framework for summarizing the underlying label prediction rules from data.
Assume that the optimal learner underlying the task $\mu$ is $f^*= \arg\min_{f \in \mathcal{F}} R_{\mu}(f)$, and the estimated learner through minimizing the empirical risk from the training data $D$ is $\hat{f}= \arg\min_{f \in \mathcal{F}} \hat{R}_{D}(f)$. Then we generally use $R_{\mu}(\hat{f}) - R_{\mu}(f^*)$ to measure the ``closeness'' between the estimated $\hat{f}$ and the optimal $f^*$, whose upper bound has been proved as follows \citep{mohri2018foundations}:
\begin{theorem} \label{s}
Let $\mathcal{D} = \mathcal{X}\times \mathcal{Y}$ be the data space, hypothesis (learning machine) $\mathcal{F}$ be the function class of the mapping $f:\mathcal{X}\rightarrow \mathcal{Y}$, and $\ell: \mathcal{Y}\times \mathcal{Y} \rightarrow [0,B]$ be the loss function. Assume that the loss $\ell(\cdot,y)$ is $L$-Lipschitz for any $y\in\mathcal{Y}$. Then for any $\delta>0$,  with probability at least $1-\delta$, we have
\begin{align}
	R_{\mu}(\hat{f}) - R_{\mu}(f^*) \leq 6 L \mathcal{G}_n (\mathcal{F}) + 2B\sqrt{\frac{\ln(1/\delta)}{2n}},
\end{align}
where $\mathcal{G}_n (\mathcal{F}) := \mathbb{E}_{D\sim\mu^n} [\hat{\mathcal{G}}_{D} (\mathcal{F})] $ is the Gaussian complexity, and $\hat{\mathcal{G}}_{D} (\mathcal{F})$ is the empirical Gaussian complexity of $\mathcal{F}$ w.r.t $D$ defined as
\begin{align*}
	\hat{\mathcal{G}}_{D} (\mathcal{F}) := \mathbb{E}_{\mathbf{g}} \left[\sup_{f\in \mathcal{F}} \frac{1}{n}\sum_{i=1}^n g_i f(x_i)\right], \ g_{i}\sim \mathcal{N}(0,1)
\end{align*}
where $\mathcal{N}(0,1)$ is the Gaussian distribution with zero mean and unit variance.
\end{theorem}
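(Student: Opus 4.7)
The plan is to follow the standard ERM excess-risk template --- decompose, concentrate, symmetrize, contract --- with a final Rademacher-to-Gaussian conversion tacked on. The argument has four steps.

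First I would use the decomposition
\begin{align*}
R_{\mu}(\hat f)-R_{\mu}(f^*) = \bigl[R_{\mu}(\hat f)-\hat R_{D}(\hat f)\bigr] + \bigl[\hat R_{D}(\hat f)-\hat R_{D}(f^*)\bigr] + \bigl[\hat R_{D}(f^*)-R_{\mu}(f^*)\bigr].
\end{align*}
The middle bracket is non-positive by the ERM property of $\hat f$. The remaining two brackets are each dominated by the uniform deviation $\Phi(D):=\sup_{f\in\mathcal F}|R_{\mu}(f)-\hat R_{D}(f)|$, so it suffices to bound $2\Phi(D)$ with high probability.

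Second, I would apply McDiarmid's bounded-differences inequality to $\Phi(D)$. Because $\ell$ takes values in $[0,B]$, replacing a single sample of $D$ changes $\hat R_{D}(f)$ by at most $B/n$, hence changes $\Phi(D)$ by at most $B/n$. Bounded differences then yield, with probability at least $1-\delta$,
\begin{align*}
\Phi(D)\le \mathbb E\,\Phi(D) + B\sqrt{\ln(1/\delta)/(2n)}.
\end{align*}

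Third, I would control $\mathbb E\,\Phi(D)$ by three familiar ingredients: (i) a ghost-sample symmetrization argument produces $\mathbb E\,\Phi(D)\le 2\mathcal R_n(\ell\circ\mathcal F)$, where $\mathcal R_n$ denotes the Rademacher complexity; (ii) Talagrand's contraction lemma, combined with the $L$-Lipschitz hypothesis on $\ell(\cdot,y)$ uniformly in $y$, gives $\mathcal R_n(\ell\circ\mathcal F)\le L\,\mathcal R_n(\mathcal F)$; (iii) the standard comparison $\mathcal R_n(\mathcal F)\le \sqrt{\pi/2}\,\mathcal G_n(\mathcal F)$ converts to the Gaussian complexity of Theorem~\ref{s}. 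Chaining and incorporating the factor $2$ from Step 1 yields $2\,\mathbb E\,\Phi(D)\le 4\sqrt{\pi/2}\,L\,\mathcal G_n(\mathcal F)\le 6L\,\mathcal G_n(\mathcal F)$, where the last step is the mild loosening $4\sqrt{\pi/2}\approx 5.01\le 6$ used to obtain the theorem's tidy constant.

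Combining Steps 1--3 delivers the stated high-probability bound. The only real care required is constant bookkeeping: verifying that contraction applies to $\ell(\cdot,y)$ uniformly in $y$, that the McDiarmid coefficient is indeed $B/n$ under the $[0,B]$ range of $\ell$, and that the Rademacher-to-Gaussian passage is applied in the correct direction. A slicker route that bypasses the conversion step is to perform symmetrization directly with $\mathcal N(0,1)$ variables and then invoke the Gaussian analogue of the contraction lemma (valid because the standard Gaussian law is symmetric); this produces a bound of the same form but working solely with $\mathcal G_n(\mathcal F)$ throughout, at the cost of an only marginally different numerical constant.
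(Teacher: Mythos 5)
Your proposal is correct and is essentially the same argument as the paper's source for this result: the paper does not prove Theorem~\ref{s} itself but imports it from \citep{mohri2018foundations}, whose proof is exactly your decomposition--McDiarmid--symmetrization--contraction chain followed by the Rademacher-to-Gaussian comparison $\hat{\mathfrak{R}}\leq\sqrt{\pi/2}\,\hat{\mathcal{G}}\leq\tfrac{3}{2}\hat{\mathcal{G}}$ (recorded by the paper itself in its appendix), so the constant bookkeeping $2\cdot 2\cdot L\cdot\sqrt{\pi/2}\leq 6L$ and the $2B\sqrt{\ln(1/\delta)/(2n)}$ deviation term match the stated bound.
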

Since $\mathcal{G}_n (\mathcal{F}) \sim \sqrt{C(\mathcal{F})/n}$ \citep{mohri2018foundations}, where $C(\cdot)$ measures the intrinsic complexity of the function class (e.g., VC dimension), Theorem \ref{s} delivers the information that the distance between $\hat{f}$ and $f^*$ will decrease as $n$ becomes large for a given learning machine $\mathcal{F}$. In other word, given enough data, model $\hat{f}$ obtained by minimizing $\hat{R}_{D}(f)$ is arbitrarily close to optimal model $f^*$.

\begin{figure}[t]
\vskip -0.15in
\subfigcapskip=-2mm
\centering
\subfigure[Machine learning framework.]{	\label{fig2a}    \includegraphics[width=0.75\columnwidth]{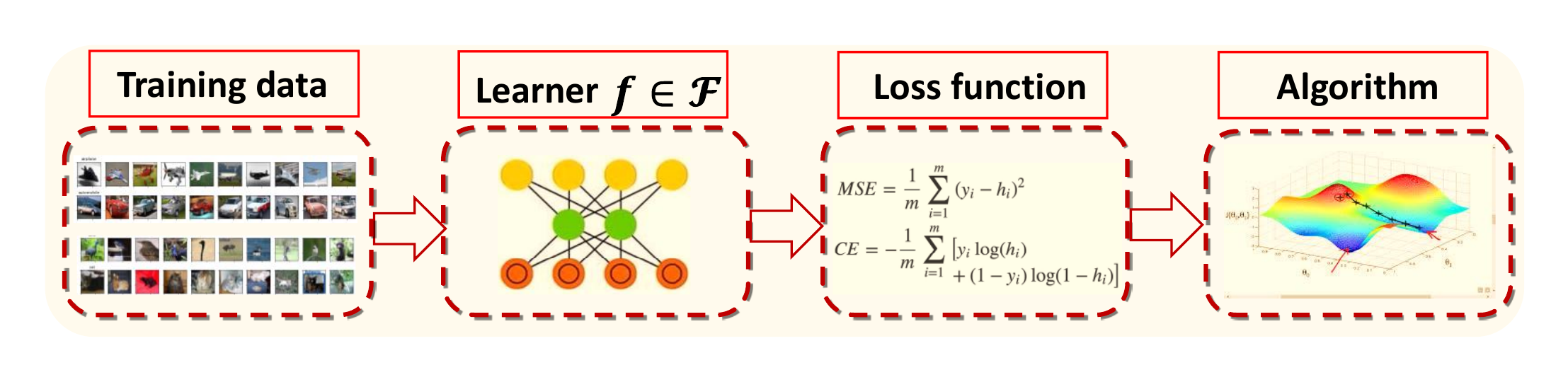} } \\ \vspace{-5mm}
\subfigure[Task-Transferable Meta-learner for Meta Learning.]{	\label{fig2b} \includegraphics[width=0.75\columnwidth]{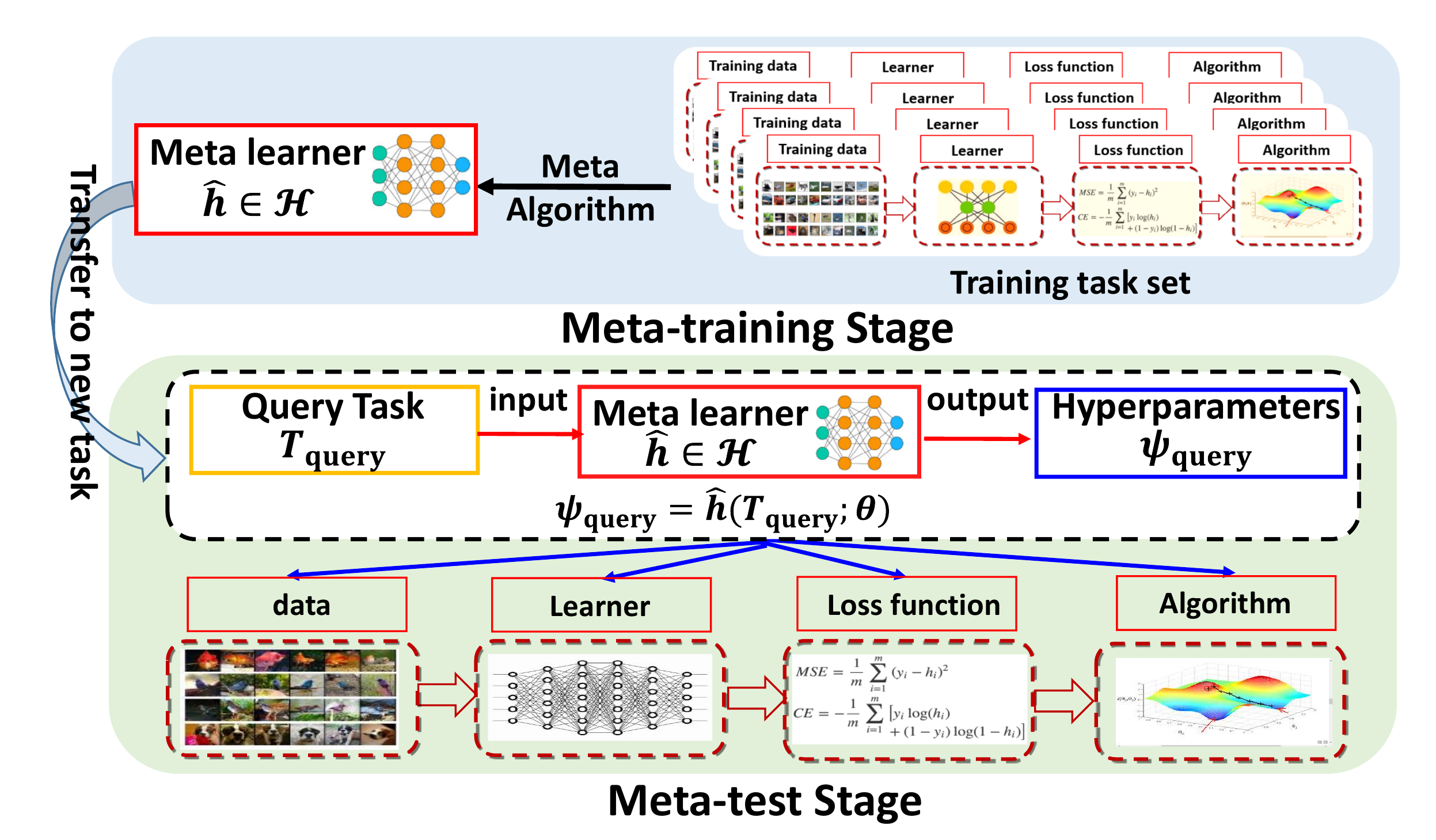}} \vspace{-4mm}
\caption{(a) A general machine learning framework. (b) The meta-training and meta-test stages of the proposed meta learning framework. }
\label{fig2}
\vspace{-6mm}
\end{figure}

\subsection{Deficiencies of Existing Meta Learning Methods}
Despite their success, the effect of contemporary machine learning models, especially deep learning models, largely relies on vast quantities of pre-collected annotated data, and simultaneously huge computation resources. This inclines to significantly degenerate the capability of machine learning for real-world applications with intrinsically rare or expensive data, or limited computation resources. Besides, deep learning methods with complicated architectures and huge amounts of hyper-parameters tend to easily suffer from the overfitting issue, and possibly perform poorly on the test domain.

Recently, meta learning methods are proposed to improve conventional machine learning by distilling the common hyper-parameter specification rule among a set of training tasks, and then using this rule to improve future task learning performance. It can be described as a nested optimization scheme: at the with-task level, an inner algorithm performs task-specific machine learning algorithm with the current hyper-parameter configuration $\psi$, i.e., $\mathcal{LM}(D; \psi)$, where $D$ is the task-specific training dataset. And at the meta-level, a meta-algorithm updates the aforementioned hyper-parameter configuration $\psi$ by leveraging the experience accumulated from the tasks observed so far.
Such a learning manner can lead to a variety of benefits such as improved data and compute efficiency of machine learning, and it is better aligned with human learning, that can learn new concepts quickly from few examples \citep{hospedales2020meta}.
To achieve this goal, recently a large quantity of meta learning methods have been raised, that have helped machine learning improve the data efficiency \citep{shu2018small}, algorithm automation \citep{andrychowicz2016learning,liu2018darts,cubuk2018autoaugment,shu2022cmw,shu2020meta}, and generalization \citep{finn2017model,tripuraneni2020theory}. Typical researches along this line are listed in Table \ref{tablexx}.

Despite their success, most meta learning methods may fail to adapt to heterogenous environments of tasks, in which the complexity of the tasks' distribution cannot be captured by a single fixed hyper-parameter configuration shared from the training tasks. In fact, heterogenous tasks may require to substantially different hyper-parameters. To address this limitation, \cite{denevi2020advantage,deneviconditional,wang2020structured} propose conditional meta learning aiming at learning a conditioning function that maps a task's dataset into a hyper-parameter vector that is appropriate for the task at hand. While they are mainly delicately designed for specific problems (e.g., task conditional model initialization, biased regularization and linear representation, etc), and their conditioning functions are simply set as linear functions, which may limit the applicability to other meta-learning problems.

From the theoretical perspective, most existing meta learning theories along above research line are developed under conventional machine learning framework and put emphasis on evaluating the generalization capability of the traditional learning model (i.e. the learner). And the most theoretical results are seldom well-aligned with the current meta learning practice with innovatory support/query episodic training mode \citep{vinyals2016matching,finn2017model}.
Besides, we notice that the regularization techniques have been effective in improving capability of machine learning, which often impose controlling strategies on the task-specific model (learner) developed from statistical learning theory via structural risk minimization principle. However, in the meta learning framework, most previous theoretical works have seldom emphasized theory-inspired meta-regularization strategies for meta-learner to improve the capability of meta learning, and thus are less functioned to feedback meta learning models for helping improve their practical generalization performance.

\subsection{Proposed Method: Learning a Meta-learner from Meta-learning Machine} \label{metaframework}

To address the deficiencies of existing meta learning methods, as shown in Fig.\ref{fig1b}, we propose to learn an explicit hyper-parameter prediction function for predicting proper hyper-parameter configurations when adapted to new query tasks.
We can mathematically model the function as $h (T; \theta): \mathcal{T} \rightarrow \Psi$, called the meta-learner, which maps from the learning task space $\mathcal{T}$ to the hyper-parameter space $\Psi$. Then the extracted meta-learner can be readily used to produce proper hyper-parameter configurations conditioned on new query tasks.
Therefore, the goal of our novel SLeM meta learning method is to determine the parameter $\theta \in \Theta$ contained in the meta-learner $h$, instead of the hyper-parameter configuration $\psi$ itself like many previous meta learning methods, e.g.,  weights initialization \citep{finn2017model, nichol2018first,rusu2018meta,antoniou2018train,finn2019online}, learning rate \citep{li2017meta,baydin2018online}, and other hyper-parameters \citep{franceschi2018bilevel}. These works have been comprehensively introduced in the recent excellent survey paper  \citep{hospedales2020meta}. Relying on the shared and unique hyper-parameters is challenging for adapting complex heterogenous task distributions, especially those with distribution shift. Comparatively, our hyper-parameter prediction function facilitates a better flexibility of the learned hyper-parameters adaptable to diverse query tasks, so as to effectively address the issue of single fixed hyper-parameters.
In fact, when we set the meta-learner as a constant function, these hyper-parameter-fixed approaches can be categorized as special cases of this more general SLeM meta learning framework.

Different from the learner $f(x;\omega)$ in machine learning, attempting to extract the label prediction rule among data, the meta-learner $h(T; \theta)$ is requested to extract the hyper-parameter prediction rule for learning process shared by training tasks, which is then expected to finely generalize to new query tasks to specify their hyper-parameters for a learning method. Nevertheless, such meta learning framework with explicit hyper-parameter prediction function conditioned on learning tasks can be seen as a substantial but homologous extension from the conventional machine learning framework imposed on the pre-specified learning machine.
To achieve such a meta-learner in practice, we usually assume access to the following ingredients corresponding to the conventional machine learning in Section \ref{machinelearning}:

\textbf{Training task set.} It usually assumes access to a set of source tasks $\Gamma = \{\mu_t=\{\mu_t^s,\mu_t^q\}, t \in [T]\}$, to learn the meta-learner mapping, where $\mu_t$ denotes the probability distribution for the $t$-th task and i.i.d. sampled from an environment distribution $\eta$ \citep{baxter2000model}. Here $\mu_t^s$ and $\mu_t^q$ denote the distributions sampling support/training set $D_t^{tr}$ and query/validation dataset $D_t^{val}$ for the task, respectively. This follows the support/query meta learning setting proposed by \citep{vinyals2016matching}. Specifically, for the $t$-th task, it is composed of $D_t = (D_t^{tr},D_t^{val})$,
where $D_t^{tr} = \{(x_{ti}^{(s)}, y_{ti}^{(s)})\}_{i=1}^{m_t} \sim (\mu_t^s)^{m_t}, D_t^{val} = \{(x_{tj}^{(q)}, y_{tj}^{(q)})\}_{j=1}^{n_t} \sim (\mu_t^q)^{n_t}$, and $m_t$ and $n_t$ are the sizes of training set and validation set for the $t$-th task, respectively. We denote $\bm{D} = \{D_t \}_{t=1}^T$ as the entire training task dataset.

\textbf{Meta-learner and meta learning machine.} Formally, the meta-learner $h (T; \theta): \mathcal{T} \rightarrow \Psi$ is required to extract the rule from $T\in \mathcal{T}$ to its hyper-parameter setting $\psi \in \Psi$ for the learning method, and $\theta \in \Theta$ is the parameter contained in $h$ to be estimated.
$h$ is selected from a pre-designed parameterized function set $\mathcal{H}$, called meta learning machine, constituting a function class of candidate meta-learners $h$.

Generally, the input of a meta-learner $h$ is a representation conveyed by the task $T \in \mathcal{T}$. It can be some static information for task, e.g., directly obtained from the provided dataset $D$ for the task, or dynamic knowledge extracted from the learning process for handling task $T$, e.g., gradient information \citep{andrychowicz2016learning}, sample loss information \citep{shu2019meta}, feature information \citep{li2019feature}, etc. Table \ref{tableexample} shows some typical examples of existing meta learning methods to illustrate the practical parametrization of task representations and the hyper-parameter prediction functions. It is seen that conditional meta learning \citep{denevi2020advantage,deneviconditional} could be a special case under our general meta-learner formulation, where the meta-learner is chose from linear functions class.
In this paper, we mainly focus on the mathematical formulation and the statistical learning theory to understand the proposed meta learning framework, and thus we do not especially emphasize a specific practical parametrized hyper-parameter setting function.
For simplicity and convenience, we formulate the hyper-parameter prediction function as $h({D;\theta})$ in our study, where we assume the representation of task is directly obtained from the provided dataset $D$ for the task at hand, and the hyper-parameter prediction function $h$ is selected from a pre-designed parameterized function set $\mathcal{H}$. For simplicity, we often briefly denote it as $h$.

\textbf{Performance measure.} The goal of SLeM is to learn a common hyper-parameter prediction function suitable for a family of learning tasks. For intuitive understanding, we rewrite the empirical risk $\hat{R}_D(f)$ as $\bm{L}(f,D):\mathcal{F}\times \mathcal{D} \rightarrow \mathbb{R}^{+}$ to characterize the performance of the decision model on the task. Our aim is then to seek the best hyper-parameter prediction function $h$, which is employed to set the with-task level learning process on $D^{tr}$, so that the obtained learner can achieve optimal generalization performance on $D^{val}$. Specifically, the risk so described can be formally written as:
\begin{align} \label{transferrisk}
\begin{split}
R_{\eta}(h) = \mathbb{E}_{\mu \sim \eta}\mathbb{E}_{D^{val}\sim {(\mu^q)}^n} \mathbb{E}_{D^{tr}\sim {(\mu^s)}^m} L(\mathcal{LM}(D^{tr}; h), D^{val}),
\end{split}
\end{align}
where $\mu=(\mu^s,\mu^q)$ is the task distribution sampled from an environment distribution $\eta$,
and $D^{tr}$ and $D^{val}$ are training and validation sets  i.i.d. generated from $\mu^s$ and $\mu^q$, respectively.

In practice, we have access to a set of training task set $\bm{D}$, and the meta-learner is often learned by (approximately) minimizing the following expected empirical risk on the meta learning task:
\begin{align}\label{eqrh}
\hat{R}_{\bm{D}}(h) = \frac{1}{T}\sum_{t=1}^{T} \bm{L} (\mathcal{LM}(D_t^{tr};h), D_t^{val}).
\end{align}

\begin{table*}[t]  \vspace{-4mm}
\caption{ Illustration of the parametrization of tasks and the hyper-parameter prediction function from some typical examples of existing meta learning methods. }
\label{tableexample}\vspace{-8mm}
\centering
\vskip 0.1in
\begin{center}
\begin{tiny}
	\begin{tabular}{l|l|l|l}
		\toprule
		\multirow{2}*{Methods} & \multirow{2}*{Specified hyper-parameters to learn} & \multirow{2}*{Parametrization of tasks} &  Parametrization of \\
		&   &         &     hyper-parameter prediction function \\ \hline
		Conditional meta learning& \multirow{2}*{Bias vector} & \multirow{2}*{Task side information} & \multirow{2}*{Linear functions} \\
		\cite{denevi2020advantage} &   &      & \\ \hline
		MetaOptNet \cite{lee2019metaopt}    &   Feature represention        &Sample original input &  Deep neural networks  \\ \hline
		R2D2 \cite{bertinetto2018meta}&   Feature represention        &Sample original input &  Deep neural networks  \\ \hline
		MMAML \cite{vuorio2019multimodal} & Task modulation vector  & Task embedding & MLP \\ \hline
		LSTM optimizer  & \multirow{2}*{Dynamic update rules of optimizer}  & \multirow{2}*{Gradient information} & \multirow{2}*{LSTM} \\
		\cite{andrychowicz2016learning} &   &   &  \\ \hline
		MW-Net \cite{shu2019meta} & Sample weights & Sample loss information & MLP \\ \hline
		Feature-Critic \cite{li2019feature} & Loss function & Feature information & MLP \\ \hline
		LEO \cite{rusu2018meta} & Model adapted parameters & Training dataset  & Encoding \& decoding network \\ 	\bottomrule
	\end{tabular}
\end{tiny}
\end{center}
\vspace{-4mm}
\end{table*}

\textbf{Optimization algorithm.}
The goal then is to achieve a meta-learner $h$ from $\mathcal{H}$ by minimizing the empirical task average risk $\hat{R}_{\bm{D}}(h)$. There are multiple effective optimization algorithms designed for solving the problem, e.g., the bilevel optimization techniques \citep{finn2017model,shu2019meta,hospedales2020meta}.

Contrary to conventional machine learning whose performance can be directly computed on the given annotated data, meta learning algorithm is relatively hard to directly compute the performance on given tasks. Generally, it needs a two-stage procedure to evaluate the performance of a meta-learner \citep{finn2017model,shu2019meta}. Firstly, a meta-learner predicts the configuration of the hyper-parameters after observing the training set $D_t^{tr}$ in each task of the task set $\bm{D}$.
Then the machine learning system equipped with such hyper-parameter configurations can be executed to automatically produce the proper decision model for $D_t^{tr}$. Then in the second stage, the task loss can be computed as the average over all errors calculated on the validation set $D_t^{val},t\in[T]$ with respect to the corresponding decision model obtained from $D_t^{tr}$, reflecting the hyper-parameter prediction capability of such meta-learner. This process actually is exactly expressed as the empirical task average risk $\hat{R}_{\bm{D}}(h)$ as formulated above.
As shown in Fig.\ref{fig2b}, after achieving the meta-learner $h$, we can further transfer it to new query tasks, which can help set the learning process (i.e., parameter of learning method) for new query tasks.

The above SLeM meta learning framework is evidently succeeded from but also with evident difference with conventional machine learning framework in Section \ref{machinelearning}. To better clarify this point, we list the main notations of both frameworks to easily compare their main differences as introduced in Table \ref{table11}.
It should be noted that in the previous researches, meta learning is interpreted to improve performance by learning `how to learn' \citep{thrun2012learning}. The main study of this work is expected to solidify such  `learning to learn' manner as learning an explicit hyper-parameter prediction function (i.e., a meta-learner) conditioned on learning tasks. From this view, a meta-learner helps formulate the learning method to tell the machine learning system how to automatically learn the decision model. Thus the insight of this formulation is to simulate humans to master the tricks for the learning methodology (thus we call it simulating learning methodology (SLeM)), and most importantly, further transfer it to help fulfill new tasks in the future.
Therefore, achieving the meta-learner is hopeful to be employed to finely adapt varying query tasks from heterogeneous environment with less computation/data costs, as well as fewer human interventions \citep{biggs1985role,schrier1984learning,schmidhuber1987evolutionary,thrun2012learning}.


\begin{table*}[t] \vspace{-4mm}
\caption{Comparison of the main notations used in conventional machine learning and the proposed meta learning frameworks.}
\label{table11}
\vspace{-6mm}
\begin{center}
\begin{tiny}
	\begin{tabular}{l|c|c}
		\toprule
		Compared concepts & Machine Learning & Meta Learning \\
		\midrule
		Source distribution&   Task distribution $\mu$         &    Environment $\eta$        \\ \midrule
		\multirow{3}*{Training instance} &    \multirow{3}*{$D = \{(x_i,y_i)\}_{i=1}^n \sim \mu^n $ }                &   \multirow{3}*{\shortstack[l]{$\bm{D} = \{D_t = (D_t^{tr},D_t^{val})\}_{t=1}^T, D_t^{tr}\sim (\mu_t^s)^{m_t},$\\ $ D_t^{val}\sim (\mu_t^q)^{n_t}, \mu_t = (\mu_t^s,\mu_t^q) \sim \eta$  }}      \\
		&  &  \\
		&  &  \\ \midrule
		Test instance &    $(x,y)\sim \mu $    &  $D_{\mu}^{tr}\sim (\mu^s)^{m_{\mu}}, (x,y) \in \mu^q, \mu = (\mu^s,\mu^q) \sim \eta$                     \\ \midrule
		Learning objective	& Learner $f:\mathcal{X}\rightarrow \mathcal{Y}$&  Meta-learner $h:\mathcal{T} \rightarrow \Psi$\\  \midrule
		Output of the (meta)-learner	& $y = f(\mathbf{x};\omega)$  & $\psi=h(T;\theta)$ \\ \midrule
		\multirow{3}*{Performance measure}	& \multirow{3}*{\shortstack{ $R_{\mu}(f) = \mathbb{E}_{(x,y) \sim \mu} \ell(f(x),y)$}} & \multirow{3}*{\shortstack{$R_{\eta}(h) = \mathbb{E}_{\mu \sim \eta}\mathbb{E}_{D^{val}\sim {(\mu^q)}^n} \mathbb{E}_{D^{tr}\sim {(\mu^s)}^m}$\\ $ \bm{L}(\mathcal{LM}(D^{tr}; h), D^{val})$}} \\
		&        &  \\
		&        &  \\ \midrule
		Goal of the generalization	& Predicting label $y$ for query sample $x$ & Predicting learning method $\psi$ for query task $T$        \\
		\bottomrule
	\end{tabular}
\end{tiny}
\end{center}
\vspace{-6mm}
\end{table*}


\section{Learning Theory}\label{matheory}
We use two phases to characterize the learning paradigm of the above introduced meta learning framework. The first is the meta-training phase, aiming to learn the hyper-parameter prediction function, and the second is the meta-test phase, purposing to generalize this function for setting learning methods on new query tasks. We firstly present some preliminaries, and then present learning theory results for meta-training and meta-test phases, respectively.

\subsection{Preliminaries}\label{Preliminaries}
We use the Gaussian complexity to measure the complexity of a function class. For a generic vector-valued function class $\mathcal{E}$, containing a set of functions $e: \mathbb{R}^d \rightarrow \mathbb{R}^r$. Given a data set $\mathbf{U} = \{\mathbf{u}_i\in \mathbb{R}^d,i \in [N]\}$ i.i.d. drawn from a task $\mu$, the empirical Gaussian complexity is then defined as \citep{bartlett2002rademacher}:

\begin{align*}
\hat{\mathcal{G}}_{\mathbf{U}}(\mathcal{E}) = \mathbb{E}_{\mathbf{g}} \left[ \sup_{e\in \mathcal{E}} \frac{1}{N} \sum_{i=1}^N \sum_{k=1}^{r} g_{ik} e_k(\mathbf{u}_i)   \right], \ g_{ik}\sim \mathcal{N}(0,1),
\end{align*}

\noindent where $\mathbf{g} = \{g_{ik}\}_{i\in[N],k\in[r]}$, and $e_k(\cdot)$ is the $k$-th coordinate of the vector-valued function $e(\cdot)$. The corresponding population Gaussian complexity is defined as $\mathcal{G}_N(\mathcal{E}) = \mathbb{E}_{\mathbf{U}\sim\mu^N}[\hat{\mathcal{G}}_{\mathbf{U}}(\mathcal{E})]$, where the expectation is taken over the distribution of $\mathbf{U}$. In comparison, for empirical Rademacher complexity $\hat{\mathfrak{R}}_{\mathbf{U}}(\mathcal{E})$, $g_{ik}$ are replaced by uniform $\{-1,1\}$-distributed variables, and the corresponding population Rademacher complexity is $\mathfrak{R}_N(\mathcal{E}) = \mathbb{E}_{\mathbf{U}\sim\mu^N}[\hat{\mathfrak{R}}_{\mathbf{U}}(\mathcal{E})]$.

To prove the main learning theory results, we require the following assumptions, all being usually satisfied.
\begin{assumption}[Bounded Inputs] \label{assumption1}
$\mathcal{X} \subset \mathcal{B}(0,R)$, for $R>0$, where $\mathcal{B}(0,R) = \{x\in\mathbb{R}^d: \|x\|\leq R\}$.
\end{assumption}	
\begin{assumption}[Bounded and Lipschitz Loss Function]\label{assumption2}
The loss function $\ell(\cdot,\cdot)$ is $B$-bounded, and $\ell(\cdot,y)$ be $L$-Lipschitz for any $y\in\mathcal{Y}$.
\end{assumption}

\begin{algorithm}[t]
\caption{Meta-Training: Learning the Methodology}
\label{alg1:meta-training}
\begin{algorithmic}
\State {\bfseries Input:} Training task set $\Gamma = \{D_t, t \in [T]\}$.
\State {\bfseries Do:}
\State \ \ \ \ \ {\bfseries (1)} Run algorithm to obtain $\hat{\mathbf{f}}^{(h)}=(\hat{f}_1^{(h)},\cdots, \hat{f}_T^{(h)})$, where $\hat{f}_t^{(h)}$ is obtained by calculating $\mathcal{LM}(D_t^{tr};h(D_t^{tr}))$.
\State \ \ \ \ \ \ \ {\bfseries (2)} Put $\hat{\mathbf{f}}^{(h)}$ into Eq.(\ref{eqrh}), and then minimize the error $\hat{R}_{\bm{D}}({\hat{\mathbf{f}}}^{(h)})$ to obtain $\hat{h}$.
\State {\bfseries Return:} The meta-learner $\hat{h}$.
\end{algorithmic}
\end{algorithm}\vspace{-4mm}

\subsection{Meta-Training Stage: Learning the Hyper-parameter Prediction Function} \label{meta-train}
In the meta-training stage, the training task dataset $\bm{D} = \{D_t, t \in [T]\}$ is available for learning, where $D_t = (D_t^{tr}, D_t^{val})$, $D_t^{tr} \sim (\mu_t^s)^{m_t}, D_t^{val} \sim (\mu_t^q)^{n_t}, \mu_t = (\mu_t^s, \mu_t^q) \sim \eta$. We aim to learn the hyper-parameter prediction function from this task dataset. The overall learning process is listed in Algorithm \ref{alg1:meta-training}.
The quantity of interest in meta-training is the expectation of task average generalization error as follows:
\begin{align} \label{Rtrainrisk}
R_{train}(h) &= \frac{1}{T}\sum_{t=1}^{T}\mathbb{E}_{D_t^{val}\sim {(\mu_t^q)}^{n_t}} \mathbb{E}_{D_t^{tr}\sim {(\mu_t^s)}^{m_t}} \bm{L} (\mathcal{LM}(D_t^{tr};h), D_t^{val}).
\end{align}
Denote the optimal meta-learner obtained by minimizing the theoretical risk $R_{\eta}(h)$ and empirical risk $\hat{R}_{\bm{D}}$ in meta learning as:
\begin{align} \label{tworisk}
h^*=\arg\min_{h \in \mathcal{H}}R_{\eta}(h);\ \ \ \hat{h}=\arg\min_{h \in \mathcal{H}}\hat{R}_{\bm{D}}(h),
\end{align}
where $R_{\eta}(h)$ and $\hat{R}_{\bm{D}}$ are defined in Eqs. (\ref{transferrisk}) and(\ref{eqrh}), respectively. Then we can naturally use $R_{train}( \hat{h})-R_{train}( h^*)$ to measure the ``closeness'' between the estimated $\hat{h}$ and the true underlying $h^*$, which captures the extent of how much the two meta-learners $\hat{h}$ and $h^*$ differ in aggregating over the $T$ training tasks.

We can then present a theoretical upper bound estimation for this ``closeness" measure. Before showing the theorem, we first introduce an important notation, $d_{\mathcal{F}} ({\mu}_t^{s},{\mu}_t^{q})$, necessary for the theoretical result. $d_{\mathcal{F}} ({\mu}_t^{s},{\mu}_t^{q})$ denotes the discrepancy divergence \citep{ben2010theory} between support and query data with respect to their sampled probability distributions ${\mu}_t^{s}$ and ${\mu}_t^{q}$ imposed on  the hypothesis class $\mathcal{F}$:
\begin{align*}
d_{\mathcal{F}} ({\mu}_t^{s},{\mu}_t^{q}) =   \sup_{f \in \mathcal{F}} \left| \mathbb{E}_{D_{t}^{val}\sim (\mu_t^q)^{n_t}}\bm{L} (f, D_t^{val}) - \mathbb{E}_{D_{t}^{tr}\sim (\mu_t^s)^{m_t}}\bm{L} (f, D_t^{tr})  \right|.
\end{align*}

\begin{theorem}\label{the1}
If Assumptions \ref{assumption1} and \ref{assumption2} hold, for any $\delta>0$,  with probability at least $1-\delta$, we have
\begin{align*}
R_{train}( \hat{h})-R_{train}( h^*) & \leq 768L \log(4\sum_{t=1}^T n_t)\cdot L(\mathcal{F})\cdot \hat{\mathcal{G}}_{D^{val}}(\mathcal{H})+ \frac{6L}{T}\sum_{t=1}^{T} \hat{\mathcal{G}}_{D_t^{tr}}(\mathcal{F})\\
+ \frac{4}{T} \sum_{t=1}^{T} d_{\mathcal{F}} (\mu_t^{s},\mu_t^{q}) + & 6\frac{B}{T} \sqrt{\sum_{t=1}^T \frac{1}{n_t}} \sqrt{\frac{\log \frac{2}{\delta}}{2}}  + \frac{6B}{T}\sum_{t=1}^{T} \sqrt{\frac{\log \frac{2}{\delta}}{m_t}} + \frac{12L{Dis(D^{val})}}{(\sum_{t=1}^T n_t)^2} \sqrt{\sum_{t=1}^T \frac{1}{\beta_t T} },
\end{align*}
where $Dis(D^{val})= \sup_{h,h'} \rho_{D^{val}} (\mathbf{f}^{(h)},\mathbf{f}^{(h')}),$
$\rho_{D^{val}} (\mathbf{f}^{(h)},\mathbf{f}^{(h')})=\frac{1}{\sum_{t=1}^T n_t} \sum_{t=1}^{T}\sum_{j=1}^{n_t} (f_t^{(h)}(x_{tj}^{(q)})-f_t^{(h')}(x_{tj}^{(q)}))^2,$
$D^{val} = \{D_t^{val}\}_{t=1}^T$, $\mathbf{f}^{(h)} = (f_1^{(h)}, f_2^{(h)}, \cdots, f_T^{(h)}), f_t^{(h)} = \mathcal{LM}(D_t^{tr};h)$, and $L(\mathcal{F})$ is the Lipschitz constant of $\mathbf{f}^{(h)}$ with respect to $h$, $\beta_t = \frac{n_t T}{\sum_{t=1}^T n_t}$.
\end{theorem}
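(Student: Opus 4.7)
Since $\hat h$ minimises the empirical risk $\hat R_{\bm D}$ over $\mathcal H$, the customary excess-risk comparison
\begin{align*}
R_{train}(\hat h) - R_{train}(h^*) \ \leq\ 2\sup_{h\in\mathcal H}\bigl|R_{train}(h)-\hat R_{\bm D}(h)\bigr|
\end{align*}
reduces the task to a uniform deviation for a two-level sample: the independent support sets $D_t^{tr}$ (which enter through $f_t^{(h)}=\mathcal{LM}(D_t^{tr};h)\in\mathcal F$) and the query sets $D_t^{val}$ (which enter only through $\bm L(f_t^{(h)},D_t^{val})$). I would peel the supremum into a \emph{validation} layer (conditional on $\bm D^{tr}$) and a \emph{support} layer, using the per-task discrepancy $d_{\mathcal F}(\mu_t^s,\mu_t^q)$ as the bridge between the two data-generating distributions.

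For the validation layer, fix $\bm D^{tr}$, so that each $f_t^{(h)}$ becomes a deterministic element of $\mathcal F$. McDiarmid applied to the joint query sample with per-point bound $B/(Tn_t)$ produces the concentration term $\tfrac{B}{T}\sqrt{\sum_t 1/n_t}\sqrt{\log(2/\delta)/2}$, and standard symmetrisation replaces the expected supremum by the Gaussian complexity of $\{(t,x,y)\mapsto \tfrac1T\ell(f_t^{(h)}(x),y) : h\in\mathcal H\}$ on $D^{val}$. A Talagrand contraction strips off the $L$-Lipschitz loss, leaving the Gaussian complexity of $\{\mathbf f^{(h)} : h\in\mathcal H\}$ at the validation inputs; this is where the non-trivial step lies. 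Since the map $h\mapsto\mathbf f^{(h)}$ is $L(\mathcal F)$-Lipschitz in the empirical pseudo-metric $\rho_{D^{val}}$ on $\mathcal H$, Dudley's entropy integral truncated at the diameter $Dis(D^{val})$ transfers the covering numbers to those of $\mathcal H$ and yields the principal term $L(\mathcal F)\log(4\sum_t n_t)\hat{\mathcal G}_{D^{val}}(\mathcal H)$ together with the residual tail $\tfrac{Dis(D^{val})}{(\sum_t n_t)^2}\sqrt{\sum_t 1/(\beta_t T)}$, where the weights $\beta_t$ equalise the per-task sample sizes inside the chaining.

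For the support layer, what remains is $\sup_h\bigl|\tfrac1T\sum_t\mathbb E_{D_t^{tr}}R_{\mu_t^q}(f_t^{(h)})-R_{\mu_t^q}(f_t^{(h)})\bigr|$. Swapping $R_{\mu_t^q}$ with $R_{\mu_t^s}$ using $|R_{\mu_t^q}(f)-R_{\mu_t^s}(f)|\leq d_{\mathcal F}(\mu_t^s,\mu_t^q)$ for $f\in\mathcal F$ (applied once on each side of the sup) supplies the $\tfrac{4}{T}\sum_t d_{\mathcal F}(\mu_t^s,\mu_t^q)$ piece. The residual deviation $\sup_h|\mathbb E_{D_t^{tr}}R_{\mu_t^s}(f_t^{(h)})-R_{\mu_t^s}(f_t^{(h)})|$ can be controlled task by task: since $f_t^{(h)}\in\mathcal F$ for every $h$, the single-task uniform-convergence bound of Theorem~\ref{s} dominates this quantity by $3L\hat{\mathcal G}_{D_t^{tr}}(\mathcal F)+B\sqrt{\log(2/\delta)/(2m_t)}$; averaging over $t$ and absorbing the $2$-factor from the original symmetrisation (together with a union bound across the three sources of randomness to split the confidence parameter) yields the $\tfrac{6L}{T}\sum_t\hat{\mathcal G}_{D_t^{tr}}(\mathcal F)$ and $\tfrac{6B}{T}\sum_t\sqrt{\log(2/\delta)/m_t}$ terms.

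The main obstacle will be the chaining in the validation layer. The loss class is not a direct contraction of $\mathcal H$, because $h$ acts only through the learning method $\mathcal{LM}(D_t^{tr};\cdot)$ and is only controlled by a single Lipschitz constant $L(\mathcal F)$; a naive application of contraction would produce a bound in terms of the Gaussian complexity of $\mathcal F^T$ on $D^{val}$, which defeats the decoupling of the meta-learner from the task-specific learner. The key manoeuvre is to perform Dudley chaining in the pseudo-metric $\rho_{D^{val}}$ on $\mathcal H$, so that covering numbers of $\mathcal H$ govern the bound and $\hat{\mathcal G}_{D^{val}}(\mathcal H)$ emerges multiplied by the $\log(\sum_t n_t)$ factor, with $Dis(D^{val})$ setting the upper truncation of the integral and the $\beta_t$-weighted term collecting the low-scale residual. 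Getting the correct Lipschitz constant $L(\mathcal F)$ out of the chaining without introducing an extra dimensional factor is the delicate calibration on which the whole argument hinges.
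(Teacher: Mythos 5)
Your plan follows essentially the same route as the paper's proof: the ERM comparison plus a split into a query-side uniform deviation (McDiarmid with per-point bound $B/(Tn_t)$, symmetrisation, Talagrand contraction for the $L$-Lipschitz loss, then Dudley chaining over $\mathcal{H}$ in the pseudo-metric $\rho_{D^{val}}$ with Lipschitz constant $L(\mathcal{F})$, diameter $Dis(D^{val})$ and the $\beta_t$-weighted low-scale residual) and a support-side part handled by the discrepancy $d_{\mathcal{F}}(\mu_t^s,\mu_t^q)$ together with per-task single-task bounds over $\mathcal{F}$, which is exactly the paper's five-term decomposition combined with its chain-rule theorem for $\hat{\mathcal{G}}_{D^{val}}(\mathcal{F}^{\otimes T}_{\mathcal{H}})$. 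The only detail you leave implicit is the final conversion of the covering numbers of $\mathcal{H}$ into $\hat{\mathcal{G}}_{D^{val}}(\mathcal{H})$, which the paper does via Sudakov minoration inside the entropy integral.
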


The above bound for excess task average risk can be more concisely formulated by dominating its three main terms as follows:
\begin{align*}
R_{train}( \hat{h})-R_{train}( h^*)  \lesssim \tilde{\mathcal{O}} \left(\sqrt{ \frac{C(\mathcal{H})}{\sum_{t=1}^T n_t}} +  \frac{1}{T}\sum_{t=1}^T\sqrt{ \frac{C(\mathcal{F})}{m_t}} + \frac{1}{T} \sum_{t=1}^{T} d_{\mathcal{F}} (\mu_t^{s},\mu_t^{q})  \right).
\end{align*}
They can be interpreted as: the complexity of learning the meta-learner $h$ (the first term), the complexity of learning the task-specific learners $\mathbf{f}$ (the second term), and the distribution shift between support and query sets (the third term).
As aforementioned, it generally holds that $\hat{\mathcal{G}}_{D^{val}}(\mathcal{H})\sim\sqrt{C(\mathcal{H})/\sum_{t=1}^T n_t}$, and $\hat{\mathcal{G}}_{D_t^{tr}}(\mathcal{F})\sim \sqrt{C(\mathcal{F})/m_t}$, where $C(\cdot)$ measures the intrinsic complexity of the function class (e.g., VC dimension). Thus the first term on the right hand side above is of the order $\tilde{\mathcal{O}}(1/\sqrt{\sum_{t=1}^T n_t})$, and the second term above is of the order $1/T\sum_{t=1}^T\mathcal{O}(1/\sqrt{m_t}) \leq \mathcal{O}(1/\sqrt{m}), m= \min\{m_1,\cdots,m_T\}$ , where $\tilde{\mathcal{O}}$ denotes an expression that hides polylogarithmic factors in all problem parameters. Note that the leading term capturing the complexity of learning the meta-learner $h$ decays in terms of
the number of query samples ($\sum_{t=1}^T n_t$) among all training tasks. This implies that even with insufficient number of training tasks, improving the amount of query samples in the tasks can also be helpful to the final performance of the extracted meta-learner.
Comparatively, such bounds deduced in some previous works, e.g., \citep{maurer2016benefit}, have mostly not involved this amount, and thus could not reflect such common sense fact that increasing the number of query samples in training tasks should be helpful for
improving generalization of meta learning on new tasks. Actually, in many current meta learning applications, good performance can be obtained on the basis of not very large training task set (even only with one training task), e.g., hyper-parameter learning \citep{franceschi2018bilevel}, neural architecture search (NAS) \citep{elsken2019neural}, etc. Therefore, our result could provide a more rational and comprehensive theoretical explanation for these methods.

Another point is necessary to be illustrated. Many of existing meta learning theories employ traditional empirical error to develop the error bounds \citep{maurer2016benefit,tripuraneni2020theory}, whose training strategy is not episodic (i.e., support/query training strategy) \citep{vinyals2016matching}. This will be somehow inapplicable for modern meta learning algorithm, e.g., gradient-based meta-algorithms \citep{finn2017model,franceschi2018bilevel}.
Besides, some recent theoretical investigations \citep{denevi2019online,balcan2019provable} study model-agnostic meta-algorithm \citep{finn2017model},  and explore the convergence guarantees for gradient-based meta learning. Specifically, \citep{denevi2018learning,denevi2019learning,khodak2019adaptive} pay attention to specific meta-algorithms and propose the corresponding generalization guarantees. However, they study the case that the loss function is convex or the mapping is linear, which might be relatively hard to make analysis on deep neural network. Also, the training strategy studied is not episodic, which tends not to be easily used to train practical popular meta-algorithms. The most related work \citep{yin2019meta,chen2020closer} considered the support/query episodic training strategy. Albeit beneficial to illustrate some generalization insight of meta learning, their theory has not been used to conduct some feasible meta-regularization terms that can help improve the meta-learner.
Different from the aforementioned works, we developed the theoretical results followed by the support/query training strategy. Besides, the theoretical guarantees contain the complexities of learning meta-learner $h$ and learner $\mathbf{f}$, which can easily induce control effects of the meta-learner for improving generalization capability of meta-algorithms.

%

Particularly, it is seen that there exists a term $d_{\mathcal{F}} (\mu_t^{s},\mu_t^{q})$ in the error bound, which describes the distribution shift between training/suppot set and validation/query set among tasks. For applications without such domain shift, this term is zero and can be omitted. However, there are many meta learning applications with such support/query shift.
A typical example is domain generalization (as demonstrated in Section \ref{domain}), aiming to learn how to set a learning method implemented on the source domain data (i.e., support data), but able to yield a learner which could perform well on different target domain data (i.e., query data). To this purpose, various training tasks need to be collected, each containing the simulated training-to-testing domain shift by splitting source domains into virtual training and testing (i.e., validation) domains. And then meta-learning can be executed to improve model training, e.g., MetaReg \cite{balaji2018metareg},  Feature-Critic \cite{li2019feature}.
We believe that our theory with support/query discrepancy can then be used to well explain the theoretical insight of meta learning in such cases.

\subsection{Meta-Test Stage: Generalizing to New Query Tasks}\label{meta-test}
In the meta-test stage, we aim to transfer the extracted meta-learner $\hat{h}$ to help set the learning method on new query tasks. As shown in Algorithm \ref{alg2:meta-testing}, in the meta-test stage, we have the training set $D_{\mu}^{tr} \sim (\mu^s)^{m_{\mu}}$, and for evaluating the performance we also assume to have a validation set
$D_{\mu}^{val}\sim (\mu^q)^{n_{\mu}}$. The two sets are sampled from the query task $\mu =(\mu^{s},\mu^{q}) $ drawn from the environment distribution $\eta$. Apply $\hat{h}$ obtained in the meta-training stage to set the learning process on $D_{\mu}^{tr}$ to obtain the decision model, and the test performance on $D_{\mu}^{val}$ reflects the generalization performance of the meta-learner $\hat{h}$ on the new task $\mu$.
In a nutshell, the test error $R_{test}(h)$ can be calculated as\footnote{ Here $R_{test}$ denotes the meta-test error with respect to the meta-learner, and the missing expectation w.r.t. $D_{tr}$ means that we use the empirical estimation of the task-specific learner to compute the meta-test error on $D_{\mu}^{val}$. Comparatively, $R_{\eta}$ denotes that we use the ideal expected estimation of the task-specific learner to compute the generalization performance.}

\begin{align}
R_{test}(h) = \mathbb{E}_{\mu \sim \eta}\mathbb{E}_{D_{\mu}^{val}\sim (\mu^q)^{n_{\mu}}} \bm{L} (\mathcal{LM}(D_{\mu}^{tr}; h)),  D_{\mu}^{val}).
\end{align}
We further employ the following excess transfer risk to measure the ``closeness'' between the estimated $\hat{h}$ and the true underlying $h^*$, as defined in Eq. (\ref{tworisk}), in terms of helping fulfill new query task $\mu$:
\begin{align*}
R_{test}(\hat{h}) - R_{test}(h^*).
\end{align*}

To build a bridge between the meta-training and meta-test processes, similar as recent works \citep{du2020few,tripuraneni2020theory}, we make the following assumption:
\begin{assumption}[Task diversity]\label{assumption4}
Given the meta learning machine $\mathcal{H}$, and $\hat{h}$ and $h^*$ are defined in Eq. (\ref{tworisk}), it holds that
\begin{align} \label{eqassumption}
R_{\eta}(\hat{h}) - R_{\eta}(h^*) \leq \alpha\left(R_{train}(\hat{h}) - R_{train}(h^*)\right)+\beta,
\end{align}
where $R_{\eta}(h)$ and $R_{train}(h)$ are defined in Eqs. (\ref{transferrisk}) and (\ref{Rtrainrisk}), respectively.
\end{assumption}
In fact, $R_{\eta}(\hat{h}) - R_{\eta}(h^*)$ presents to measure the closeness between the estimated meta-learner $\hat{h}$ and the underlying optimal meta-learner $h^*$ in terms of an arbitrary new query task, and $R_{train}(\hat{h}) - R_{train}(h^*)$ defines this measure by virtue of the information extracted from training tasks.
However, we do not have direct access to the underlying information of the meta-learner. One can only indirectly extract partial information from observed training tasks, and thus we assume Eq.(\ref{eqassumption}) holds, where $\alpha\in \mathbb{R}$ reflects the task similarity between source meta-training tasks and target meta-test task, and $\beta\in \mathbb{R}$ denotes the small additive error.
It is anticipated that transferring the learning methodology will not be possible when the new query tasks are entirely different from those training ones (i.e., $\alpha$ is large).  

In other words, $\alpha$ could essentially encode the ratio of these two quantities, i.e., how well the meta-training tasks can cover the space captured by the $\hat{h}$ needed to predict on new meta-test tasks. So it can be regarded as the task diversity measure of meta-training tasks.
If all training tasks were quite similar, then it could only be expected that the meta-training stage can learn about a narrow slice of the learning methodology, which makes such task-transferring aim difficult.
Generally, when the task diversity is large, then $\alpha$ is small; otherwise, large $\alpha$ implies relatively more similar training tasks.

When we instantiate our SLeM theoretical framework into typical meta learning applications, we could provide some specific value of $\alpha,\beta$.
Specifically, for the few-shot regression model (see Section \ref{application}), we can set $\alpha=\frac{M}{\sigma_{d_L}(\mathbf{K})}$ and $\beta=0$ in Proposition \ref{prop4}, where $\sigma_{d_L}(\cdot)$ denotes the $d_L$-th singular value of a matrix at a decreasing order. Also, for the few-shot classification model (see Section \ref{classification}), we can set $\alpha=\frac{M}{\sum_{k=1}^K  \sigma_{d_L}((\mathbf{K})_k)}$ and $\beta=0$ in Proposition \ref{prop3}. These examples can verify the reasonability and realizability of Assumption \ref{assumption4}.

\begin{algorithm}[t]
\caption{Meta-Test: Generalization to New Query Tasks}
\label{alg2:meta-testing}
\begin{algorithmic}
\State {\bfseries Input:} Query task $\mu=(\mu^s,\mu^q)$ drawn from environment $\eta$; meta-learned meta-learner $\hat{h}$.
\State {\bfseries Do:}
\State \ \ \ \ \ {\bfseries (1)} Draw training set $D_{\mu}^{tr}$ from $\mu^s$.
\State \ \ \ \ \ {\bfseries (2)} For given $\hat{h}$, calculate $\mathcal{LM}(D_{\mu}^{tr}; \hat{h})$ to obtain $\hat{f}_{\mu}$.
\State {\bfseries Return:} The task-specific learner $\hat{f}_{\mu}$.
\end{algorithmic}
\end{algorithm}\vspace{2mm}

We can then present the theoretical result for the meta-test phase of meta learning.
\begin{theorem}\label{theomtest}
If Assumptions \ref{assumption1} - \ref{assumption4} hold,  for any $\delta>0$,  with probability at least $1-\delta$, we have
\begin{align}\label{eqtestbound}
\begin{split}
	& 	R_{test}(\hat{h}) - R_{test}(h^*) \leq \alpha\left(R_{train}(\hat{h}) - R_{train}(h^*)\right)+ \beta \\
	&\ \ \ \ \ \ \ \ \ \ \ \ \ \ \ \ \ \ \ \ \ \ \ \ \ \ \ \ \ \ \ \ + 6L\hat{\mathcal{G}}_{D_{\mu}^{tr}}(\mathcal{F})+2 \mathbb{E}_{\mu \sim \eta} d_{\mathcal{F}} (\mu^s,\mu^q) + 6B\sqrt{\frac{\log \frac{2}{\delta}}{m_{\mu}}}.
\end{split}
\end{align}
\end{theorem}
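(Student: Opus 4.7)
The plan is to introduce $R_{\eta}$ as an intermediate quantity and decompose
\begin{align*}
R_{test}(\hat{h})-R_{test}(h^*) &= \bigl(R_{test}(\hat{h})-R_{\eta}(\hat{h})\bigr) + \bigl(R_{\eta}(\hat{h})-R_{\eta}(h^*)\bigr) + \bigl(R_{\eta}(h^*)-R_{test}(h^*)\bigr).
\end{align*}
Assumption \ref{assumption4} is tailor-made to control the middle bracket, contributing $\alpha(R_{train}(\hat h)-R_{train}(h^*))+\beta$ directly. The remaining work is to bound the two outer brackets, each of the form $|R_{test}(h)-R_{\eta}(h)|$, by standard meta-test generalization machinery.

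For a fixed $h\in\{\hat h,h^*\}$ and a new query task $\mu\sim\eta$ with support set $D_\mu^{tr}\sim(\mu^s)^{m_\mu}$, the learner $f_h := \mathcal{LM}(D_\mu^{tr};h)\in\mathcal{F}$ is produced from the same support sample that also appears in the empirical risk. The route I would take is the chain
\begin{align*}
R_{\mu^q}(f_h) - \hat{R}_{D_\mu^{tr}}(f_h) \leq \bigl[R_{\mu^q}(f_h)-R_{\mu^s}(f_h)\bigr] + \bigl[R_{\mu^s}(f_h)-\hat{R}_{D_\mu^{tr}}(f_h)\bigr],
\end{align*}
where the first piece is dominated by the support/query discrepancy $d_{\mathcal{F}}(\mu^s,\mu^q)$ directly from its definition, and the second is controlled, via a supremum over $\mathcal{F}$ and the usual symmetrisation plus McDiarmid argument under Assumptions \ref{assumption1}--\ref{assumption2}, by a Gaussian-complexity term of order $3L\hat{\mathcal{G}}_{D_\mu^{tr}}(\mathcal{F})$ and a concentration residual of order $3B\sqrt{\log(2/\delta)/m_\mu}$. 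Averaging the discrepancy term over $\mu\sim\eta$ produces $\mathbb{E}_{\mu\sim\eta}d_{\mathcal{F}}(\mu^s,\mu^q)$. Applying this single-$h$ gap bound twice, once for $\hat h$ and once for $h^*$, doubles the constants and delivers exactly the $6L$, $2$, and $6B$ factors appearing in \eqref{eqtestbound}.

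The main obstacle I anticipate is the conversion from the empirical to the population risk of the \emph{data-dependent} learner $f_h$: because $f_h=\mathcal{LM}(D_\mu^{tr};h)$ is a random element of $\mathcal{F}$ that depends on the same sample entering $\hat R_{D_\mu^{tr}}(f_h)$, one cannot apply a fixed-function Hoeffding bound and must instead pay for a uniform bound over the entire learning machine $\mathcal{F}$, which is precisely what produces the $\hat{\mathcal{G}}_{D_\mu^{tr}}(\mathcal{F})$ term. A secondary bookkeeping difficulty is sequencing the randomness correctly: the McDiarmid step must be executed conditionally on $\mu$ (so that $m_\mu$ and $D_\mu^{tr}$ are fixed) and then averaged over $\mu\sim\eta$, which is what moves the expectation inside the discrepancy summand while leaving the empirical Gaussian complexity and the $\sqrt{\log(2/\delta)/m_\mu}$ concentration term intact. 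Combining these pieces with the Assumption~\ref{assumption4} step on the middle bracket yields \eqref{eqtestbound}.
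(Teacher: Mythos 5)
Your three-bracket decomposition with $R_{\eta}$ as the intermediate quantity, and the use of Assumption \ref{assumption4} to control the middle bracket, coincides exactly with the paper's proof (its terms $(a)$, $(b)$, $(c)$). The genuine gap is in your treatment of the two outer brackets. The chain you write controls $R_{\mu^q}(f_h)-\hat{R}_{D_\mu^{tr}}(f_h)$, i.e.\ the query risk of the data-dependent learner against its \emph{empirical support risk}; but the bracket you must bound is $R_{test}(h)-R_{\eta}(h)$, where $R_{\eta}(h)$ is the query risk of a learner trained on an \emph{independently redrawn} support sample. Passing from $\hat{R}_{D_\mu^{tr}}(\mathcal{LM}(D_\mu^{tr};h))$ to $R_{\eta}(h)$ still requires the inner ERM optimality of $\mathcal{LM}$ (the paper's step $(a3)$, which is where the implicit assumption that the outer loss is dominated by the inner loss enters), followed by a \emph{second} discrepancy-plus-uniform-deviation pass in the reverse direction (the paper's $(a4)$--$(a5)$). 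As a result, the single bracket for $\hat h$ already consumes the entire budget $2\,\mathbb{E}_{\mu\sim\eta}d_{\mathcal{F}}(\mu^s,\mu^q)+6L\hat{\mathcal{G}}_{D_\mu^{tr}}(\mathcal{F})+6B\sqrt{\log(2/\delta)/m_\mu}$; the factor of two in the constants does not come from summing over $\hat h$ and $h^*$ as you claim, but from traversing the support distribution twice inside the $\hat h$ bracket alone.

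Moreover, in the paper the $h^*$ bracket $R_{\eta}(h^*)-R_{test}(h^*)$ is not paid for at all: it is argued to be nonpositive from the optimality of $h^*$ (term $(c)\le 0$). If you ran your symmetric two-$h$ accounting with a per-bracket bound that actually reaches $R_{\eta}(h)$, you would end up with roughly twice the constants of \eqref{eqtestbound}; with the one-pass per-bracket bound you propose, the argument does not close, because the step from the empirical support risk to $R_{\eta}(h)$ is never justified. The missing ingredients are therefore: (i) the ERM step exploiting that $\mathcal{LM}(D_\mu^{tr};h)$ minimizes the within-task support loss, (ii) the second discrepancy/complexity pass inside the $\hat h$ bracket, and (iii) the observation that the $h^*$ bracket is nonpositive, which is what keeps the final constants at $6L$, $2$, and $6B$.
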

Theorem \ref{theomtest} provides an excess transfer risk bound of the meta-learned $\hat{h}$ for query task $\mu$, which can be more concisely expressed with three dominant terms as follows:
\begin{align*}
R_{test}(\hat{h}) - R_{test}(h^*) \lesssim \alpha\left(R_{train}(\hat{h}) - R_{train}(h^*)\right) + \mathcal{O}\left(\sqrt{\frac{C(\mathcal{F})}{m_{\mu}}}\right)+ \mathbb{E}_{\mu \sim \eta} d_{\mathcal{F}} (\mu^s,\mu^q).
\end{align*}
They can be interpreted as: one upper bounded by the task average excess risk in the meta-training stage (the first term), the complexity of learning the task-specific learner $f_{\mu}^*$ for a new query task (the second term), as well as the distribution shift between training set and test set of query task (the third term).
It is seen that the excess transfer risk stems from two main components: one arises from the bias of using an imperfect meta-learner estimate to transfer knowledge across training tasks, which accounts for the meta-training risk (i.e., the excess task average risk in Theorem \ref{the1}); the other arises from the difficulty of learning task-specific learner of query task with the estimated meta-learner (i.e., the last two terms of the upper bound).

In a word, the leading-order terms of the transfer risk for learning meta-learner $h$ scales as $\tilde{\mathcal{O}}\left(\sqrt{C(\mathcal{H})/(\sum_{t=1}^T n_t)}+\sqrt{C(\mathcal{F})/m}+ \sqrt{C(\mathcal{F})/m_{\mu}}\right), m = \min\{m_1,\cdots,m_T\}$. A naive algorithm which learns the new task in isolation, ignoring the training tasks, has an excess risk scaling  $\mathcal{O}\left(\sqrt{(C(\mathcal{H})+C(\mathcal{F}))/m_{\mu}}\right)$. This theoretically explained the fact that when $\sum_{t=1}^T n_t$ are sufficiently large compared with $m_{\mu}$ (e.g., the setting of few-shot learning, $m_{\mu}$ is always relatively small), the performance of meta learning should be evidently better than the baseline of learning in isolation.

\subsection{Remarks}
In the following we present two necessary remarks on our method.

\noindent\textbf{The ERM estimation for meta-learner.} We use empirical risk minimization (ERM) principle to derive generalization bounds for our framework with general losses, tasks, and models. In fact, we just assume that the ERM within and outer tasks estimators can be exactly computed, and our theoretical results are shown for the global minimizer of empirical risks, just following previous works, e.g., \citep{maurer2016benefit,tripuraneni2020theory,xu2021representation}. And the qualitative predictions still hold true for gradient-based optimization algorithms as verified by our simulations on deep neural networks.
When the loss function is not convex, we can compute the ERM within and outer task estimators by the aid of advances of bi-level optimization algorithms, e.g., \citep{ji2021bilevel,liu2021value,liu2021towards}.
Specifically, \cite{ji2021bilevel} studied the bi-level optimization where the upper-level objective function is nonconvex, and \cite{liu2021value,liu2021towards} studied the bi-level optimization where inner-level objective function is nonconvex. Our theoretical results are hopeful to benefit from these latest bi-level optimization algorithms, and the bounds could be further improved with relatively loose conditions, which will be investigated in our future research.

\noindent\textbf{Theory-induced meta-regularization.} Since we introduce an explicit parameterized meta-learner to extract the hyper-parameter prediction function, it is easy to control and improve the learning of the meta-learner with proper meta-regularization techniques, just like regularization techniques being effective in improving capability of machine learning, which often impose controlling strategies on the task-specific model (learner) developed from statistical learning theory via structural risk minimization principle \citep{vapnik2013nature}, e.g., the large margin regularizer for learner.
To bridge the gap between meta-learning theory and its practical use, we will highlight the utility of our meta learning framework for obtaining the learning guarantees of some typical meta learning applications, including few-shot regression (Section \ref{application}), few-shot classification (Section \ref{classification}) and domain generalization (Section \ref{domain}). We can empirically substantiate that the theory-induced meta-regularization strategies are capable of helping consistently improve generalization capability of meta-learner on new query tasks.

\section{Related Works}\label{relatedwork}
While there are a large number of meta learning literatures recently emerging in the field, most of them focused on practical feasible techniques against certain problems. In this section we mainly review the related studies considering the intrinsic understanding of the fundamental meta learning concepts as well as its basic learning theories. More related papers can be referred to in the recently proposed comprehensive surveys \citep{vanschoren2018meta,hospedales2020meta}.

\noindent\textbf{Meta learning understandings and taxonomies}.
Meta learning has a long history in psychology \citep{ward1937reminiscence}, and was described by \citep{maudsley1980theory} as ``the process by which learners become aware of and increasingly in control of habits of perception, inquiry, learning, and growth that they have internalized". Then \cite{schmidhuber1987evolutionary,bengio1990learning} introduced it into computer science to train a meta-learner that learns how to update the parameters of the learner. Afterwards, this approach has been applied to learning to optimize deep networks \citep{andrychowicz2016learning}.  Besides,
\cite{vilalta2002perspective} used meta learning to improve the learning bias dynamically through experience by the continuous accumulation of meta-knowledge. \cite{lemke2015metalearning} further employed meta learning to extract meta-knowledge from different domains or problems. A recent survey paper of \citep{vanschoren2018meta} mainly attributed the capability of meta learning to its leveraging prior learning experience, and interpreted that it can learn new tasks more quickly. These literatures, however, have not presented a general mathematical formulation for meta learning, which could yet be useful to clearly help distinguish meta learning from previous related learning manners, like transfer learning and multi-task learning.
Very recently, \cite{hospedales2020meta} proposed a comprehensive survey paper, introducing a taxonomy of meta learning along three independent axes, i.e., meta-representation, meta-optimizer, and meta-objective. However, as aforementioned, this paper summarized the task of meta learning as learning fixed hyper-parameters instead of a hyper-parameter-prediction-function, making the learned methodology with insufficient flexibility adapt to new query tasks.

\noindent\textbf{Conditional meta-learning}.  Several recent works \citep{wang2020structured,denevi2020advantage,deneviconditional,rusu2018meta,wang2019tafe} addressed the issue of learning fixed hyper-parameters by conditioning hyper-parameters on target tasks. \cite{wang2020structured} used a task-specific objective functions by weighting meta-training data on target tasks based on structured prediction, to achieve task adaptive model initialization. TASML \citep{wang2020structured} is a non-parametric approach and requires access to training tasks at test time for the task-specific objectives on target tasks. Considering the limitation for non-convex formulation of TASML, \cite{denevi2020advantage} proposed another conditional meta-learning framework for biased regularization and fine tuning, which learns a conditioning function mapping task's side information into a task's specific bias vector. \cite{deneviconditional} further proposed to learn a conditioning function mapping task's side information into a linear representation for better performance in more scenarios. To sum up, they are against specific problems (task conditional model initialization, biased regularization and linear representation, respectively), which can be actually seen as special hyper-parameter configurations cases under the investigated general hyper-parameter setting framework (as can be seen in Table 1). Besides, our meta-learner is a general parameterized structure, e.g., we consider deep neural networks as the meta-learner in our experiments. Thus their meta-learner could be a special case under our general meta-learner formulation, e.g., the meta-learner is chose from linear functions class.

Our SLeM framework can be easily integrated into the traditional machine learning framework to provide a fresh understanding and extension of the original machine learning framework. This facilitates us capable of readily establishing learning theory upon the traditional statistical learning theory, and provide generalization bounds for the new framework with general losses, tasks, and models. Comparatively, \citep{wang2020structured,denevi2020advantage,deneviconditional} have not specifically emphasized the relationship with conventional machine learning regime, and they mainly pay attention to the advantageous performance of conditional meta-learning approach compared with the standard unconditional	counterpart. Specifically, similar to the structural risk minimization (SRM) principle in the conventional statistical learning theory, we can further analyze general theory-inducted control strategies for meta-learner to ameliorate its generalization capability according to the derived generalization bounds. While \citep{wang2020structured,denevi2020advantage,deneviconditional} does not develop corresponding theory-inducted controlling strategies based on their formulation and theory.

\noindent\textbf{Theory of gradient-based meta-learning}. GBML stems from the Model-Agnostic Meta-Learning (MAML) algorithm \citep{finn2017model} and has been widely used in practice. \cite{finn2019online} showed that MAML meta-initialization is learnable via follow the meta leader method under strong-convexity and smoothness assumptions. \cite{balcan2019provable,denevi2019learning} provided finite-sample meta-test-time performance guarantees in the convex setting for SGD-based Reptile algorithm or biased regularization via online learning. ARUBA \citep{khodak2019adaptive} further improved upon the bound of \citep{balcan2019provable,denevi2019learning}, and obtained better statistical guarantees. And \cite{denevi2019online} explored Online-Within-Online meta-learning approach via online mirror descent algorithm, and derived a cumulative error bound for their method. \cite{fallah2020convergence} developed a convergence analysis for one-step MAML for a general nonconvex objective, and \cite{wang2020global,ji2022theoretical} studied the convergence of multi-step MAML cases.
\cite{wang2021bridging,kao2022maml,collins2022maml} provided theoretical analysis for two well-known GBML methods, MAML and ANIL.
Recently, \cite{fallah2021generalization} studied the generalization properties of MAML using the connections between algorithmic stability and generalization bounds of algorithms.	
\cite{huang2022provable} analyzed the generalization properties of MAML trained with SGD in the overparameterized regime under a mixed linear regression model. \cite{yao2021improving} proposed task augmentation strategies to address meta-overfitting problem. \cite{chen2022understanding} further showed that overfitting is more likely to happen in MAML and its variants than in ERM, especially when the data heterogeneity across tasks is relatively high.

For all these GBML based methods, the bounds presented in the related literatures are restricted to MAML algorithms and its variants, i.e., unconditional meta-learning, which, however, may be likely to fail to adapt to heterogenous environments of tasks. Comparatively, in this study, we learn an explicit hyper-parameter prediction function conditioned on training tasks, aiming at more flexibly dealing with dynamic and changing environments of tasks. We then provide general-purpose bounds with decoupling the complexity of learning the task-specific learner from that of learning the task-transferrable hyper-parameter prediction function. Thus such new bounds should have a more comprehensive adaptability on real complicated environments. This can also be evidently verified by the superiority of our method compared with MAML-based ones in our experiments.

\noindent\textbf{Statistical learning to learn  (LTL)}. The seminal work of \citep{baxter2000model} introduced a framework to study the statistical benefits of meta learning, subsequently used to show excess risk bounds for ERM-like methods using techniques like covering numbers \citep{baxter2000model}, algorithmic stability \citep{maurer2005algorithmic,chen2020closer} and Gaussian complexities \citep{maurer2016benefit}. Afterwards, \cite{du2020few,tripuraneni2020theory,tripuraneni2021provable} proposed a general framework for obtaining the generalization bound, and made analyses on the benefit of representation learning for reducing the sample complexity of the target task. \cite{arora2020provable} proposed a representation learning approach for imitation learning via bilevel optimization, and demonstrated the improved sample complexity brought by representation learning.  	
\cite{zhou2019efficient} statistically demonstrated the importance of prior hypothesis in reducing the excess risk via a regularization approach. \cite{chua2021fine} further provided risk bounds on predictors found by fine-tuning via gradient descent, using an initial representation from a MAML-like algorithm.
\cite{bai2021important} studied the implicit regularization effect of the practical	design choice of train-validation splitting popular in meta learning.
\cite{sun2021towards} showed that the optimal representation for representation-based meta learning is overparameterized and provides sample complexity for the method of moment estimator. \cite{bernacchia2021meta} discovered that the optimal step size of overparameterized MAML during training is negative.

The PAC-Bayes framework has been extended to understand this LTL approach,
facilitating  a PAC-Bayes meta-population risk bound \citep{pentina2014pac,pentina2015lifelong,amit2018meta,ding2021bridging,farid2021generalization,rothfuss2021pacoh,rothfuss2021meta,nguyen2022pac,rezazadeh2022unified}. These works
mostly focus on the case where the meta learning model is underparameterized; that is, the total number of meta training data from all tasks is larger than the dimension of the model parameter.  Comparatively, our meta learning model is problem-agnostic, which could be overparameterized or underparameterized.
Besides, information theoretical bounds have been recently proposed in \citep{chen2021generalization,jose2021information,jose2021transfer,hellstromevaluated}, which bound the generalization error in terms of mutual information between the input training data and the output of the meta-learning algorithms. Particularly, \cite{hellstromevaluated} applied their bounds to a representation learning setting, and yields a bound coincides with the one reported in \citep{tripuraneni2020theory}.

Compared with these bound estimation theories, the generalization bounds we derive have several differences. Firstly, previous works assume the traditional learning model to train the meta-learner, which is somehow deviated from the current meta learning practice with support/query episodic training mode such as MAML \citep{finn2017model} and ProtoNet \citep{snell2017prototypical}.
Comparatively, we derive the bounds based on the commonly used support/query meta-training manner in meta learning practice. Besides, although previous works achieve solid theoretical justifications, they seldom emphasized theory-inspired meta-regularization terms to conduct practically feasible controlling strategies for further ameliorating the meta-learner performance, especially its generalization capability across diverse testing query tasks. In our study, however, our estimated bounds could be readily employed to induce such expected generalization-controlling strategies for general meta-learning algorithms. Specifically, our theory can be used to deduce several meta-regularization strategies on soundly rectifying the learning track of the meta-learner to improve its generalization ability. The effectiveness of such strategy has been comprehensively substantiated with typical meta learning applications on few-shot regression, few-shot classification and domain generalization, as demonstrated in Sections \ref{application}, \ref{classification} and \ref{domain}, respectively.

\noindent\textbf{Hyper-parameter tuning/optimization}. The hyper-parameter tuning/optimization techniques \citep{feurer2019hyperparameter} often search some relatively small-scale hyper-parameters like regularization strength, learning rate, etc. While when faced with complicated and massive hyper-parametric configurations, especially those related to a deep neural network, conventional hyper-parameter tuning approaches are generally incapable of taking effect. Yet our framework aims to study highly complicated hyper-parametric configurations constituted in general components of the learning process, e.g., data screening, model constructing, loss function presetting and algorithm designing, etc. Please see Table \ref{tablexx} in the main manuscript, which lists some typical examples on this motivation. The hyper-parameter setting issues considered in this study thus significantly exceed those involved in conventional hyper-parameter tuning literatures.

Besides, hyper-parameter tuning techniques mainly focus on extracting hyper-parameters themselves for the investigated tasks, and often employ validation set based approaches to search proper hyper-parameters, e.g., random search \citep{bergstra2012random}, Bayesian hyper-parameter optimization \citep{snoek2012practical}, gradient-based hyper-parameter optimization \citep{franceschi2018bilevel}, etc.
While the framework emphasized in this study employs an explicit hyper-parameter prediction function (meta-learner) for predicting proper hyper-parameters, which can more flexibly and adaptively fit query tasks. Besides, it often adopts bi-level optimization tools to calculate hyper-parameters under a sound optimization framework, which is always more efficient than purely searching strategies.

Furthermore, the hyper-parameter tuning techniques usually focus on finding optimal hyper-parametric configurations for tasks at hand. Despite their success, the searched hyper-parameters should be mainly appropriate for the investigated learning tasks, but can be hardly finely generalized to wider range of learning problems with heterogenous environments, which may require to set substantially different hyper-parameters. Comparatively, the explored hyper-parameter setting function in the suggested meta-learning framework of this study aims to extract the hyper-parameter setting rule shared by training	tasks sampled from a task distribution, which is expected to facilitate a better flexibility of the learned hyper-parameters setting function performing well on new tasks sampled from this task distribution. Thus this hyper-parameter setting function should potentially own better task generalization ability, which can be readily used to predict proper hyper-parameters conditioned on new query tasks. Especially, we have provided the theoretical evidence to support such task transfer generalization capability of our hyper-parameter prediction function, to make this essential characteristics more solid and convincible.

\noindent\textbf{Multi-task/Transfer Learning}.
Here we want to shortly clarify the main difference of the meta learning framework as aforementioned from conventional multi-task/transfer learning approaches \cite{weiss2016survey,zhuang2020comprehensive,pan2010survey,zhang2018overview,ruder2017overview}.
Although the latter also makes use of the intrinsic relationship among multiple tasks for improving the generalization performance for the learning results, they still fall into the scope of conventional machine learning, which considers the problem at the data/learner level. Specifically, most conventional multi-task/transfer learning works aim to improve the task learning performance by virtue of ameliorating the parameters of multiple learners imposed on different tasks by leveraging their underlying relevance. Meta-learning, however, considers the problem at the task/meta-learner level, and aims to learn the common hyper-parameter setting function imposed on the parameters of one single meta-learner functioned on different tasks. This way inclines to better improve its flexibility, available range and potential capability in practice than conventional learning manners. In particular, a feasible meta learning method allows the learners on different training/test tasks with very different forms, like variant input data modalities \citep{cubuk2018autoaugment}, model architectures \citep{zoph2016neural}, feature dimensions \citep{ryu2020metaperturb}, etc, and only requires them to possess certain shared common knowledge in learning method setting, which yet should be hardly handled by conventional multi-task/transfer learning manners. Such consideration in the learning-methodology perspective inclines to make this learning manner be with a more widely potential usage than many previous machine learning fashions.

\section{Application \uppercase\expandafter{\romannumeral 1}: Few-shot Regression}\label{application}
In this section, we instantiate the proposed meta learning framework for the few-shot regression model.

\subsection{Basic Setting}
Here we consider the few-shot regression setting where $\mathcal{X}=\mathbb{R}^d, \mathcal{Y}=\mathbb{R}$. Let $\|x\| \leq R,|y|\leq B, \forall x \in \mathcal{X}, y\in \mathcal{Y}$. The output of the meta-learner is the representation of each sample.
The loss functions $\ell$ is chosen as the square loss $\ell(\mathbf{w}^{\mathsf{T}} {h}(x),y)=(y-\mathbf{w}^{\mathsf{T}} {h}(x))^2$.
The task-specific learning machine $\mathcal{F}$ are chosen as linear regression function maps, and the meta learning machine $\mathcal{H}$ is parameterized as a depth-$L$ vector-valued deep neural network (DNN) to extract the common representation for various regression tasks. Concretely, $h(x)$ can be written as:
\begin{align}\label{eqmlp}
	h(x) = \phi_{L}(\mathbf{W}_{L}(\phi_{L-1}(\mathbf{W}_{L-1}\cdots \phi_1 (\mathbf{W}_1 x)))),
\end{align}
where $\mathbf{W}_k,k\in [L]$ is the parameter matrix, and $\phi_k, k\in[L]$ is the activation function. Here we assume that the activation functions in all layers are element-wise $1$-Lipschitz and zero-centered as assumed in \citep{golowich2018size}
which is typically satisfied, like the ReLU. Formally, $\mathcal{F}$ and $\mathcal{H}$ are writen as:
\begin{align}\label{eqfewregression}
	\begin{split}
		\mathcal{F}&=\{f | f(z)= \mathbf{w}^{\mathsf{T}} z, \mathbf{w},z\in \mathbb{R}^{d_L}, \|\mathbf{w}\| \leq M  \},\\
		\mathcal{H} &= \{h|h(x) \in \mathbb{R}^{d_L} \text{ as defined in (\ref{eqmlp})}, x \in \mathcal{X}\}.
	\end{split}
\end{align}

Following the setting in \citep{finn2017model}, we assume that there are $T$ tasks $\bm{D} = \{D_t\}_{t=1}^T$ available for learning, and $D_t = (D_t^{tr},D_t^{val})$, where $D_t^{tr} = \{z_{ti}^{(s)}=(x_{ti}^{(s)},y_{ti}^{(s)})\}_{i=1}^{m}, D_t^{val} = \{z_{tj}^{(q)}=(x_{tj}^{(q)},y_{tj}^{(q)})\}_{j=1}^{n}$.
Here, the number of training/validation set samples for each task is identical.
The few-shot regression model is then written as:
\begin{align}\label{ffew}
	\begin{split}
		&\bm{W}^* = \arg\min_{\bm{W}} \frac{1}{nT}\sum_{t=1}^T \sum_{j=1}^n \ell(\mathbf{w}_t^{*\mathsf{T}} h(x_{tj}^{(q)}),y_{tj}^{(q)}) \\
		s.t.,\  &\mathbf{w}_t^{*} = \arg\min_{\mathbf{w}_t} \frac{1}{m}\sum_{i=1}^m \ell(\mathbf{w}_t^{T} h(x_{ti}^{(s)}),y_{ti}^{(s)}), t\in [T],
	\end{split}
\end{align}
where $\bm{W} = \{\mathbf{W}_{k}, k\in [L]\}$ represents the collection of parameter matrices of $h$ and $\mathbf{w}, h$ is chosen from $\mathcal{F},\mathcal{H}$ as defined in Eq. (\ref{eqfewregression}). For notation convenience, we denote $\mathbf{P} = (\mathbf{w}_1,\cdots,\mathbf{w}_T)^{\mathsf{T}} \in \mathbb{R}^{T\times d_L}$, and $\mathbf{P^*} = (\mathbf{w}_1^*,\cdots,\mathbf{w}_T^*)^{\mathsf{T}}$ as its theoretical optimal solution.

\subsection{Theoretical Analysis}\label{theory}
In this part, we will instantiate Theorem \ref{theomtest} for few-shot regression model as defined in Eq. (\ref{ffew}).
The $Dis(D^{val})$ can be computed as
\begin{align*}
	Dis(D^{val}) &= \sup_{h,h'} \rho_{D^{val}} (\mathbf{f}^{(h)},\mathbf{f}^{(h')}) \leq 4 \sup_{h,x \in \mathcal{X}} \|\mathbf{w}^{\mathsf{T}} h(x)\|\\
	& \leq \sup_{h,x} 4M\|h(x)\| \leq 4MD \cdot \prod_{k=1}^{L} \|\mathbf{W}_k\|,
\end{align*}
where $\rho_{D^{val}}(\cdot,\cdot) $ is defined in Theorem 2, and
$\|h(x)\|$ is bounded by (Let $ \mathbf{r}_k(\cdot)$ denote the vector-valued output of the $k$-layer for $k\in [L]$):
\begin{align} \label{eqhbound}
	\begin{split}
		&\| h(x)\| = \| \mathbf{r}_L(x)\|  = \|\phi_L (\mathbf{W}_{L} \mathbf{r}_{L-1}(x))\| \\
		\leq & \|\mathbf{W}_{L}\mathbf{r}_{L-1}(x)\| \leq \|\mathbf{W}_{L}\| \|\mathbf{r}_{L-1}(x)\| \leq D \prod_{k=1}^{L} \|\mathbf{W}_k\|.
	\end{split}
\end{align}

Now we can verify that Assumptions \ref{assumption1} - \ref{assumption4} holds.
Observe that $\nabla_{\hat{y}}\ell(\hat{y},y) = 2(\hat{y}-y) \leq 2(B+M\|h(\mathbf{x})\|)$, and thus the loss function is Lipschitz continuous with $L=2(B+M\|h(x)\|$, and $\|h(x)\|$ is bounded by Eq.(\ref{eqhbound}). Besides, since $|\ell(\hat{y},y)| \leq B^2 + 2BM\|h(x)\| +M^2\|h(x)\|^2$, the loss function is bounded.
The following result verifies that Assumption \ref{assumption4} holds,
\begin{proposition}\label{prop4}
	Consider the few-shot regression model defined in Eq.(\ref{ffew}), and the loss function $\ell(\cdot,\cdot)$ is chosen as the squared loss. The feature representation and the linear regression function are designed as in Eq.(\ref{eqfewregression}). Then Assumption \ref{assumption4} holds with $\alpha=\frac{M}{\sigma_{d_L}(\mathbf{K})}$ and $\beta=0$, where $\mathbf{K}=\mathbf{P^*}^{\mathsf{T}} \mathbf{P^*}/T$, $\sigma_{d_L}(\mathbf{K})$ denote the $d_L$-th singular value of matrix $\mathbf{K}$ at a decreasing order.
\end{proposition}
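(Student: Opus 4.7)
The plan is to exploit the quadratic structure of the square loss to turn both sides of Assumption~\ref{assumption4} into bilinear expressions in the representations $h$ and $h^*$, and then to use the smallest singular value of $\mathbf{K}=\mathbf{P^*}^{\mathsf{T}}\mathbf{P^*}/T$ to dominate the test-side risk by the training-side risk. My plan has three stages.

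First, I would marginalize out the task-specific linear head. For each training task $\mu_t$, the inner learner $\mathcal{LM}(D_t^{tr};h)$ is an ordinary linear regression of $y$ on the features $h(x)$. Under the realizable linear-representation model underlying Proposition~\ref{prop4} (namely that on task $t$ the label is generated by $\mathbf{w}_t^{*\mathsf{T}}h^*(x)$, with $\mathbf{w}_t^*$ the $t$-th row of $\mathbf{P^*}$), the population excess risk of this inner ERM admits the closed form
\begin{equation*}
R_{\mu_t}(h)-R_{\mu_t}(h^*)=\mathbb{E}_{x\sim\mu_t^q}\bigl[(\mathbf{w}_t(h)^{\mathsf{T}}h(x)-\mathbf{w}_t^{*\mathsf{T}}h^*(x))^2\bigr],
\end{equation*}
where $\mathbf{w}_t(h)$ is the population-optimal linear head on top of $h$. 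Summing over $t$ and expanding the square, one obtains (up to lower-order sample terms already controlled by Theorem~\ref{the1}) that $R_{train}(h)-R_{train}(h^*)=\operatorname{tr}(\mathbf{K}\cdot\Delta(h))$ for a symmetric PSD matrix $\Delta(h)\in\mathbb{R}^{d_L\times d_L}$ measuring the $L^2$ discrepancy between $h$ and $h^*$ in the directions seen by the training heads.

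Second, I would carry out the identical rewriting for a test task $\mu\sim\eta$ with optimal weight $\mathbf{w}^*$, obtaining
\begin{equation*}
R_\eta(h)-R_\eta(h^*)=\mathbb{E}_{\mu\sim\eta}\bigl[\operatorname{tr}(\mathbf{w}^*\mathbf{w}^{*\mathsf{T}}\Delta(h))\bigr]\le M\cdot\mathbb{E}_\mu[\operatorname{tr}(\Delta(h))],
\end{equation*}
using $\|\mathbf{w}^*\|\le M$ and pulling the weight-norm factor out through a Cauchy--Schwarz/trace inequality. Thus $\Delta(h)$ is the common object on both sides. Third --- and this is the decisive step --- since $\mathbf{K}\succeq\sigma_{d_L}(\mathbf{K})\,I_{d_L}$, we have $\operatorname{tr}(\mathbf{K}\,\Delta(h))\ge\sigma_{d_L}(\mathbf{K})\,\operatorname{tr}(\Delta(h))$, so the training-side risk dominates the raw test-side trace up to the factor $\sigma_{d_L}(\mathbf{K})$. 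Combining the two bounds yields
\begin{equation*}
R_\eta(\hat{h})-R_\eta(h^*)\le\tfrac{M}{\sigma_{d_L}(\mathbf{K})}\bigl(R_{train}(\hat{h})-R_{train}(h^*)\bigr),
\end{equation*}
which is exactly Assumption~\ref{assumption4} with $\alpha=M/\sigma_{d_L}(\mathbf{K})$ and $\beta=0$.

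The main obstacle I expect is the coupling between the finite-sample inner ERM $\hat{\mathbf{w}}_t$ and its population counterpart $\mathbf{w}_t(h)$: the clean quadratic identities above are population-level, while $R_{train}$ and $R_\eta$ are expectations taken after the inner sample-based optimization. To conclude $\beta=0$ one must either show that the finite-sample slack of the inner regression is absorbed by the expectations built into $R_{train}$ and $R_\eta$, or that any residual bias is strictly subsumed by the complexity terms already appearing in Theorems~\ref{the1} and \ref{theomtest}. Verifying that no additive constant survives after composing with the singular-value inequality --- in particular, ruling out a scale mismatch between the inner-loop regularization and the matrix $\mathbf{K}$ --- is where I expect the bulk of the technical care to be needed; the remainder is routine matrix-trace manipulation.
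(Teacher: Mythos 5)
Your proposal is correct and follows essentially the same route as the paper: the paper likewise rewrites both excess risks as mean-squared differences between $\tilde{\mathbf{w}}^{\mathsf{T}}\hat{h}(x)$ and $\mathbf{w}^{*\mathsf{T}}h^{*}(x)$, identifies your $\Delta(h)$ explicitly as the generalized Schur complement $\Lambda_S(\hat{h},h^{*})$ via partial minimization of a convex quadratic form, obtains $R_{train}(\hat{h})-R_{train}(h^{*})=\mathrm{tr}(\Lambda_S\mathbf{K})$ and bounds the test side by $M\,\sigma_1(\Lambda_S)$, and then closes with the Von Neumann trace inequality, which plays exactly the role of your $\mathbf{K}\succeq\sigma_{d_L}(\mathbf{K})I_{d_L}$ step. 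The finite-sample inner-ERM issue you flag as the main obstacle is simply sidestepped in the paper by defining the task-specific heads as population minimizers over $\mu^{s}$ (consistent with its exact-ERM remark), so no further technical work is carried out there.
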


It can be seen that $\alpha$ reflects the diversity of training task set. The larger the similarity of task-specific learners is (i.e., the smaller $\sigma_{d_L}(\mathbf{K})$ is), the large $\alpha$ value is. Namely, the larger the similarity of task-specific learners is, the harder the meta-learner could be transferably used to new query tasks.

Now we can compute the leading-order terms in Eq.(\ref{eqtestbound}) for the parameterized meta-learner and learners as defined in Eq.(\ref{eqfewregression}). We firstly show the Rademacher complexity of $\mathcal{H}$ in the following theorem.
\begin{theorem}[Theorem 1 in \citep{golowich2018size}] \label{MLP}
	Let $\mathcal{H}$ be the class of real-valued DNN as defined in Eq.(\ref{eqmlp}) while requiring $d_L = 1$ over $\mathcal{X}=\{\mathbf{x}: \|\mathbf{x}\|\leq R\}$, where each parameter matrix $\mathbf{W}_i$, $i\in[L]$ has Frobenius norm at most $B_i$, and the activation function $\phi_i,i\in[L]$ is 1-Lipschitz, with $\phi_i(0)=0$, and applied element-wise. Then we have:
	\begin{align*}
		\hat{\mathfrak{R}}_{N}(\mathcal{H}) \leq \frac{R \left(\sqrt{2\log(2)L}+1\right)\prod\limits_{i=1}^L B_i}{\sqrt{N}}.
	\end{align*}
\end{theorem}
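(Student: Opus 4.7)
The plan is to follow the exponential moment peeling argument of Golowich, Rakhlin, and Shamir, since the stated bound has the characteristic $\sqrt{L}$ (rather than exponential-in-$L$) dependence that this approach produces. The starting observation is that for any $\lambda > 0$, Jensen's inequality applied to the convex, monotonically increasing map $\exp$ gives
\[
N \cdot \hat{\mathfrak{R}}_N(\mathcal{H}) = \mathbb{E}_{\bm{\epsilon}} \sup_{h \in \mathcal{H}} \sum_{i=1}^N \epsilon_i h(\mathbf{x}_i) \le \frac{1}{\lambda}\log \mathbb{E}_{\bm{\epsilon}} \sup_{h \in \mathcal{H}} \exp\!\Bigl(\lambda \sum_{i=1}^N \epsilon_i h(\mathbf{x}_i)\Bigr).
\]
So it suffices to control this exponential moment and then optimize $\lambda$ at the end.

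The heart of the argument is a single-layer peeling lemma: for any vector-valued class $\mathcal{G}$ of functions feeding into a layer, any element-wise 1-Lipschitz activation $\phi$ with $\phi(0)=0$, and any weight matrix $\mathbf{W}$ with $\|\mathbf{W}\|_F \le B$,
\[
\mathbb{E}_{\bm{\epsilon}} \sup_{g \in \mathcal{G},\,\|\mathbf{W}\|_F \le B} \exp\!\Bigl(\lambda \Bigl\|\sum_{i} \epsilon_i\, \phi(\mathbf{W} g(\mathbf{x}_i))\Bigr\|\Bigr) \le 2\cdot \mathbb{E}_{\bm{\epsilon}} \sup_{g \in \mathcal{G}} \exp\!\Bigl(\lambda B \Bigl\|\sum_i \epsilon_i g(\mathbf{x}_i)\Bigr\|\Bigr).
\]
I would establish this in two steps. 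First, apply a Ledoux--Talagrand-style contraction directly to the exponentiated norm (as a whole, not coordinate-wise): the convexity of $\exp$ together with the 1-Lipschitz, zero-centered property of $\phi$ absorb the activation at the price of a factor of $2$. Second, apply Cauchy--Schwarz to pull $\mathbf{W}$ out of the sum and bound its contribution by $B$ via the Frobenius-norm constraint.

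Iterating the peeling lemma $L$ times, starting from the output layer (where $d_L = 1$, so the outermost "weight matrix" is just a vector $\mathbf{w}_L$ with $\|\mathbf{w}_L\|\le B_L$), telescopes the network one layer at a time and yields
\[
\mathbb{E}_{\bm{\epsilon}} \sup_{h} \exp\!\Bigl(\lambda \sum_i \epsilon_i h(\mathbf{x}_i)\Bigr) \le 2^L\cdot \mathbb{E}_{\bm{\epsilon}} \exp\!\Bigl(\lambda\, {\textstyle\prod_i} B_i \cdot \Bigl\|\sum_i \epsilon_i \mathbf{x}_i\Bigr\|\Bigr).
\]
The remaining expectation involves only the bounded inputs. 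Since $\|\mathbf{x}_i\|\le R$, Jensen gives $\mathbb{E}\|\sum_i \epsilon_i \mathbf{x}_i\|\le R\sqrt{N}$, and a bounded-difference argument shows that $\|\sum_i \epsilon_i \mathbf{x}_i\|$ is sub-Gaussian with parameter $O(R\sqrt{N})$. Combining these yields an upper bound of the form $\exp(\lambda R\sqrt{N}\prod_i B_i + \tfrac{1}{2}\lambda^2 R^2 N\prod_i B_i^2)$. Substituting back, taking logarithms, dividing by $\lambda N$, and optimizing $\lambda$ recovers the stated constants $\sqrt{2 L \log 2} + 1$.

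The main obstacle is the peeling lemma itself. The naive contraction argument applied coordinate-wise to the vector-valued output of each layer would accumulate a multiplicative factor proportional to that layer's width, destroying the width-independent bound and degrading $\sqrt{L}$ to $2^L$. Avoiding this requires applying contraction to the exponentiated norm as a single scalar quantity, which is the technical innovation of Golowich et al., and it is the step I would work out most carefully; the remaining calculations (Cauchy--Schwarz in the weight-matrix step, sub-Gaussian tail for the Rademacher sum, and the final optimization in $\lambda$) are standard.
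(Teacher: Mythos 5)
The paper itself does not prove this statement: it is imported verbatim (as ``Theorem 1 in \citep{golowich2018size}'') and used as a black box, so the only meaningful comparison is with the original Golowich--Rakhlin--Shamir argument, and your overall route --- exponential moment via Jensen, layer-by-layer peeling at a multiplicative cost of $2$ per layer, sub-Gaussian control of $\|\sum_i\epsilon_i\mathbf{x}_i\|$, and optimization over $\lambda$ yielding the $\sqrt{2\log(2)L}+1$ constant --- is exactly that proof's skeleton, and your final-step calculation is right.

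There is, however, a genuine gap in the way you propose to establish the peeling lemma, which you yourself identify as the crux. You cannot apply the Ledoux--Talagrand contraction ``directly to the exponentiated norm as a whole'': that theorem concerns suprema of scalar sums $\sum_i\epsilon_i\varphi_i(t_i)$, not the Euclidean norm of a vector-valued Rademacher sum with an element-wise nonlinearity inside, and following it with Cauchy--Schwarz would not extract a width-free factor $B$. The actual argument needs an extra hypothesis and a different ordering: Golowich et al.\ assume the activation is \emph{positive-homogeneous} (e.g.\ ReLU) --- a condition hidden by the looser ``$1$-Lipschitz, $\phi(0)=0$'' restatement in this paper --- and use homogeneity first, to show that $\sup_{\|\mathbf{W}\|_F\le B}\bigl\|\sum_i\epsilon_i\phi(\mathbf{W}g(\mathbf{x}_i))\bigr\|$ is attained at a rank-one matrix, collapsing the layer to the scalar quantity $B\sup_{\|\mathbf{w}\|\le 1}\bigl|\sum_i\epsilon_i\phi(\mathbf{w}^{\mathsf{T}}g(\mathbf{x}_i))\bigr|$. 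Only then do the steps you describe become legitimate: $\exp(|z|)\le\exp(z)+\exp(-z)$ and sign-symmetry give the factor $2$, the one-sided scalar contraction (for the convex increasing map $\exp(\lambda B\,\cdot)$) removes $\phi$, and Cauchy--Schwarz bounds $\mathbf{w}^{\mathsf{T}}\sum_i\epsilon_i g(\mathbf{x}_i)$ by $\bigl\|\sum_i\epsilon_i g(\mathbf{x}_i)\bigr\|$. Without the homogeneity-based rank-one reduction, the lemma as you state it --- for arbitrary $1$-Lipschitz activations vanishing at zero --- is not established by this technique, and vector-contraction workarounds would reintroduce width- or per-layer losses that destroy the stated bound; since the paper instantiates $\phi$ as ReLU, the fix is simply to invoke (and use) positive homogeneity where your sketch currently relies on contraction alone.
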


(1) For the Gaussian complexity of meta-learner $h$:
\begin{align} \label{ss}
	\begin{split}
		\hat{\mathcal{G}}_{D^{val}}(\mathcal{H})  &= \mathbb{E} \sup_{h\in \mathcal{H}} \frac{1}{nT} \sum_{t=1}^{T} \sum_{j=1}^{n}\sum_{k=1}^{d_L} g_{tjk} h_k(x_{tj}^{(q)}) \\
		&\leq\sum_{k=1}^{d_L}  \hat{\mathcal{G}}_{D^{val}}(h_k)  \leq 2\sqrt{\log(nT)}\sum_{k=1}^{d_L}  \hat{\mathfrak{R}}_{D^{val}}(h_k) \\
		&\leq 2d_L \sqrt{\log(nT)}  \cdot \frac{R \left(\sqrt{2\log(2)L}+1\right)\prod\limits_{i=1}^L B_i}{\sqrt{nT}},
	\end{split}
\end{align}
where the second inequality holds since $\hat{\mathcal{G}}_{\mathbf{X}}(\mathcal{H}) \leq 2 \sqrt{\log(N)} \hat{\mathfrak{R}}_{\mathbf{X}}(\mathcal{H})$ for any function class $\mathcal{H}$ when $\mathbf{X}$ has $N$ data points \citep{ledoux2013probability}.

(2) For the Gaussian complexity of the task-specific learner:
\begin{align}
	\begin{split}
		&\hat{\mathcal{G}}_{D_t^{val}}(\mathcal{F})= \mathbb{E} \sup_{\mathbf{w}\in \mathcal{F}} \frac{1}{m} \sum_{i=1}^{m}g_{ti} \mathbf{w}^{\mathsf{T}} h(x_{ti}^{(s)})  \\
		&\leq \frac{\|\mathbf{w}\|}{m} \mathbb{E} \left\|\sum_{i=1}^{m}g_{ti} h(x_{ti}^{(s)}) \right\| \leq \frac{\|\mathbf{w}\|}{m}\sqrt{\sum_{i=1}^{m} \left\| h(x_{ti}^{(s)})\right\|^2 }\\
		& \leq \frac{\|\mathbf{w}\|}{\sqrt{m}} \cdot \max_{x_{ti}^{(s)} \in D_t^{tr}} \|h(x_{ti}^{(s)} )\|.
	\end{split}
\end{align}

Thus, the transfer error defined in Eq.(\ref{eqtestbound}) now can be written as
\begin{align}\label{fewtest}
	\begin{split}
		&  \ \ \ \ \ \ \ \ R_{test}(\hat{f}_{\mu},\hat{h}) - R_{test}(f^*_{\mu},h^*) \\
		& \ \ \ \ \leq \frac{M}{\sigma_{d_L}(\mathbf{K})} \left( 768L \log(4nT) L(\mathcal{F}) 2d_L \sqrt{\log(nT)}  \frac{R \left(\sqrt{2\log(2)L}+1\right)\prod\limits_{i=1}^L B_i}{\sqrt{nT}} \right. \\&\phantom{=\;\;}\left.
		+ \frac{6L\|\mathbf{w}\|}{\sqrt{m}T} \!\sum_{t=1}^T \!\max_{x_{ti}^{(s)} \!\in\! D_t^{val}} \|h(x_{ti}^{(s)} )\| + 6B\! \sqrt{\frac{\log \frac{2}{\delta}}{2nT}}  + 6B\!\sqrt{\frac{\log \frac{2}{\delta}}{m}} + \frac{48L\sup_{h,x} M\|h(x)\|}{n^2T^2}\right) \\
		&\ \ \ \ \ + \frac{6L\|\mathbf{w}\|}{\sqrt{m_{\mu}}} \cdot \max_{x_{i}^{(s)} \in D_{\mu}^{val}} \|h(x_{i}^{(s)} )\|+ 6B\sqrt{\frac{\log \frac{2}{\delta}}{m_{\mu}}},
	\end{split}
\end{align}
where
$\sigma_1(\mathbf{X}), \cdots,\sigma_r(\mathbf{X})$ denote the singular values of a rank $r$ matrix $\mathbf{X}$ at a decreasing order.
Note that we assume that there exists no distribution shift between the training and validation sets, and thus the term $d_{\mathcal{F}} (D_t^{(tr)},D_t^{(val)})$ is zero and can be omitted.

\subsection{Theory-inspired Meta-Regularization} \label{regular}
It can be seen that the complexity of the meta-learner is normal without extra term needed to be restricted, while the complexity of task-specific meta-learner has additional term $\max_{x_{ti}^{(s)} \in D_t^{val}} \|h(x_{ti}^{(s)} )\|$, which is dependent on the output range of the meta-learner. The transfer error bound in Eq.(\ref{fewtest}) can then naturally inspire the following three controlling strategies for improving the generalization capability of the extracted meta-learner $h$.

(\romannumeral 1) Control the output range of the meta-learner $h$. Conventional models usually set all activation functions of meta-learner easily as ReLU. If we revise the last activation function as Tanh (i.e., $\phi_{L}=\frac{e^z-e^{-z}}{e^z+e^{-z}}$), then we have
\begin{align*}
	\| h(\mathbf{x})\| = \| \mathbf{r}_L(\mathbf{x})\|  = \|\phi_L (\mathbf{W}_{L} \mathbf{r}_{L-1}(\mathbf{x}))\| \leq \sqrt{d_L}.
\end{align*}
Generally, $\sqrt{d_L}$ is smaller than $D \prod_{k=1}^{L} \|\mathbf{W}_k\|_F$, and the complexity can thus be expected to substantially decrease, and the generalization of the calculated meta-learner is hopeful to be improved (more analysis and discussion could be found in Appendix \ref{tanh}).

(\romannumeral 2) Minimize $\|\mathbf{w}\|$ of the learners. The terms $\frac{3L\|\mathbf{w}\|}{\sqrt{m}T}\sum_{t=1}^T \max_{x_{ti}^{(s)} \in D_t^{val}} \|h(x_{ti}^{(s)} )\|$ and $\frac{3L\|\mathbf{w}\|}{\sqrt{m_{\mu}}}$ imply that minimizing the $\|\mathbf{w}\|$ also tends to decrease the transfer error in Eq.(\ref{fewtest}).

(\romannumeral 3) Maximize the $\sigma_{d_L}(\mathbf{K})$. The term $\frac{M}{\sigma_{d_L}(\mathbf{K})}$ accounts for the gravity of the meta-training error influencing the final transfer error in Eq.(\ref{fewtest}), and thus maximizing the $\sigma_{d_L}(\mathbf{K})$ also inclines to reduce the transfer error.

It should be emphasized that the above training strategy (\romannumeral 1) corresponds to a meta-regularized control manner imposed on the meta-learner, while the training strategies (\romannumeral 2) and (\romannumeral 3) put controls on the task-specific learners. And they could lead to different training behaviors as can be observed in the next subsection.

The training strategy (\romannumeral 1) is easy to be implemented  by directly replacing the last activation function of learners from conventional ReLU to Tanh function.
The training strategy (\romannumeral 2) can be achieved by adding a $\|\mathbf{w}\|$ regularizer into the meta-training objective in Eq. (\ref{ffew}) as:
\begin{align*}
	\begin{split}
		&\bm{W}^* = \arg\min_{\bm{W}} \frac{1}{nT}\sum_{t=1}^T \sum_{j=1}^n \ell(\mathbf{w}_t^{*\mathsf{T}} h(\mathbf{x}_{tj}^{(q)}),y_{tj}^{(q)}) \\
		s.t.,\  &\mathbf{w}_t^{*} = \arg\min_{\mathbf{w}_t} \frac{1}{m}\sum_{i=1}^m \ell(\mathbf{w}_t^{\mathsf{T}} h(\mathbf{x}_{ti}^{(s)}),y_{ti}^{(s)}) + \lambda_1\|\mathbf{w}_t\|^2, t\in[T].
	\end{split}
\end{align*}
In the meta-test stage, it solves the following objective for a new query task given $\hat{h}$:
\begin{align*}
	\mathbf{w}^{*} = \arg\min_{\mathbf{w}} \frac{1}{M_0}\sum_{i=1}^{M_0} \ell(\mathbf{w}^{\mathsf{T}} \hat{h}(\mathbf{x}_{i}^{(s)}),y_{i}^{(s)}) + \lambda_2\|\mathbf{w}\|^2.
\end{align*}
The training strategy (\romannumeral 3) can be realized by solving the following meta-training objective:
\begin{align*}
	\begin{split}
		&{\bm{W}}^* = \arg\min_{\bm{W}} \frac{1}{nT}\sum_{t=1}^T \sum_{j=1}^n \ell(\mathbf{w}_t^{*\mathsf{T}} h(\mathbf{x}_{tj}^{(q)}),y_{tj}^{(q)}) \\
		s.t.,\  &\mathbf{P}^* = \arg\min_{\mathbf{P}} \frac{1}{mT}\sum_{t=1}^T\sum_{i=1}^m \ell(\mathbf{w}_t^{\mathsf{T}} h(\mathbf{x}_{ti}^{(s)}),y_{ti}^{(s)})- \lambda_3 \sigma_{d_L}(\mathbf{P}^{\mathsf{T}} \mathbf{P}/T).
	\end{split}
\end{align*}
And the $\lambda_1,\lambda_2,\lambda_3$ are the hyper-paramters of the above regularization problems.

\begin{figure*}[t]\vspace{-2mm}
	\centering
	\subfigure[$m=1,n=5$]{\includegraphics[width=1\textwidth]{./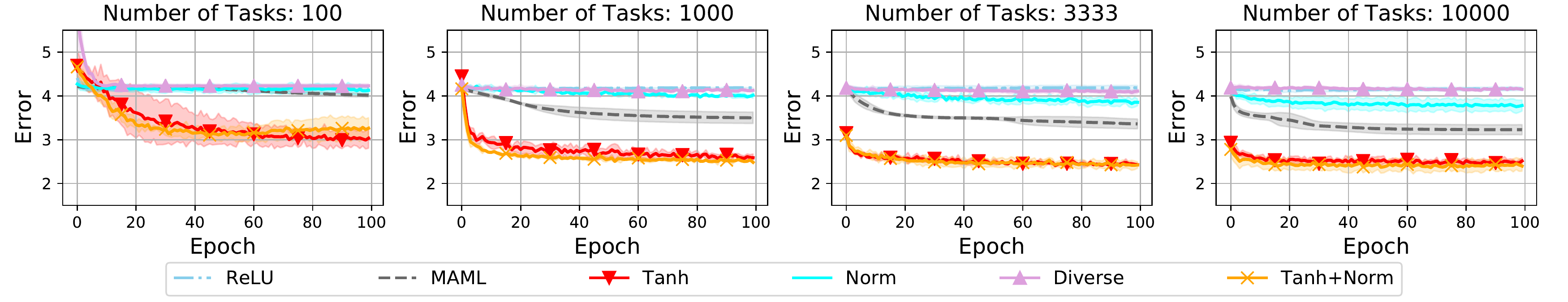} } \\ \vspace{-3mm}
	\subfigure[$m=5,n=1$]{\includegraphics[width=1\textwidth]{./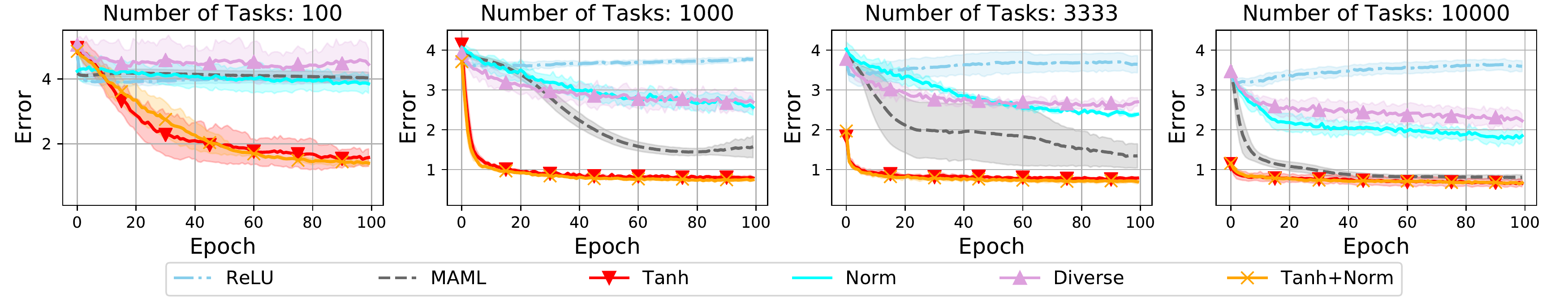} }\\ \vspace{-3mm}
	\subfigure[$m=5,n=5$]{\includegraphics[width=1\textwidth]{./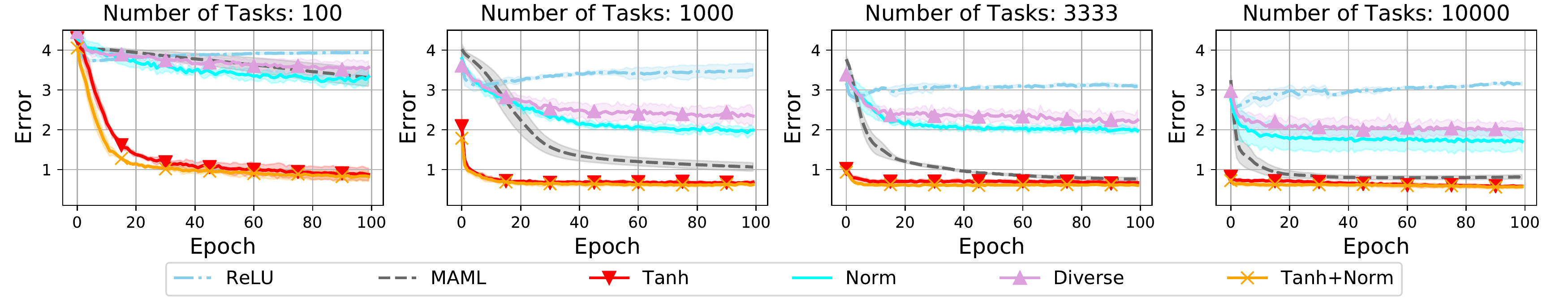} }
	\vspace{-4mm}\caption{Transfer error changing curves of different training strategies with $T=100,1000,3333,10000$ tasks and different $m,n$ values.
		The variance of each curve over 3 independent runs has also depicted along the curve to show the stability of each method.
	}\label{fig-5-5} \vspace{-5mm}
\end{figure*}


\subsection{Numerical Experiments}\label{numerical}
In this section, we test the effectiveness of the theory-induced training strategies on few-shot regression benchmark.
We follow the experimental setting of MAML \citep{finn2017model}. Each task involves regression from the input to the output of a sine wave, where the amplitude and phase of the sinusoid are varied among tasks. Thus the problem aims to approximate a family of sine functions $f(x)=\alpha\sin(\beta x)$. The task distribution $\eta$ is the joint distribution $p(\alpha,\beta)$ of the amplitude parameter $\alpha$ and the phase parameter $\beta$. We set $p(\alpha) = U[0.1,5]$ and $p(\beta)=U[0,\pi]$. All the meta-training and meta-test tasks are randomly generated from the task distribution $p(\alpha,\beta)$. The function class $\mathcal{H}$ is set as an MLP with two hidden layers of size 40 with ReLU activation function, and $\mathcal{F}$ is a linear layer with bias False. Both the input and the output layers have dimensionality 1. The loss is the mean-squared error between the prediction $\mathbf{w}^{\mathsf{T}} h(x)$ and true value.
The transfer error is averaged over 600 meta-test tasks with varying shots and queries. It uses episodic training strategy for meta-training,
i.e., the meta-algorithm observes a set of training tasks sequentially and applies stochastic gradient descent with one task per batch.


We implement the MAML \citep{finn2017model} as the baseline method in Eq.(\ref{ffew})  solved by Bilevel Programming \cite{franceschi2018bilevel}.  We denote the later by ``ReLU'', since the last activation function of the meta-learner is ReLU.
The `ReLU' and `MAML' are implemented based on the code located at \url{https://github.com/jiaxinchen666/meta-theory} released by the paper \citep{chen2020closer}. And `Tanh', `Norm', `Diverse' and `Tanh+Norm' denote the training strategies (\romannumeral 1), (\romannumeral 2), (\romannumeral 3), and  both (\romannumeral 1), (\romannumeral 2) applied to the baseline method, respectively.
We set $\lambda_1=\lambda_2=1$ for training strategy (\romannumeral 2) and $\lambda_3=10$ for training strategy (\romannumeral 3). The task-specific optimizer is set as Adam optimizer with learning rate 0.01, and the meta-optimizer is set as Adam optimizer with learning rate 0.001. The other hyper-parameters of task-specific optimizer and meta-optimizer are the default settings of Adam optimizer.

Fig.\ref{fig-5-5} shows the transfer error of different training strategies with varied numbers of training tasks, support samples and query samples.
It can be seen that the proposed training strategies can help consistently improve the performance of the baseline method on different meta-training tasks or support/query samples. As shown, training strategy (\romannumeral 1) achieves an evidently larger improvement compared with (\romannumeral 2), (\romannumeral 3). Furthermore, we combine training strategies (\romannumeral 1) and (\romannumeral 2), hoping to achieve further improvement compared with only one training strategy (\romannumeral 1). However, there exists an unsubstantial improvement when using both training strategies (\romannumeral 1) and (\romannumeral 2).
This implies that a proper meta-regularization technique for meta-learner can bring more essential improvements compared with the regularization techniques for task-specific learners. Specifically, when $m$ is small ($m=1$), the training strategies (\romannumeral 2), (\romannumeral 3) bring little improvement compared with `ReLU', while training strategy (\romannumeral 1) produces consistent improvement with different training tasks and $m,n$ values.

Note that the training strategies (\romannumeral 2) and (\romannumeral 3) put controls on the task-specific learners, which are important to improve the performance for the given training tasks. While the training strategy (\romannumeral 1) adds a meta-regularized control imposed on the meta-learner, which is verified to be more important to improve the performance for transferring to the new query tasks. Such phenomenon is rational and has been observed comprehensively in our experiments. We thus will only study the control strategies for the meta-learner in the latter sections.

\section{Application \uppercase\expandafter{\romannumeral 2}: Few-shot Classification}   \label{classification}

In this section, we instantiate our meta learning framework for the few-shot classification problem.

\subsection{Basic Setting}
For the few-shot classification issue, usually one considers the $K$-way $N$-shot setting, in which each task contains $NK$ examples from $K$ classes with $N$ examples for each class. Thus the task-specific predictor machine $\mathcal{F}$ is often a $K$-class linear classifier. Let $\mathcal{X}=\mathbb{R}^d, \mathcal{Y}=\{0,1,\cdots, K-1\}$ and $\|x\| \leq R, \forall x \in \mathcal{X}$.
The output of the meta-learner is the feature representation of each sample, and the meta learning machine $\mathcal{H}$ is parameterized as a depth-$L$ vector-valued convolutional neural network (CNN) to extract the common feature representation for various classification tasks. Concretely, $h(x)$ can be written as:
\begin{align}\label{eqcnn}
	h(x) = \phi_{L}(\mathbf{W}_{L}(\phi_{L-1}(\mathbf{W}_{L-1}\cdots \phi_1 (\mathbf{W}_1 x)))),
\end{align}
where $\mathbf{W}_k,k\in [L]$ is the parameter matrix, and $\phi_k, k\in[L]$ is the 1-Lipschitz activation function. Formally, we additionally require both the norm of the weight matrix of each layer and the distance between the weights and the starting point weights are bounded, i.e.,
\begin{align} \label{eqss}
	\|\mathbf{W}_j\|_F \leq B_j, \|\mathbf{W}_j^0\|_F \leq B_j, \|\mathbf{W}_j - \mathbf{W}_j^0\|_F \leq D_j, \ \ j\in [L],
\end{align}
where $\mathbf{W}_j^0$ is the initialized parameter matrix of the model $h$. The distance to initialization has been observed to have substantial influence to generalization in deep learning \citep{bartlett2017spectrally,neyshabur2018role,nagarajan2019generalization}, and we introduce it here to develop the control strategy of the meta-learner.
Specifically, $\mathcal{F}$ and $\mathcal{H}$ are chosen as:
\begin{align}\label{eqfewclassification}
	\begin{split}
		\mathcal{F}&=\{f | f(z)= \bm{A}^{\mathsf{T}} z, \bm{A}\in \mathbb{R}^{d_L\times K}, \|\bm{A}\| \leq M  \},\\
		\mathcal{H} &= \{h|h(x) \in \mathbb{R}^{d_L} \text{ as defined in (\ref{eqcnn})}, x \in \mathcal{X}\}.
	\end{split}
\end{align}
The conditional distribution of classification is
\begin{align}\label{distribution}
	P(y=k|f (h(x))) = \frac{\exp(a_k)}{\sum_{j} \exp(a_j)} := \text{Softmax}(f(h(x))), a_k = (f(h(x)))_k, k\in [K].
\end{align}
The loss functions $\ell$ is chosen as the cross-entropy loss $\ell(\bm{A}^{\mathsf{T}} h(x),y)= -\sum_{k=1}^K y_k \log(a_k)$, where $y$ is the one-hot encoding of the ground-truth label, and $a_k$ is defined as in Eq. (\ref{distribution}).

Following the setting in \citep{finn2017model,lee2019metaopt}, we assume that there are $T$ tasks $\Gamma = \{D_t\}_{t=1}^T$ available for learning, and $D_t =( D_t^{tr},D_t^{val})$, where $D_t^{tr} = \{z_{ti}^{(s)}\}_{i=1}^{m}, D_t^{val} = \{z_{tj}^{(q)}\}_{j=1}^{n}$.
We set identical sample numbers for training/validation data sets for each task.
The few-shot classification model is then written as
\begin{align}\label{ffewcls}
	\begin{split}
		&\bm{W}^* = \arg\min_{\bm{W}} \frac{1}{nT}\sum_{t=1}^T \sum_{j=1}^n \ell(\bm{A}_t^{*\mathsf{T}} h(x_{tj}^{(q)}),y_{tj}^{(q)}) \\
		s.t.,\  &\bm{A}_t^{*} = \arg\min_{\bm{A}_t} \frac{1}{m}\sum_{i=1}^m \ell(\bm{A}_t^{T} h(x_{ti}^{(s)}),y_{ti}^{(s)}), t\in [T],
	\end{split}
\end{align}
where $\bm{W} = \{\mathbf{W}_{k}, k\in [L]\}$ is the collection of the parameter matrices of $h$, and $\bm{A}, h$ is chosen from $\mathcal{F},\mathcal{H}$ defined in Eq. (\ref{eqfewclassification}). We denote $\mathbf{Q} = (\bm{A}_1,\cdots,\bm{A}_T)^{\mathsf{T}} \in \mathbb{R}^{T\times d_L \times K }$, and $\mathbf{Q^*} = (\bm{A}^*_1,\cdots,\bm{A}^*_T)^{\mathsf{T}} \in \mathbb{R}^{T\times d_L \times K }$ as its theoretical optimal solution.

\subsection{Theoretical Analysis}\label{theory2}
Here, we will instantiate the general Theorem \ref{theomtest} for few-shot classification model defined in Eq. (\ref{ffewcls}).
The $Dis(D^{val})$ can be computed as
\begin{align*}
	Dis(D^{val}) &= \sup_{h,h'} \rho_{2,D^{val}} (\mathbf{f}^{(h)},\mathbf{f}^{(h')}) \leq 4 \sup_{h,x \in \mathcal{X}} \|\bm{A}^{\mathsf{T}} h(x)\|_2\\
	& \leq \sup_{h,x} 4M\|h(x)\| \leq 4MD \cdot \prod_{k=1}^{L} \|\mathbf{W}_k\|_F.
\end{align*}
Now we can verify that Assumptions \ref{assumption1} - \ref{assumption4} hold. The following proposition implies the Lipschitz continuity for the cross-entropy function.
\begin{proposition} \label{prop2}
	The loss function $\ell(f(h(x)),y)$ is 1-Lipschitz with respect to $f(h(x))$, where $\ell$ is cross-entropy loss.
\end{proposition}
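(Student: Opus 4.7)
The plan is to verify the Lipschitz bound directly through the gradient of the composed softmax/cross-entropy map, since for a differentiable convex function Lipschitz continuity in a given norm is equivalent to a uniform bound on the gradient in the dual norm. Setting $z = f(h(x)) \in \mathbb{R}^K$ as the logit vector, the natural first step is the standard reformulation
\begin{equation*}
\ell(z,y) \;=\; -\sum_{k=1}^{K} y_k \log\!\Bigl(\tfrac{e^{z_k}}{\sum_j e^{z_j}}\Bigr) \;=\; -\langle y,z\rangle + \log\sum_{j=1}^{K} e^{z_j},
\end{equation*}
which separates a linear term in the one-hot label from the log-partition function. This makes the subsequent differentiation immediate and avoids carrying around the softmax nonlinearity explicitly.

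Next, I would compute $\nabla_z \ell(z,y)$. The gradient of the log-sum-exp term is exactly the softmax, so
\begin{equation*}
\nabla_z \ell(z,y) \;=\; \mathrm{Softmax}(z) - y \;=\; p - y,
\end{equation*}
where $p \in \Delta^{K-1}$ lies in the probability simplex. This is the standard identity that makes cross-entropy particularly tractable; the estimated label probabilities minus the ground-truth one-hot encoding is precisely the per-logit error.

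The final step is a clean componentwise bound on $p - y$. Because $y$ is one-hot, say $y = e_c$, each coordinate satisfies $p_k - y_k \in [-1,0]$ when $k = c$ and $p_k - y_k \in [0,1]$ otherwise, so $|p_k - y_k| \leq 1$ for every $k$. Hence $\|\nabla_z \ell(z,y)\|_\infty \leq 1$, which by duality yields $|\ell(z,y) - \ell(z',y)| \leq \|z-z'\|_1$, i.e.\ 1-Lipschitzness of $\ell(\cdot,y)$ on the logit space. Combined with the earlier identification $z = f(h(x))$, this is precisely the claim.

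The only subtle point, and what I would expect to be the main clarification rather than a technical obstacle, is the choice of norm. The componentwise bound $|p_k - y_k| \leq 1$ is sharp and immediately gives the 1-Lipschitz constant against $\|\cdot\|_1$ on the logits; if instead the Euclidean norm is intended, the attained Lipschitz constant becomes $\sqrt{2}$ (saturated in the limit $p \to e_j$ for some $j \neq c$), in which case the constants appearing later in the instantiation of Theorem \ref{theomtest} for few-shot classification would pick up a harmless $\sqrt{2}$ factor. Either way, the chain-rule composition with $f \circ h$ carries through unchanged, and the boundedness of $\ell$ needed for Assumption \ref{assumption2} can be read off similarly from $0 \leq \ell(z,y) \leq \log K + \max_k |z_k|$ together with the norm bounds in Eq.\ (\ref{eqss}).
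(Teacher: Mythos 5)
Your proposal is correct and follows essentially the same route as the paper: compute the gradient of the cross-entropy with respect to the logits $z=f(h(x))$, identify it as $\mathrm{Softmax}(z)-y$, and bound it componentwise by $1$ (the paper does this via an explicit chain rule through the softmax probabilities rather than the log-sum-exp rewriting, but the content is identical). Your caveat about the norm is apt, since the paper's own argument likewise only yields the coordinatewise (i.e.\ $\ell_\infty$-gradient) bound, so the stated constant is exact for the $\ell_1$ norm on logits and only correct up to a factor $\sqrt{2}$ for the Euclidean norm used elsewhere in the paper.
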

According to proposition \ref{prop2}, we have
\begin{align*}
	\ell(f(h(x)),y)| \leq \|f(h(x))\| = \|\bm{A}^{\mathsf{T}} h(x)\| \leq M \|h(x)\| \leq M D \prod_{k=1}^{L} \|\mathbf{W}_k\|_F,
\end{align*}
and thus the loss function is bounded.
The following result then verifies Assumption \ref{assumption4}:
\begin{proposition}\label{prop3}
	Consider the few-shot classification model defined in Eq.(\ref{ffewcls}), and the loss function $\ell(\cdot,\cdot)$ chosen as the cross-entropy loss. The meta learning machine $\mathcal{H}$ and task-specific predictor machine are designed as in Eq.(\ref{eqfewclassification}). Then the $\alpha,\beta$ in Assumption \ref{assumption4} are $ \frac{M}{\sum_{k=1}^K  \sigma_{d_L}((\mathbf{K})_k)}$ and $0$, where $(\mathbf{K})_k=(\mathbf{Q^*})_k^{\mathsf{T}}(\mathbf{Q^*})_k/T$, and $(\cdot)_k$ denote the $k$-th element, $\sigma_{d_L}(\mathbf{K}_k)$ denote the $d_L$-th singular value of matrix $\mathbf{K}_k$ at a decreasing order
\end{proposition}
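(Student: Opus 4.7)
The plan is to parallel the proof of Proposition~\ref{prop4} (few-shot regression), modified to accommodate the matrix-valued task-specific classifier $\bm{A}_t \in \mathbb{R}^{d_L\times K}$ and the cross-entropy loss. The aim is to sandwich the query-average excess risk $R_\eta(\hat h) - R_\eta(h^\ast)$ above and the training-task-average excess risk $R_{train}(\hat h) - R_{train}(h^\ast)$ below by the same functional of the feature gap $\hat h - h^\ast$, with the upper side picking up a factor of $M$ (from $\|\bm{A}\|\le M$) and the lower side picking up the spectral quantity $\sum_{k=1}^K \sigma_{d_L}((\mathbf{K})_k)$; the ratio of the two produces $\alpha$, and because both sides are homogeneous in $\hat h - h^\ast$ one obtains $\beta = 0$.

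For the query side, I would first apply the $1$-Lipschitz property of cross-entropy in the logit vector established in Proposition~\ref{prop2}. Fixing a task $\mu$ with optimal classifier $\bm{A}_\mu^\ast(h)$ under feature $h$, the ``switch'' step that exploits the optimality of $\bm{A}_\mu^\ast(\hat h)$ on $\hat h$ yields
\begin{align*}
\mathbb{E}_{(x,y)\sim\mu^q}\!\bigl[\ell(\bm{A}_\mu^\ast(\hat h)^{\mathsf T}\hat h(x),y) - \ell(\bm{A}_\mu^\ast(h^\ast)^{\mathsf T} h^\ast(x),y)\bigr] \le M\,\mathbb{E}_x\|\hat h(x) - h^\ast(x)\|,
\end{align*}
after bounding $\|\bm{A}_\mu^\ast(h^\ast)\|\le M$. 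Taking expectation over $\mu\sim\eta$ produces $R_\eta(\hat h) - R_\eta(h^\ast) \le M\,\mathbb{E}_\mu\mathbb{E}_x\|\hat h(x) - h^\ast(x)\|$.

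For the training side, I would expand the excess risk class by class. Writing each logit as $((\bm a_t^\ast)_k)^{\mathsf T} h(x)$ and applying the spectral inequality
\begin{align*}
\frac{1}{T}\sum_{t=1}^{T}\bigl(((\bm a_t^\ast)_k)^{\mathsf T} v\bigr)^{2} = v^{\mathsf T} (\mathbf{K})_k\, v \ge \sigma_{d_L}((\mathbf{K})_k)\,\|v\|^{2}
\end{align*}
to $v = \hat h(x) - h^\ast(x)$ and summing over $k\in[K]$, a suitably linearized form of $R_{train}(\hat h) - R_{train}(h^\ast)$ can be lower bounded by $\bigl(\sum_{k=1}^K \sigma_{d_L}((\mathbf{K})_k)\bigr)\,\mathbb{E}_x\|\hat h(x) - h^\ast(x)\|$. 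Dividing the two bounds yields the asserted inequality with $\alpha = M/\sum_{k=1}^K \sigma_{d_L}((\mathbf{K})_k)$ and $\beta = 0$, mirroring the single-class denominator $\sigma_{d_L}(\mathbf{K})$ of Proposition~\ref{prop4}.

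The main obstacle is the mismatch between the Lipschitz-type (first-moment) upper bound on the query side and the quadratic spectral lower bound on the training side: unlike the squared-loss regression setting, where both sides of the sandwich emerge naturally quadratic in $\hat h - h^\ast$, here one must either appeal to local strong convexity of the softmax--cross-entropy composition to quadratically lower bound the loss gap, or downgrade the spectral estimate to a first-moment form via an additional Cauchy--Schwarz step so that the homogeneity degrees match. The per-class decomposition that yields the denominator $\sum_{k=1}^K \sigma_{d_L}((\mathbf{K})_k)$ rather than the smallest singular value of a stacked $Kd_L\times Kd_L$ Gram matrix also requires careful bookkeeping, since a naive stacking of columns across classes would sacrifice the additive structure that makes the classification denominator strictly larger than its regression counterpart.
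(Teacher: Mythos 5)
There is a genuine gap, and it sits exactly where you placed your sandwich: you bound both sides by the raw feature discrepancy $\mathbb{E}_x\|\hat h(x)-h^*(x)\|$, but the training-side lower bound in that form is false. The excess training risk for task $t$ is, after any strong-convexity step, controlled by $\inf_{\bm{A}_t}\mathbb{E}_x\|\bm{A}_t^{\mathsf T}\hat h(x)-\bm{A}_t^{*\mathsf T}h^*(x)\|^2$ — the per-task classifier is re-optimized for $\hat h$ — so it measures only the component of $h^*$'s representation that cannot be linearly reconstructed from $\hat h$'s. If, say, $\hat h$ is an invertible linear reparametrization of $h^*$, this quantity is zero while $\mathbb{E}_x\|\hat h(x)-h^*(x)\|>0$, so your spectral inequality applied to $v=\hat h(x)-h^*(x)$ cannot lower bound $R_{train}(\hat h)-R_{train}(h^*)$. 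The paper avoids this by carrying out the partial minimization over $\bm{A}_t$ explicitly, which replaces the raw gap by the generalized Schur complement $\Lambda_S(\hat h,h^*)=\mathbf{G}_{h^*h^*}-\mathbf{G}_{h^*\hat h}(\mathbf{G}_{\hat h\hat h})^{\dagger}\mathbf{G}_{\hat h h^*}$ of the joint feature Gram matrix; the query side is then written as a sup over $\|\bm{A}'\|\le M$ of the same Schur-complement quadratic form, giving $M\sigma_1(\Lambda_S)$, while the training side becomes $\sum_{k}\mathrm{tr}(\Lambda_S(\mathbf{K})_k)\ge \sigma_1(\Lambda_S)\sum_{k}\sigma_{d_L}((\mathbf{K})_k)$ by the Von Neumann trace inequality. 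The factor $\sigma_1(\Lambda_S)$ cancels in the ratio, which is what produces $\alpha=M/\sum_{k}\sigma_{d_L}((\mathbf{K})_k)$ and $\beta=0$; your formulation has no analogous cancellation structure.

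The homogeneity mismatch you flag is real, and the paper resolves it by the route you only name as an option: a generalized-linear-model lemma showing that, under the softmax conditional likelihood of Eq.~(\ref{distribution}), the expected cross-entropy gap equals a KL divergence that is lower bounded by a quadratic form $C\|\hat{\bm\eta}-\bm\eta\|_2^2$ in the logits (Taylor expansion of the log-partition function), so the training side becomes genuinely quadratic and matches the quadratic query-side bound obtained via Proposition~\ref{prop2}. Your alternative of downgrading the spectral estimate to a first-moment form via Cauchy--Schwarz does not work in the needed direction ($\mathbb{E}\|v\|\le\sqrt{\mathbb{E}\|v\|^2}$ gives no lower bound on the training side relative to the first moment), so the strong-convexity step is not optional here. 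Your per-class bookkeeping intuition — summing $\sigma_{d_L}((\mathbf{K})_k)$ over $k$ rather than stacking into one large Gram matrix — does match the paper's treatment, but it must be applied to $\Lambda_S$, not to $\hat h-h^*$.
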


It should be indicated that the value of $\alpha$ reflects the similarity of task-specific learners for given training task set. And minimizing $\alpha$ attempts to increase the diversity of learned task-specific learners, which can cover the space as much as possible captured by the $h$ needed to be predicted on new tasks, and thus help improve the generalization ability of the extracted meta-learner for new query tasks.

Now we can compute the leading-order terms in Eq.(\ref{eqtestbound}) for the parameterized meta-learner and learners defined in Eq.(\ref{eqfewclassification}). We firstly show the Rademacher complexity of $\mathcal{H}$ in the following theorem.
\begin{theorem}[Theorem 2 in \citep{Gouk2021metaopt}] \label{CNN}
	Let $\mathcal{H}$ be the class of real-valued DNN as defined in Eq.(\ref{eqcnn}) and (\ref{eqss}) over $\mathcal{X}=\{\mathbf{x}: \|\mathbf{x}\|\leq R\}$, where each parameter matrix $\mathbf{W}_i$, $i\in[L]$ has Frobenius norm at most $B_i$, and the activation function $\phi_i,i\in[L]$ is 1-Lipschitz, with $\phi_i(0)=0$, and applied element-wise. Then we have:
	\begin{align*}
		\hat{\mathfrak{R}}_{N}(\mathcal{H}) \leq \frac{2\sqrt{2} d_L R \sum_{j=1}^L \frac{D_j}{2B_j \prod_{i=1}^j \sqrt{c_i} } \prod_{j=1}^L 2B_j \sqrt{c_j} }{\sqrt{N}},
	\end{align*}
	where $c_i$ is the number of columns in $\mathbf{W}_i$.
\end{theorem}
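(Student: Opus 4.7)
The plan is to adapt the Golowich--Rakhlin--Shamir layer-peeling argument for deep networks, with two modifications: reduce the vector-valued Rademacher complexity to a scalar one via contraction, and then use a distance-to-initialization telescoping to expose each layer's $D_j$ contribution as a separate summand.

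First I would invoke Maurer's vector-contraction inequality to reduce the problem to scalar outputs, obtaining $\hat{\mathfrak{R}}_N(\mathcal{H}) \lesssim d_L \cdot \max_k \hat{\mathfrak{R}}_N(\mathcal{H}_k)$, where $\mathcal{H}_k$ is the class of scalar functions $x \mapsto [h(x)]_k$; this accounts for the $d_L$ prefactor. Next, I would write $\mathbf{W}_j = \mathbf{W}_j^0 + \Delta_j$ with $\|\Delta_j\|_F \leq D_j$ and telescope the network output as a sum of $L$ terms, where the $j$-th term isolates the perturbation $\Delta_j$ at layer $j$ while the other layers carry either their initial or their full weights. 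Since the constant class $\{h^0\}$ has zero empirical Rademacher complexity, sub-additivity gives $\hat{\mathfrak{R}}_N(\mathcal{H}_k) \leq \sum_{j=1}^L \hat{\mathfrak{R}}_N(\tilde{\mathcal{H}}_k^{(j)})$, where $\tilde{\mathcal{H}}_k^{(j)}$ is the single-layer-perturbed subclass. This yields the outer sum in the stated bound.

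For each $\tilde{\mathcal{H}}_k^{(j)}$, I would apply the standard layer-by-layer peeling: Talagrand's contraction (valid because each $\phi_i$ is $1$-Lipschitz with $\phi_i(0)=0$) strips off the activation at each layer, then Cauchy--Schwarz in the Frobenius inner product pulls out the corresponding weight-matrix norm. Each unperturbed layer contributes a factor $2B_i$ via the triangle inequality $\|\mathbf{W}_i^0 + \Delta_i\|_F \leq 2B_i$, while the perturbed layer $j$ contributes only $D_j$. Because the layers are convolutional, unfolding each convolution as a matrix--vector product duplicates the filter across spatial positions and introduces an additional $\sqrt{c_i}$ channel factor at each peeling step. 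The input contributes the $R$ factor via $\|x\| \leq R$, and the Khintchine-style bound on Rademacher sums yields the $1/\sqrt{N}$ scaling.

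The main obstacle I foresee is accounting precisely for where the $\sqrt{c_i}$ factors land: the $j$-th summand is divided by only $\prod_{i \leq j} \sqrt{c_i}$ rather than by the full $\prod_{i=1}^L \sqrt{c_i}$. This asymmetry suggests a cancellation in which the channel factors picked up when peeling layers \emph{above} $j$ remain in the numerator of the final estimate, while those from layers \emph{at or below} $j$ get absorbed into the norm of the partially-propagated signal reaching layer $j$ (they already appear in the bound on $\|(\text{network through layer }j)(x)\|$ that is then multiplied by $D_j$). Tracking this book-keeping through the induction on layer depth, and confirming the precise placement of the constant factor $2\sqrt{2}$ from the joint application of contraction and Cauchy--Schwarz, is the delicate step; once each summand is bounded, summing over $j$ and restoring the $d_L R$ prefactors yields the stated bound.
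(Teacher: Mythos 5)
You should first note that the paper itself contains no proof of this statement: it is imported verbatim as Theorem~2 of \citet{Gouk2021metaopt} and used as a black box in Section~6, so the only meaningful comparison is with the cited source, where such bounds are obtained by a distance-from-initialization decomposition combined with layer-by-layer peeling. At that level your outline is on the right track: isolating each layer's deviation $D_j$ into a separate additive contribution, charging the remaining layers their norm factors, and your closing observation about the bookkeeping is essentially correct — the $j$-th summand equals $D_j\prod_{k\neq j}2B_k\prod_{k>j}\sqrt{c_k}$, i.e.\ channel factors appear only for the layers above $j$ (the ones peeled at the level of complexity), while the layers below $j$ enter only through a norm bound on the propagated signal of the form $R\prod_{i<j}\|\mathbf{W}_i\|_F$. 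Also, the initial reduction to scalar coordinates does not even need Maurer's inequality: with the paper's definition of vector-valued Rademacher complexity (independent $\sigma_{ik}$ per coordinate), sub-additivity of the supremum already gives $\hat{\mathfrak{R}}_N(\mathcal{H})\le\sum_{k=1}^{d_L}\hat{\mathfrak{R}}_N(\mathcal{H}_k)$, which is where the $d_L$ comes from.

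The genuine gap is in the peeling step. Scalar Ledoux--Talagrand contraction, which you invoke to "strip off the activation at each layer," applies to a real-valued class composed with a single fixed Lipschitz map; the intermediate layers here are vector-valued, and peeling them is precisely where the $\sqrt{c_i}$ factors and the leading constant $2\sqrt{2}$ must be produced — this requires a vector-contraction inequality (Maurer) or a Golowich-style argument applied at every layer, not scalar contraction plus an informal "unfolding the convolution" remark. Relatedly, your telescoped classes $\tilde{\mathcal{H}}^{(j)}$ are not classes in which only layer $j$ varies: each telescoping term is a difference of two compositions that share the varying weights of the other layers, so you cannot simply contract away a fixed outer map and apply Cauchy--Schwarz at layer $j$; you still have to peel the varying layers above $j$ (collecting the $2B_k\sqrt{c_k}$ factors) and control the varying portion below $j$ by a uniform norm bound, which is exactly what generates the asymmetric placement of the $\sqrt{c_i}$'s you flagged. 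The cited proof organizes this as a per-layer recursion rather than a one-shot class decomposition, and that is the cleaner way to repair your sketch. A minor but telling symptom of the loose bookkeeping: you attribute the per-layer factor $2B_i$ to the triangle inequality $\|\mathbf{W}_i^0+\Delta_i\|_F\le 2B_i$, yet the hypotheses already give $\|\mathbf{W}_i\|_F\le B_i$, so the $2$'s (and the $2\sqrt{2}$) must instead come out of the contraction/decomposition constants; as written, your argument neither needs nor accounts for them, so it would not land on the stated bound without redoing that step carefully.
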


(1) The Gaussian complexity of meta-learner $h$ can be computed as
\begin{align*}
	\begin{split}
		\hat{\mathcal{G}}_{D^{val}}(\mathcal{H})  &= \mathbb{E} \sup_{h\in \mathcal{H}} \frac{1}{nT} \sum_{t=1}^{T} \sum_{j=1}^{n}\sum_{k=1}^{d_L} g_{tjk} h_k(x_{tj}^{(q)}) \\
		&\leq\sum_{k=1}^{d_L}  \hat{\mathcal{G}}_{D^{val}}(h_k)  \leq 2\sqrt{\log(nT)}\sum_{k=1}^{d_L}  \hat{\mathfrak{R}}_{D^{val}}(h_k) \\
		&\leq 2\sqrt{\log(nT)}  \cdot \frac{2\sqrt{2} d_L R \sum_{j=1}^L \frac{D_j}{2B_j \prod_{i=1}^j \sqrt{c_i} } \prod_{j=1}^L 2B_j \sqrt{c_j} }{\sqrt{nT}}.
	\end{split}
\end{align*}

(2) The Gaussian complexity of the task-specific learner can be computed as:
\begin{align*}
	\begin{split}
		&\hat{\mathcal{G}}_{D_t^{val}}(\mathcal{F})= \mathbb{E} \sup_{\bm{A}_t\in \mathcal{F}} \frac{1}{m} \sum_{i=1}^{m} \sum_{k=1}^K  g_{tik} ((\bm{A}_t)_{k}^{\mathsf{T}} h(x_{ti}^{(s)}))  \\
		& =\frac{1}{m} \mathbb{E} \sup_{\bm{A}_t\in \mathcal{F}} \sum_{k=1}^K ((\bm{A}_t)_{k}^{\mathsf{T}}  \sum_{i=1}^{m}   g_{tik}  h(x_{ti}^{(s)}))  \\
		&\leq \frac{M}{m} \mathbb{E}  \sum_{k=1}^K \left\|\sum_{i=1}^{m}  g_{tik}  h(x_{ti}^{(s)}))  \right\| \\
		& = \frac{MK}{\sqrt{m}} \sqrt{ \frac{1}{m}\sum_{i=1}^{m}\left\|  h(x_{ti}^{(s)}) \right\|_2^2} \\
		&  \leq \frac{MK}{\sqrt{m}} \cdot \max_{x_{ti}^{(s)} \in {D}_t^{tr}} \|h(x_{ti}^{(s)} )\|.
	\end{split}
\end{align*}
Different from few-shot regression, there exists an additional multiplicative factor with term $K$, which is the number of classes. This implies the complexity of the task-specific learner increases as the number of classes increases, complying with general experience in common practice.

Thus, the transfer error defined in Eq.(\ref{eqtestbound}) now can be written as
\begin{align*}
	\begin{split}
		&  \ \ \ \ \ \ \ \ R_{test}(\hat{f}_{\mu},\hat{h}) - R_{test}(f^*_{\mu},h^*) \\
		& \ \ \ \ \leq \frac{M}{\sum_{k=1}^K  \sigma_{d_L}((\mathbf{K})_k)} \left( 768L \log(4nT) L(\mathcal{F})\cdot \frac{2\sqrt{2} d_L R \sum_{j=1}^L \frac{D_j}{2B_j \prod_{i=1}^j \sqrt{c_i} } \prod_{j=1}^L 2B_j \sqrt{c_j} }{\sqrt{nT}} \right. \\&\phantom{=\;\;}\left.
		+ \frac{6LMK}{\sqrt{m}T}\sum_{t=1}^T  \max_{x_{ti}^{(s)} \in D_t^{val}} \|h(x_{ti}^{(s)} )\| + 6B \sqrt{\frac{\log \frac{2}{\delta}}{2nT}}  + 6B\sqrt{\frac{\log \frac{2}{\delta}}{m}} + \frac{48L\sup_{h,x} M\|h(x)\|}{n^2T^2}\right) \\
		&\ \ \ \ \ + \frac{6LMK}{\sqrt{m_{\mu}}} \cdot \max_{x_{i}^{(s)} \in D_{\mu}^{val}} \|h(x_{i}^{(s)} )\|+ 6B\sqrt{\frac{\log \frac{2}{\delta}}{m_{\mu}}}.
	\end{split}
\end{align*}
Note that we assume there exists no distribution shift between the training and validation sets, and thus the term $d_{\mathcal{F}} (D_t^{(tr)},D_t^{(val)})$ is zero and omitted.

\subsection{Theory-Inspired Meta-Regularization}
As aforementioned, we develop the following two controlling strategies for meta learner $h$ to improve its methodology transferable generalization capability among tasks.

(\romannumeral 1) Control the output range of the meta-learner $h$.
All activation functions of CNN defined in Eq.(\ref{eqcnn}) are usually assumed to be ReLU. We take the same control strategy as in Section \ref{regular},
and revise the last activation function as Tanh (i.e., $\phi_{L}=\frac{e^z-e^{-z}}{e^z+e^{-z}}$).

(\romannumeral 2) Minimize the distance between the weights from the starting point weights in Eq.(\ref{eqss}). This control strategy can decrease the complexity of meta-learner $h$.

The training strategy (\romannumeral 2) can be achieved by adding the penalty terms corresponding to each layer into the meta-training objective in Eq. (\ref{ffew}),
\begin{align*}
	\begin{split}
		&\bm{W}^* = \arg\min_{\bm{W}} \frac{1}{nT}\sum_{t=1}^T \sum_{j=1}^n \ell(\bm{A}_t^{*\mathsf{T}} h(\mathbf{x}_{tj}^{(q)}),y_{tj}^{(q)}) + \lambda \sum_{j=1}^L \|\mathbf{W}_j - \mathbf{W}_j^0\|_F^2\\
		s.t.,\  &\bm{A}_t^{*} = \arg\min_{\bm{A}_t} \frac{1}{m}\sum_{i=1}^m \ell(\bm{A}_t^{\mathsf{T}} h(\mathbf{x}_{ti}^{(s)}),y_{ti}^{(s)}) , t\in[T],
	\end{split}
\end{align*}
where $\lambda$ is the hyper-paramters of the meta-regularization. Note that this regularization scheme has been used to help improve the performance of transfer learning proposed by \citep{xuhong2018explicit,li2018delta}. We firstly employ this regularization scheme for few-shot classification problems.
A detailed comparison with current meta learning methods \citep{zhou2019efficient,rajeswaran2019meta} could be found in Appendix \ref{l2sp}.

\subsection{Numerical Experiments}\label{numericalcls}
In this section, we test the effectiveness of the theory-induced training strategies on few-shot classification benchmark.

\subsubsection{Implementation details}
We parameterize the meta-learner $h$ as a ResNet-12 network and a standard 4-layer convolutional network followed by \citep{snell2017prototypical,lee2019metaopt}. Also, we use DropBlock regularization \citep{ghiasi2018dropblock} for the ResNet-12 network. The compared methods include ProtoNet, MetaOptNet-RR and MetaOptNet-SVM, whose task-specific learners are nearest-neighbor classifier \citep{snell2017prototypical}, ridge regression classifier \citep{bertinetto2018meta} and SVM classifier \citep{lee2019metaopt}, respectively. We denote ``Tanh'', ``$L^2$-SP'' and  ``Tanh + $L^2$-SP'' as training strategies (\romannumeral 1), (\romannumeral 2) and combining training strategy (\romannumeral 1) and (\romannumeral 2).

We follow the setting in \citep{lee2019metaopt} to conduct the few-shot classification experiments. To optimize the meta-learners, we use SGD with Nesterov momentum of 0.9 and weight decay of 0.0005. Each mini-batch consists of 8 episodes. The model was meta-trained for 60 epochs, with each epoch consisting of 1000 episodes. The learning rate was initially set to 0.1, and then changed to 0.006, 0.0012, and 0.00024 at epochs 20, 40 and 50, respectively. During the meta-training stage, we adopt horizontal flip, random crop, and color (brightness, contrast, and saturation) jitter data augmentation techniques. We use 5-way classification in both meta-training and meta-test stages. Each class contains 6 test (query) samples during meta-training and 15 test samples during meta-testing. Our meta-trained model was chosen based on 5-way 5-shot test accuracy on the meta-validation set. Meanwhile, during meta-training, we set training shot to 15 for miniImageNet with ResNet-12; 5 for miniImageNet with 4-layer CNN; 10 for tieredImageNet; 5 for CIFAR-FS; and 15 for FC100. We set the hyper-parameter $\lambda$ as 0.1, and keep the default hyper-parameter setting for the compared baselines in the original papers. Our implemention is based on the code provided on \url{https://github.com/kjunelee/MetaOptNet}.

\subsubsection{Experiments on CIFAR derivatives}

The \textbf{CIFAR-FS} dataset \citep{bertinetto2018meta} is a recently proposed few-shot image classification benchmark, consisting of all 100 classes from CIFAR-100. The classes are randomly split into 64, 16 and 20 for meta-training, meta-validation, and meta-testing, respectively. Each class contains 600 images of size $32\times 32$.

The \textbf{FC100} dataset  is another dataset derived from CIFAR-100 \citep{oreshkin2018tadam}, containing 100 classes which are grouped into 20 superclasses. These classes are partitioned into 60 classes from 12 superclasses for meta-training, 20 classes from 4 superclasses for meta-validation, and 20 classes from 4 superclasses for meta-testing. The goal is to minimize semantic overlap between classes. All images are also of size $32\times 32$.

\textbf{Results}. Table \ref{table4} summarizes the results on the 5-way CIFAR-FS and FC100 classification tasks with different shots. It can be seen that our theory-induced training strategy does help improve generalization error from SOTA results in most cases. Specifically, we shows the results where we vary the meta-learner for two different embedding architectures. In both cases that the dimension of the feature representation is low (1600, a standard 4-layer convolutional network), and the dimension is much higher (16000, ResNet12), our proposed meta-regularization schemes yield better few-shot accuracy than baselines. This implies that our proposed meta-regularization scheme is model-agnostic, in the sense that it can be directly applied to meta-regularize the learning for different meta-learners. Meanwhile, even on the harder FC100 dataset, our strategies can still help increase the accuracy for new query few-shot tasks, which highlights the advantage of our problem-agnostic meta-regularization schemes.

\begin{table*}[t]
	\caption{Average few-shot classification accuracies (\%) with 95\%
		confidence intervals on CIFAR-FS and FC-100 meta-test splits. }\label{table4}
	\centering
	\begin{tiny}
		\begin{tabular}{lllll}
			\toprule
			&  	\multicolumn{2}{c}{\textbf{CIFAR-FS 5-way}} & \multicolumn{2}{c}{\textbf{FC100 5-way}}\\
			\cline{2-3} \cline{4-5}
			\textbf{model}    &         \multicolumn{1}{c} {\textbf{1-shot } }  & \multicolumn{1}{c} {\textbf{5-shot } }  &  \multicolumn{1}{c} {\textbf{1-shot } }  & \multicolumn{1}{c} {\textbf{5-shot } }        \\
			\hline \hline
			\multicolumn{5}{l}{\textbf{4-layer conv (feature dimension=1600)}}   \\
			ProtoNet  &        59.89 $\pm$0.70     &      80.14 $\pm$ 0.53        &        36.13 $\pm$ 0.53     &    50.69 $\pm$0.55         \\
			ProtoNet(Tanh)  &     63.55 $\pm$ 0.75 \textcolor{red}{4.66$\uparrow$} &  80.30 $\pm$ 0.52   \textcolor{red}{0.16$\uparrow$}           & 36.64 $\pm$ 0.55  \textcolor{red}{0.51$\uparrow$}    &     50.53 $\pm$ 0.56    \textcolor{green}{0.16$\downarrow$}    \\
			ProtoNet($L^2$-SP)  &     60.71 $\pm$ 0.71 \textcolor{red}{0.82$\uparrow$} &  80.01 $\pm$ 0.53   \textcolor{green}{0.13$\downarrow$}           & 36.52 $\pm$ 0.54  \textcolor{red}{0.39$\uparrow$}    &     51.10 $\pm$ 0.56    \textcolor{red}{0.01$\uparrow$}    \\
			ProtoNet(Tanh + $L^2$-SP)  &     63.58 $\pm$ 0.75 \textcolor{red}{4.66$\uparrow$} &  80.20 $\pm$ 0.52   \textcolor{red}{0.06$\uparrow$}           & 36.74 $\pm$ 0.55  \textcolor{red}{0.61$\uparrow$}    &     50.77 $\pm$ 0.54    \textcolor{red}{0.08$\uparrow$}    \\ \hline
			MetaOptNet-RR &    63.02 $\pm$ 0.71    &      79.60 $\pm$ 0.52 &     35.75 $\pm$ 0.52     &   51.20 $\pm$ 0.55                              \\
			MetaOptNet-RR(Tanh) &   63.45 $\pm$ 0.70  \textcolor{red}{0.43$\uparrow$}    &      80.11 $\pm$ 0.53  \textcolor{red}{0.51$\uparrow$}         & 38.61 $\pm$ 0.56 \textcolor{red}{2.86$\uparrow$}   &       52.29 $\pm$ 0.56  \textcolor{red}{1.09$\uparrow$}          \\
			MetaOptNet-RR($L^2$-SP) &   63.66 $\pm$ 0.72   \textcolor{red}{0.64$\uparrow$}    &      79.49 $\pm$ 0.52 \textcolor{green}{0.11$\downarrow$}         & 36.55 $\pm$ 0.52 \textcolor{red}{0.80$\uparrow$}   &       51.27 $\pm$ 0.53  \textcolor{red}{0.07$\uparrow$}          \\
			MetaOptNet-RR(Tanh + $L^2$-SP) &   63.81 $\pm$ 0.71  \textcolor{red}{0.69$\uparrow$}    &      80.44 $\pm$ 0.50  \textcolor{red}{0.84$\uparrow$}         & 37.28 $\pm$ 0.53 \textcolor{red}{1.53$\uparrow$}   &       52.31 $\pm$ 0.52  \textcolor{red}{1.11$\uparrow$}          \\ \hline
			MetaOptNet-SVM &       61.77 $\pm$ 0.73 &        79.08 $\pm$ 0.52     &  35.82 $\pm$ 0.54    &   50.60 $\pm$ 0.54        \\
			MetaOptNet-SVM(Tanh) &           63.18 $\pm$ 0.71 \textcolor{red}{1.41$\uparrow$} & 79.87 $\pm$ 0.53  \textcolor{red}{0.49$\uparrow$}    &          37.10 $\pm$ 0.58 \textcolor{red}{1.28$\uparrow$}  &  51.62 $\pm$ 0.57 \textcolor{red}{1.02$\uparrow$}\\
			MetaOptNet-SVM($L^2$-SP) &           62.70 $\pm$ 0.71 \textcolor{red}{0.93$\uparrow$} & 79.95 $\pm$ 0.53  \textcolor{red}{0.87$\uparrow$}    &          37.13 $\pm$ 0.68 \textcolor{red}{1.31$\uparrow$}  &  51.42 $\pm$ 0.53 \textcolor{red}{0.82$\uparrow$}\\
			MetaOptNet-SVM(Tanh + $L^2$-SP) &           63.24 $\pm$ 0.72 \textcolor{red}{1.47$\uparrow$} & 80.20 $\pm$ 0.51  \textcolor{red}{1.12$\uparrow$}    &          37.29 $\pm$ 0.58 \textcolor{red}{1.47$\uparrow$}  &  51.71 $\pm$ 0.55 \textcolor{red}{1.11$\uparrow$}\\
			\hline\hline
			\multicolumn{5}{l}{\textbf{ ResNet-12 (feature dimension=16000)}}    \\
			ProtoNet  &      68.00 $\pm$ 0.74        & 83.50 $\pm$ 0.51           &          38.43 $\pm$ 0.58    &        52.56 $\pm$ 0.56           \\
			ProtoNet(Tanh)  &   69.71 $\pm$ 0.76   \textcolor{red}{1.71$\uparrow$}        &   83.62 $\pm$ 0.51  \textcolor{red}{0.12$\uparrow$}     &         40.12 $\pm$ 0.58     \textcolor{red}{1.69$\uparrow$}  &           55.02 $\pm$ 0.54     \textcolor{red}{2.06$\uparrow$} \\
			ProtoNet($L^2$-SP)  &   69.01 $\pm$ 0.73   \textcolor{red}{1.01$\uparrow$}        &   83.57 $\pm$ 0.51  \textcolor{red}{0.07$\uparrow$}     &         39.16 $\pm$ 0.57    \textcolor{red}{0.73$\uparrow$}  &           53.60 $\pm$ 0.56     \textcolor{red}{1.04$\uparrow$} \\
			ProtoNet(Tanh + $L^2$-SP)  &   71.22 $\pm$ 0.74   \textcolor{red}{3.22$\uparrow$}        &   83.89 $\pm$ 0.51  \textcolor{red}{0.39$\uparrow$}     &         40.70 $\pm$ 0.58     \textcolor{red}{2.43$\uparrow$}  &           55.33 $\pm$ 0.54     \textcolor{red}{2.77$\uparrow$} \\ \hline
			MetaOptNet-RR &     68.58 $\pm$ 0.73           &   84.75 $\pm$ 0.50         &       38.98 $\pm$ 0.56   &           54.46 $\pm$ 0.55        \\
			MetaOptNet-RR(Tanh) &    71.64 $\pm$ 0.72   \textcolor{red}{3.06$\uparrow$}       &     84.81 $\pm$ 0.49    \textcolor{red}{0.06$\uparrow$}        & 40.13 $\pm$ 0.58  \textcolor{red}{1.15$\uparrow$}&     54.48 $\pm$ 0.54 \textcolor{red}{0.02$\uparrow$}\\
			MetaOptNet-RR($L^2$-SP) &    71.40 $\pm$ 0.70   \textcolor{red}{2.82$\uparrow$}       &     84.91 $\pm$ 0.50    \textcolor{red}{0.16$\uparrow$}        & 39.79 $\pm$ 0.57  \textcolor{red}{0.81$\uparrow$}&     54.80 $\pm$ 0.55 \textcolor{red}{0.34$\uparrow$}\\
			MetaOptNet-RR(Tanh + $L^2$-SP) &    72.15 $\pm$ 0.72   \textcolor{red}{3.57$\uparrow$}       &     85.03 $\pm$ 0.49    \textcolor{red}{0.28$\uparrow$}        & 40.86 $\pm$ 0.57  \textcolor{red}{1.88$\uparrow$}&     55.58 $\pm$ 0.54 \textcolor{red}{1.12$\uparrow$}\\  \hline
			MetaOptNet-SVM &     68.81 $\pm$ 0.74          &    83.80 $\pm$ 0.51             &         40.24 $\pm$ 0.58  &           54.71 $\pm$ 0.56       \\
			MetaOptNet-SVM(Tanh) &  71.12 $\pm$ 0.71    \textcolor{red}{2.31$\uparrow$}            &     85.19 $\pm$ 0.49  \textcolor{red}{1.39$\uparrow$}              &                  39.99 $\pm$ 0.57  \textcolor{green}{0.25$\downarrow$} &                   53.92 $\pm$ 0.56  \textcolor{green}{0.79$\downarrow$}       \\
			MetaOptNet-SVM($L^2$-SP) &  70.84 $\pm$ 0.72     \textcolor{red}{2.03$\uparrow$}            &     84.22 $\pm$ 0.48  \textcolor{red}{0.42$\uparrow$}              &                  40.41 $\pm$ 0.57  \textcolor{red}{0.17$\uparrow$} &                   55.53 $\pm$ 0.55  \textcolor{red}{0.82$\uparrow$}       \\
			MetaOptNet-SVM(Tanh + $L^2$-SP) &  71.52 $\pm$ 0.73    \textcolor{red}{2.71$\uparrow$}            &     84.77 $\pm$ 0.48  \textcolor{red}{0.97$\uparrow$}              &                  40.83 $\pm$ 0.58 \textcolor{red}{0.59$\uparrow$} &                   55.75 $\pm$ 0.56  \textcolor{red}{1.04$\uparrow$}       \\
			\bottomrule
		\end{tabular} \vspace{-2mm}
	\end{tiny}
\end{table*}

\subsubsection{Experiments on ImageNet derivatives}
The \textbf{miniImageNet} dataset \citep{vinyals2016matching} is a standard benchmark for few-shot image classification benchmark, consisting of 100 randomly chosen classes from ILSVRC-2012 \citep {russakovsky2015imagenet}. The meta-training, meta-validation, and meta-testing sets contain 64, 16 and 20 classes randomly split from 100 classes, respectively. Each class contains 600 images of size $84\times 84$. We use the commonly-used split proposed by \citep{ravi2016optimization}.

The \textbf{tieredImageNet} benchmark \citep{ren2018meta} is a larger subset of ILSVRC-2012 \citep {russakovsky2015imagenet}, composed of 608 classes grouped into 34 high-level categories. These
categories are then split into 3 disjoint sets: 20 categories for meta-training, 6 for meta-validation, and 8 for meta-test. This corresponds to 351, 97 and 160 classes for meta-training, meta-validation, and meta-testing, respectively. This dataset aims to minimize the semantic similarity between the splits similar to FC100. All images are of size $84\times 84$.

\textbf{Results}. Table \ref{table3} summarizes the results on the 5-way classification tasks with different shots on miniImageNet and tieredImageNet benchmarks.
Compared with CIFAR derivatives dataset, this benchmark contains more classes and more natural images. It can be observed that our proposed meta-regularization schemes can also help improve generalization error in most cases from SOTA baselines method without adding these schemes with different embedding architectures of meta-learners or different semantic similarity between the splits.

It should be emphasized that we have just directly replaced the last activation function (ReLU) of feature extractor (meta-learner) as Tanh, or add the meta-regularization scheme to the meta loss, and keep the original configurations without implementing
any further fine-tuning executions and introducing extra computation cost. This way allows us to clearly see the effect of proposed meta-regularization schemes on the compared baselines. The empirical results verify that our theory-induced meta-regularization schemes are rational and arguably simple to improve the baseline performance.

\begin{table*}[t]
	\caption{Average few-shot classification accuracies (\%) with 95\%
		confidence intervals on miniImageNet and tieredImageNet meta-test splits. }\label{table3}
	\centering
	\begin{tiny}
		\begin{tabular}{lllll}
			\toprule
			&  	\multicolumn{2}{c}{\textbf{miniImageNet 5-way}}   & \multicolumn{2}{c}{\textbf{tieredImageNet 5-way}} \\
			\cline{2-3} \cline{4-5}
			\textbf{model}    &        \multicolumn{1}{c} {\textbf{1-shot } }  & \multicolumn{1}{c} {\textbf{5-shot } }  &  \multicolumn{1}{c} {\textbf{1-shot } }  & \multicolumn{1}{c} {\textbf{5-shot } }       \\
			\hline \hline
			\multicolumn{5}{l}{\textbf{4-layer conv (feature dimension=1600)}}   \\
			ProtoNet  &    47.69 $\pm$ 0.64   &  70.51 $\pm$ 0.52   &  50.10 $\pm$ 0.70  & 71.85 $\pm$ 0.57   \\
			ProtoNet(Tanh)  &     53.13 $\pm$ 0.64 \textcolor{red}{5.44$\uparrow$}  &  71.22 $\pm$ 0.52 \textcolor{red}{0.71$\uparrow$} &       54.07 $\pm$ 0.69 \textcolor{red}{3.93$\uparrow$}     &     71.05 $\pm$ 0.57    \textcolor{green}{0.60$\downarrow$}      \\
			ProtoNet($L^2$-SP)  &     48.15 $\pm$ 0.63 \textcolor{red}{0.46$\uparrow$}  &   70.67 $\pm$ 0.52 \textcolor{red}{0.16$\uparrow$} &       50.84 $\pm$ 0.71 \textcolor{red}{0.74$\uparrow$}     &     72.50 $\pm$ 0.50   \textcolor{red}{0.64$\uparrow$}      \\
			ProtoNet(Tanh + $L^2$-SP)  &     54.43 $\pm$ 0.71 \textcolor{red}{6.74$\uparrow$}  &   70.88  $\pm$ 0.52 \textcolor{red}{0.37$\uparrow$} &       54.06 $\pm$ 0.69 \textcolor{red}{3.93$\uparrow$}     &     71.25 $\pm$ 0.57    \textcolor{green}{0.40$\downarrow$}      \\\hline
			MetaOptNet-RR &   52.66 $\pm$ 0.63 & 69.80 $\pm$ 0.53  &     54.04 $\pm$ 0.68     &       72.09 $\pm$ 0.56  \\
			MetaOptNet-RR(Tanh) & 52.70 $\pm$ 0.62 \textcolor{red}{0.04$\uparrow$} &   70.72 $\pm$ 0.52  \textcolor{red}{0.98$\uparrow$}&     54.23 $\pm$ 0.67 \textcolor{red}{0.19$\uparrow$} &  72.35 $\pm$ 0.55   \textcolor{red}{0.26$\uparrow$}        \\
			MetaOptNet-RR($L^2$-SP) & 52.38 $\pm$ 0.62  \textcolor{green}{0.28$\downarrow$} &   68.95 $\pm$ 0.52  \textcolor{green}{0.85$\downarrow$}&     55.66 $\pm$ 0.68 \textcolor{red}{1.62$\uparrow$} &  72.15 $\pm$ 0.57   \textcolor{red}{0.06$\uparrow$}        \\
			MetaOptNet-RR(Tanh + $L^2$-SP) & 52.47 $\pm$ 0.62 \textcolor{green}{0.19$\downarrow$} &   70.17 $\pm$ 0.51  \textcolor{red}{0.37$\uparrow$}&     54.45 $\pm$ 0.69 \textcolor{red}{0.41$\uparrow$} &  72.40 $\pm$ 0.55   \textcolor{red}{0.31$\uparrow$}        \\ \hline
			MetaOptNet-SVM &    52.50 $\pm$ 0.62 &  69.66 $\pm$ 0.52  &     54.38 $\pm$ 0.70       &            71.48 $\pm$ 0.56       \\
			MetaOptNet-SVM(Tanh) &    52.75 $\pm$ 0.62 \textcolor{red}{0.25$\uparrow$} &70.79 $\pm$ 0.50   \textcolor{red}{1.13$\uparrow$}     &      54.66 $\pm$ 0.69  \textcolor{red}{0.28$\uparrow$}  &      71.57 $\pm$ 0.57 \textcolor{red}{0.09$\uparrow$} \\
			MetaOptNet-SVM($L^2$-SP) &    52.20 $\pm$ 0.60  \textcolor{green}{0.30$\downarrow$} & 69.04 $\pm$ 0.52   \textcolor{green}{0.62$\downarrow$}     &      55.52 $\pm$ 0.71  \textcolor{red}{1.14$\uparrow$}  &      71.49 $\pm$ 0.57 \textcolor{red}{0.01$\uparrow$} \\
			MetaOptNet-SVM(Tanh + $L^2$-SP) &    52.54 $\pm$ 0.61 \textcolor{red}{0.04$\uparrow$} &70.27 $\pm$ 0.52   \textcolor{red}{0.61$\uparrow$}     &      54.66 $\pm$ 0.69  \textcolor{red}{0.28$\uparrow$}  &      71.62 $\pm$ 0.50 \textcolor{red}{0.14$\uparrow$} \\
			\hline\hline
			\multicolumn{5}{l}{\textbf{ ResNet-12 (feature dimension=16000)}}    \\
			ProtoNet  &   56.13 $\pm$ 0.67  &    74.57 $\pm$ 0.53     &  63.02 $\pm$ 0.72     &    80.11 $\pm$ 0.56    \\
			ProtoNet(Tanh)  &     57.55 $\pm$ 0.67 \textcolor{red}{1.42$\uparrow$}  &  74.62 $\pm$ 0.67 \textcolor{red}{0.05$\uparrow$}   &   65.11 $\pm$ 0.73 \textcolor{red}{2.09$\uparrow$}  &     79.69 $\pm$ 1.39  \textcolor{green}{0.42$\downarrow$}   \\
			ProtoNet($L^2$-SP)  &     56.51 $\pm$ 0.65 \textcolor{red}{0.38$\uparrow$}  &  75.15 $\pm$ 0.53 \textcolor{red}{0.58$\uparrow$}   &   63.48 $\pm$ 0.72 \textcolor{red}{0.46$\uparrow$}  &     80.65 $\pm$ 0.54  \textcolor{red}{0.54$\uparrow$}   \\
			ProtoNet(Tanh + $L^2$-SP)  &     58.10 $\pm$ 0.66 \textcolor{red}{1.97$\uparrow$}  &  74.87 $\pm$ 0.51 \textcolor{red}{0.30$\uparrow$}   &   65.68 $\pm$ 0.73 \textcolor{red}{2.66$\uparrow$}  &     80.59 $\pm$ 0.51  \textcolor{red}{0.48$\uparrow$}   \\ \hline
			MetaOptNet-RR &   60.27 $\pm$ 0.67 &76.19 $\pm$ 0.50    &    66.35 $\pm$ 0.71       &          81.25 $\pm$ 0.53 \\
			MetaOptNet-RR(Tanh) & 60.44 $\pm$ 0.64  \textcolor{red}{0.17$\uparrow$} &  76.66 $\pm$ 0.47 \textcolor{red}{0.47$\uparrow$}  &    66.97 $\pm$ 0.71 \textcolor{red}{0.62$\uparrow$}      &        81.42 $\pm$ 0.52  \textcolor{red}{0.17$\uparrow$}       \\
			MetaOptNet-RR($L^2$-SP) & 60.05 $\pm$ 0.65  \textcolor{green}{0.22$\downarrow$} &  77.02 $\pm$ 0.48 \textcolor{red}{0.83$\uparrow$}  &    66.77 $\pm$ 0.71 \textcolor{red}{0.42$\uparrow$}      &        81.48 $\pm$ 0.53  \textcolor{red}{0.23$\uparrow$}       \\
			MetaOptNet-RR(Tanh + $L^2$-SP) & 60.41 $\pm$ 0.65  \textcolor{red}{0.14$\uparrow$} &  76.98 $\pm$ 0.48 \textcolor{red}{0.79$\uparrow$}  &    66.85 $\pm$ 0.71 \textcolor{red}{0.50$\uparrow$}      &        81.49 $\pm$ 0.52  \textcolor{red}{0.24$\uparrow$}       \\ \hline
			MetaOptNet-SVM &   60.58 $\pm$ 0.66& 75.99 $\pm$ 0.51          &     66.40 $\pm$ 0.72    &         81.18 $\pm$ 0.54     \\
			MetaOptNet-SVM(Tanh) &    60.59 $\pm$ 0.65 \textcolor{red}{0.01$\uparrow$}  &  77.05 $\pm$ 0.47 \textcolor{red}{1.06$\uparrow$}     &  66.43 $\pm$ 0.70 \textcolor{red}{0.03$\uparrow$}  &       81.30 $\pm$ 0.52 \textcolor{red}{0.12$\uparrow$}   \\
			MetaOptNet-SVM($L^2$-SP) &    61.17 $\pm$ 0.64 \textcolor{red}{0.59$\uparrow$}  &  77.40 $\pm$ 0.49 \textcolor{red}{1.41$\uparrow$}     &  66.45 $\pm$ 0.71 \textcolor{red}{0.05$\uparrow$}  &       81.52 $\pm$ 0.53 \textcolor{red}{0.34$\uparrow$}   \\
			MetaOptNet-SVM(Tanh + $L^2$-SP) &    60.99 $\pm$ 0.65 \textcolor{red}{0.41$\uparrow$}  &  77.43 $\pm$ 0.48 \textcolor{red}{1.44$\uparrow$}     &  66.49 $\pm$ 0.09 \textcolor{red}{0.09$\uparrow$}  &       81.43 $\pm$ 0.52 \textcolor{red}{0.25$\uparrow$}   \\
			\bottomrule
		\end{tabular} \vspace{-2mm}
	\end{tiny}
\end{table*}

\subsubsection{Cross-domain few-shot classification}
To further justify the effectiveness of our proposed meta-regularization schemes in producing robust features for unseen tasks, we further perform experiments under cross-domain few-shot classification settings, where the meta-test tasks are substantially different from meta-train tasks. We report the results in Table \ref{table9}, using the same experiment settings, firstly introduced in \citep{chen2020closer}, where miniImageNet is used as meta-train set and CUB dataset \citep{wah2011caltech} as meta-test set.

Table \ref{table9} shows the cross-domain performance of the baselines and our proposed regularization schemes, which exhibits the similar tendency to few-shot classification results from Table \ref{table3}. As is known, the CUB is a fine-grained classification dataset with more intra-class variations, and has a large domain shift to the miniImageNet dataset. Therefore, the extracted learning methodology (feature representation) from the meta-training dataset is mostly irrelevant to the meta-test datasets, which makes the learning tasks from new different domains relatively difficult, as suggested in \citep{baik2020learning}. Even so, our proposed meta-regularization schemes can also evidently improves the performance of baselines. This further validates the effectiveness and validness of the developed theory and the theory-induced meta-regularization schemes to learn a more robust feature extractor with better adaptability to new tasks.

\section{Application \uppercase\expandafter{\romannumeral 3}: Domain Generalization}\label{domain}
In this section, we instantiate the proposed meta learning framework for the domain generalization problem.

\subsection{Basic Setting and Theoretical Analysis}
Here we consider the domain generalization (DG) setting. We present the generally used meta learning setting \citep{li2018learning,li2019feature} for heterogeneous DG. Usually, assuming that we have $T$ domains (datasets) $\Gamma = \{D_1, D_2, \cdots, D_T\}$ for meta-training, each domain (task) has both meta-training and meta-validation data with $D_t = (D_t^{tr},D_t^{val})$, where $D_t^{tr} = \{z_{ti}^{(s)}\}_{i=1}^{m_t}, D_t^{val} = \{z_{tj}^{(q)}\}_{j=1}^{n_t}$.
Note that the label space is not shared between training/support and validation/query domains, but it is straightforwardly applicable to conventional (homogeneous) DG when assuming that the label space is shared between different domains.

There exist several differences between DG and few-shot classification problem. (1) The former often assumes that the meta-training and meta-validation sets of each domain (task) share the same label space, but have distribution shift. However, the latter often assumes that they are with the same distribution. (2) The latter pays more attention to solving new query tasks with limited data, while the former not only deals with limited data in the meta-test domain, but also attempts to eliminate the distribution shift between meta-training and meta-test domain.

We take the same setting for $\mathcal{F}$ and $\mathcal{H}$ in few-shot classification, and borrow the theoretical analysis therein. Thus the transfer error defined in Eq.(\ref{eqtestbound}) can be written as
\begin{align*}
	\begin{split}
		&  \ \ \ \ \ \ \ \ R_{test}(\hat{f}_{\mu},\hat{h}) - R_{test}(f^*_{\mu},h^*) \\
		& \ \ \ \ \leq \frac{M}{\sum_{k=1}^K  \sigma_{d_L}((\mathbf{K})_k)} \left( 768L \log(4nT) L(\mathcal{F})\cdot \frac{2\sqrt{2} d_L R \sum_{j=1}^L \frac{D_j}{2B_j \prod_{i=1}^j \sqrt{c_i} } \prod_{j=1}^L 2B_j \sqrt{c_j} }{\sqrt{nT}} \right. \\&\phantom{=\;\;}\left.
		+ \frac{6LMK}{\sqrt{m}T}\sum_{t=1}^T  \max_{x_{ti}^{(s)} \in D_t^{val}} \|h(x_{ti}^{(s)} )\| + \frac{4}{T} \sum_{t=1}^{T} d_{\mathcal{F}} (D_t^{(tr)},D_t^{(val)})+ 6B \sqrt{\frac{\log \frac{2}{\delta}}{2nT}}  + 6B\sqrt{\frac{\log \frac{2}{\delta}}{m}} \right.\\
		&\phantom{=\;\;}\left. + \frac{48L\sup_{h,x} M\|h(x)\|}{n^2T^2}\right) + \frac{6LMK}{\sqrt{m_{\mu}}} \!\cdot \!\max_{x_{i}^{(s)} \in D_{\mu}^{val}} \!\|h(x_{i}^{(s)} )\|+\mathbb{E}_{\mu\sim \eta}d_{\mathcal{F}} (D_{\mu}^{(tr)},D_{\mu}^{(val)})+ 6B\sqrt{\frac{\log \frac{2}{\delta}}{m_{\mu}}}.
	\end{split}
\end{align*}
Note that there exists an additional distribution shift term $d_{\mathcal{F}} (D_t^{(tr)},D_t^{(val)})$ in the error bound, characterizing the distribution shift between the meta-training and meta-validation domains, as well as the term $d_{\mathcal{F}} (D_{\mu}^{(tr)},D_{\mu}^{(val)})$ characterizing the distribution shift between the training and test sets of meta-test set.
It is easy to see that proposed meta-regularization schemes in the few-shot classification can also be applied to DG, i.e.,
(\romannumeral 1) Control the output range of the meta-learner $h$;
(\romannumeral 2) Minimize the distance of the weights from the starting point weights as Eq.(\ref{eqss}).

\begin{table*}[t]
	\caption{Average few-shot classification accuracies (\%) with 95\%
		confidence intervals on 5-way cross-domain classification. }\label{table9}
	\centering
	\begin{footnotesize}
		\begin{tabular}{lll}
			\toprule
			&  	\multicolumn{2}{c}{\textbf{miniImageNet $\rightarrow$ CUB}}    \\
			\cline{2-3}
			\textbf{model}    &        \multicolumn{1}{c} {\textbf{1-shot } }  & \multicolumn{1}{c} {\textbf{5-shot } }      \\
			\hline \hline
			\multicolumn{3}{l}{\textbf{4-layer conv (feature dimension=1600)}}   \\
			ProtoNet  &    36.20 $\pm$ 0.54   &  55.44 $\pm$ 0.52     \\
			ProtoNet(Tanh)  &     39.33 $\pm$ 0.55 \textcolor{red}{3.13$\uparrow$}  & 56.30 $\pm$ 0.52 \textcolor{red}{0.86$\uparrow$}       \\
			ProtoNet($L^2$-SP)  &     36.40 $\pm$ 1.80 \textcolor{red}{0.20$\uparrow$}  &   58.40 $\pm$ 0.52 \textcolor{red}{2.96$\uparrow$}      \\
			ProtoNet(Tanh + $L^2$-SP)  &     39.23 $\pm$ 0.54 \textcolor{red}{3.03$\uparrow$}  &   57.46 $\pm$ 0.53		 \textcolor{red}{2.02$\uparrow$}     \\\hline
			MetaOptNet-RR &   41.27 $\pm$ 0.55			 & 58.15$\pm$0.52			  \\
			MetaOptNet-RR(Tanh) & 41.34 $\pm$ 0.56		 \textcolor{red}{0.07$\uparrow$} &   59.82 $\pm$ 0.51  \textcolor{red}{1.67$\uparrow$}      \\
			MetaOptNet-RR($L^2$-SP) & 41.23 $\pm$ 0.56  \textcolor{green}{0.04$\downarrow$} &   59.11 $\pm$ 0.51		  \textcolor{red}{0.96$\uparrow$}     \\
			MetaOptNet-RR(Tanh + $L^2$-SP) & 42.13 $\pm$ 0.55	 \textcolor{red}{0.86$\uparrow$} &   60.27 $\pm$ 0.52	  \textcolor{red}{2.12$\uparrow$}       \\ \hline
			MetaOptNet-SVM &    41.47 $\pm$ 0.56	 &  58.21 $\pm$ 0.53       \\
			MetaOptNet-SVM(Tanh) &    41.89 $\pm$ 0.56		 \textcolor{red}{0.42$\uparrow$} & 61.26 $\pm$ 0.52	 \textcolor{red}{3.05$\uparrow$}     \\
			MetaOptNet-SVM($L^2$-SP) &    41.00 $\pm$ 0.58	  \textcolor{green}{0.47$\downarrow$} & 58.81 $\pm$ 0.51	   \textcolor{red}{0.60$\uparrow$}     \\
			MetaOptNet-SVM(Tanh + $L^2$-SP) &    41.49 $\pm$ 0.54 \textcolor{red}{0.02$\uparrow$} & 59.44 $\pm$ 0.50	   \textcolor{red}{1.23$\uparrow$}      \\
			\hline\hline
			\multicolumn{3}{l}{\textbf{ ResNet-12 (feature dimension=16000)}}    \\
			ProtoNet  &   41.86 $\pm$ 0.59  &    58.41 $\pm$ 0.56      \\
			ProtoNet(Tanh)  &     42.23 $\pm$ 0.63 \textcolor{red}{0.37$\uparrow$}  &  61.61 $\pm$ 0.56
			\textcolor{red}{3.20$\uparrow$}      \\
			ProtoNet($L^2$-SP)  &     43.60 $\pm$ 0.60 \textcolor{red}{1.74$\uparrow$}  &  62.59 $\pm$ 0.55
			\textcolor{red}{4.18$\uparrow$}    \\
			ProtoNet(Tanh + $L^2$-SP)  &    43.53 $\pm$ 0.63  \textcolor{red}{1.67$\uparrow$}  &  60.03 $\pm$ 2.56 \textcolor{red}{1.62$\uparrow$}     \\ \hline
			MetaOptNet-RR &   44.43 $\pm$ 0.59  & 64.12 $\pm$ 0.51			    \\
			MetaOptNet-RR(Tanh) & 45.14 $\pm$ 0.58  \textcolor{red}{0.71$\uparrow$} &  64.23 $\pm$ 0.52 \textcolor{red}{0.11$\uparrow$}      \\
			MetaOptNet-RR($L^2$-SP) & 45.00 $\pm$ 0.60  \textcolor{red}{0.57$\uparrow$} &  64.44 $\pm$ 0.53
			\textcolor{red}{0.32$\uparrow$}        \\
			MetaOptNet-RR(Tanh + $L^2$-SP) & 45.24 $\pm$ 0.60		  \textcolor{red}{0.81$\uparrow$} &  64.95 $\pm$ 0.50		   \textcolor{red}{0.83$\uparrow$}      \\ \hline
			MetaOptNet-SVM &   44.60 $\pm$ 0.59		 & 64.53 $\pm$ 0.53		             \\
			MetaOptNet-SVM(Tanh) &    45.09 $\pm$ 0.57	 \textcolor{red}{0.49$\uparrow$}  &  65.31 $\pm$ 0.52
			\textcolor{red}{0.78$\uparrow$}      \\
			MetaOptNet-SVM($L^2$-SP) &    44.68 $\pm$ 0.59 \textcolor{red}{0.08$\uparrow$}  &  64.85 $\pm$ 0.53
			\textcolor{red}{0.32$\uparrow$}       \\
			MetaOptNet-SVM(Tanh + $L^2$-SP) &    44.69 $\pm$ 0.46 \textcolor{red}{0.09$\uparrow$}  &  64.90 $\pm$ 0.52
			\textcolor{red}{0.37$\uparrow$}      \\
			\bottomrule
		\end{tabular} \vspace{-2mm}
	\end{footnotesize}
\end{table*}

Considering the difference from few-shot classification, DG should require proper training strategy to eliminate the distribution shift. Recently, \cite{li2019feature} proposed a Feature-Critic Networks technique to produce an auxiliary loss function to guide learning on the meta-training set to produce more robust and effective feature extractor on the meta-validation set. From this view, Feature-Critic Networks attempts to simulate the distribution shift between the meta-training and meta-validation sets, i.e., $d_{\mathcal{F}} (D_t^{(tr)},D_t^{(val)})$. Due to its rationality and efficacy, in the following part, we employ Feature-Critic Networks as the baseline method, and further verify the effectiveness of proposed meta-regularization schemes for DG problem.


\subsection{Numerical Experiments}
In this section, we verify the effectiveness of the theory-induced training strategies on homogeneous and heterogeneous DG benchmark, respectively.

\subsubsection{Homogeneous DG experiments}
\textbf{Dataset.} The \textbf{PACS} dataset is a recent object recognition benchmark for domain generalisation \citep{li2017deeper}.
PACS contains 9991 images of size $224 \times 224$ from four different domains - Photo (P), Art painting (A), Cartoon (C) and Sketch (S). It has 7 categories across these domains: dog, elephant, giraffe, guitar, house, horse and person. It can be downloaded at \url{http://sketchx.eecs.qmul.ac.uk/}. We follow the standard protocol and perform leave-one-domain-out evaluation.

\textbf{Implementation details.} We parameterize the shared feature extrator (meta-learner) $h$ as a pre-trained {AlexNet} \citep{krizhevsky2012imagenet} network followed by \citep{li2019feature}, and the task-specific learners as linear classifier. The competed methods include Reptile \citep{nichol2018first}, CrossGrad \citep{shankar2018generalizing}, MetaReg \citep{balaji2018metareg} and FC \citep{li2019feature}.
We follow the experimental setting in \citep{li2019feature}. Specifically, the feature extrator and task-specific learners are trained with M-SGD optimizer (batch size/per meta-trian domain=32, batch size/per meta-test domain=16, lr=0.0005, weight decay=0.00005, momentum=0.9) for 45K iterations.
The Feature-Critic (set embedding variant) is adopted as MLP type, and is trained with the M-SGD optimizer (lr=0.001, weight decay=0.00005, momentum=0.9).
We denote ``Tanh'', ``$L^2$-SP'' and  ``Tanh + $L^2$-SP'' as the training strategies (\romannumeral 1), (\romannumeral 2) and combining training strategy (\romannumeral 1) and (\romannumeral 2).
Our implementation is based on the code provided on \url{https://github.com/liyiying/Feature_Critic/}.

\begin{table*}[t]
	\caption{Cross-domain recognition accuracy (\%) on PACS using train split in \citep{li2017deeper} for training. }\label{table5}
	\centering
	\begin{tiny}
		\begin{tabular}{ccccccccc}
			\toprule
			Target	&  Reptile &   CrossGrad &     MetaReg    &  AGG & FC & FC+Tanh  &FC+$L^2$-SP & FC+Tanh+$L^2$-SP    \\ \hline
			A  &  63.4 &   61.0 &    63.5     &  62.2 & 63.0   &  \textbf{64.2}  & 63.4  &  63.5  \\
			C &  67.5 &   67.2 &  69.5  & 66.2   &  67.2  & \textbf{70.6}   & 68.2  & 68.3\\
			P&  \textbf{88.7} &   87.6 &   87.4  &  87.0  & 87.9   &  87.5  & \textbf{88.0} & 87.6\\
			S&  55.9 &   55.9 &  59.1  &  54.9  & 59.5   & 60.9   & \textbf{62.1}  & 61.4 \\ \hline
			Ave.&  68.9 &   67.9 &  69.9  &  67.6  &  69.4  &  \textbf{70.8}  &  70.4 & 70.2 \\
			\bottomrule
		\end{tabular} \vspace{-2mm}
	\end{tiny}
\end{table*}

\textbf{Results.} The comparison with state-of-the-art methods on PACS dataset is shown in Table \ref{table5}. The results of Reptile, CrossGrad and MetaReg are directly taken from \citep{li2019feature}. We re-implement the result of AGG \citep{li2019feature} and FC. Note that compared with those reported in \citep{li2019feature}, the depicted results of these two methods have a slight drop on A, C, P domain, but a slight rise on S domain. These minor differences are due to our used higher Pytorch version. From the table, it can be easily observed that our proposed meta-regularization schemes consistently outperform other competing methods. Especially, as compared with the SOTA method FC, aiming to learn representations that generalise to new domains through training a supervised loss and explicitly simulates domain shift, our proposed meta-regularization scheme puts emphasis on decreasing the complexity of the feature extractor (meta-learner). Therefore, it can be combinationally used with FC to compensate its performance.

\subsubsection{Heterogeneous DG experiments}

\textbf{Datasets.} The \textbf{Visual Decathlon} (VD) dataset consists of ten well-known datasets from multiple visual domains \citep{rebuffi2017learning}. \textbf{FGVC-Aircraft Benchmark} \citep{maji2013fine} contains 10,000 images of aircraft, with 100 images for each of 100 different aircraft model variants. \textbf{CIFAR100} \citep{krizhevsky2009learning} contains colour images for 100 object categories. \textbf{Daimler Mono Pedestrian Classification
	Benchmark (DPed)} \citep{munder2006experimental} consists of 50,000 grayscale pedestrian and non-pedestrian images. \textbf{Describable Texture Dataset (DTD)} \citep{cimpoi2014describing} is a texture database, consisting of 5640 images, organized according to a list of 47 categories such as bubbly, cracked, marbled. \textbf{The German Traffic Sign Recognition (GTSR) Benchmark} \citep{stallkamp2012man} contains cropped images for 43 common traffic sign categories in different image resolutions. \textbf{Flowers102} \citep{nilsback2008automated} is a fine-grained classification task which contains 102 flower categories from the UK. \textbf{ILSVRC12 (ImageNet)} \citep{russakovsky2015imagenet} is the largest dataset in our benchmark,  containing 1000 categories and 1.2 million images. \textbf{Omniglot} \citep{lake2015human} consists of 1623 different handwritten characters from 50 different alphabets. \textbf{The Street View House Numbers (SVHN)} \citep{netzer2011reading} is a real-world digit recognition
dataset with around 70,000 images. \textbf{UCF101} \citep{soomro2012ucf101} is an action recognition dataset of realistic human action videos, collected from YouTube. It contains 13,320 videos for 101 action categories. Each video has been summarized  into an image based on a ranking principle using the Dynamic Image encoding of \citep{bilen2016dynamic}.
The images have been pre-processed to the size of $72 \times 72$.
Following the benchmark for DG in \citep{li2019feature}, we take the six larger datasets (CIFAR-100, DPed, GTSR, Omniglot, SVHN and ImageNet) as source domains and hold out the four smaller datasets (FGVC-Aircraft, DTD, Flowers102 and UCF101) as target domains.

\textbf{Implementation details.} We parameterize the shared feature extractor (meta-learner) $h$ as a pre-trained {ResNet-18} \citep{he2016deep} network followed by \citep{li2019feature}, and the task-specific learners as the SVM classifier. The competing methods include Reptile \citep{nichol2018first}, CrossGrad \citep{shankar2018generalizing}, MetaReg \citep{balaji2018metareg, li2019feature} and FC \citep{li2019feature}.
We also follow the experimental setting in \citep{li2019feature}, and train all components end-to-end using the AMSGrad (batch-size/per meta-train domain=64, batch-size/per meta-test domain=32, lr=0.0005, weight decay=0.0001) for 30k iterations where the learning rate decayed in 5K, 12K, 15K, 20K iterations by a factor 5, 10, 50, 100, respectively. Similar to MetaReg \citep{balaji2018metareg}, after the parameters are trained via meta learning, we fine-tune the network on all
source datasets for the final 10k iterations. The Feature-Critic (set embedding variant) is adopted as the MLP type, and is trained with AMSGrad optimizer (lr=0.0001, weight decay=0.00001).

\begin{table*}[t]
	\caption{Cross-domain recognition accuracy (\%) of four held out target datasets on VD using train split in \citep{li2019feature} for training. }\label{table6}
	\centering
	\begin{tiny}
		\begin{tabular}{cccccccccc}
			\toprule
			Target	&  Reptile &   CrossGrad &     MR  & MR-FL &  AGG & FC & FC+Tanh  &FC+$L^2$-SP & FC+Tanh+$L^2$-SP    \\ \hline
			FGVC-Aircraft &  19.62 &  19.92 &    20.91 &18.18    &  18.84 &   19.02    &   19.35  & \textbf{19.80}     & 19.60    \\
			DTD &  37.39&   36.54 &  32.34 & 35.69               &   37.52&  39.01   &  39.20    &  39.04     & \textbf{40.37}    \\
			Flowers102 &  58.26 &   57.84  &   35.49 & 53.04     &    57.64 & 58.13     &  58.62     &  58.43      &   \textbf{60.78}  \\
			UCF101 &  49.85 &   45.80 &  47.34  &48.10           &   47.88 & 51.12     &   51.38    &   51.48     &   \textbf{52.12}   \\ \hline
			Ave.&  41.28 &  40.03 &  34.02  &  38.75             &   40.47     &  41.82     &  42.14     &   42.19     &  \textbf{43.22}   \\
			\bottomrule
		\end{tabular} \vspace{0mm}
	\end{tiny}
\end{table*}

\begin{figure*}[t]
	\centering
	\includegraphics[width=0.85\textwidth]{./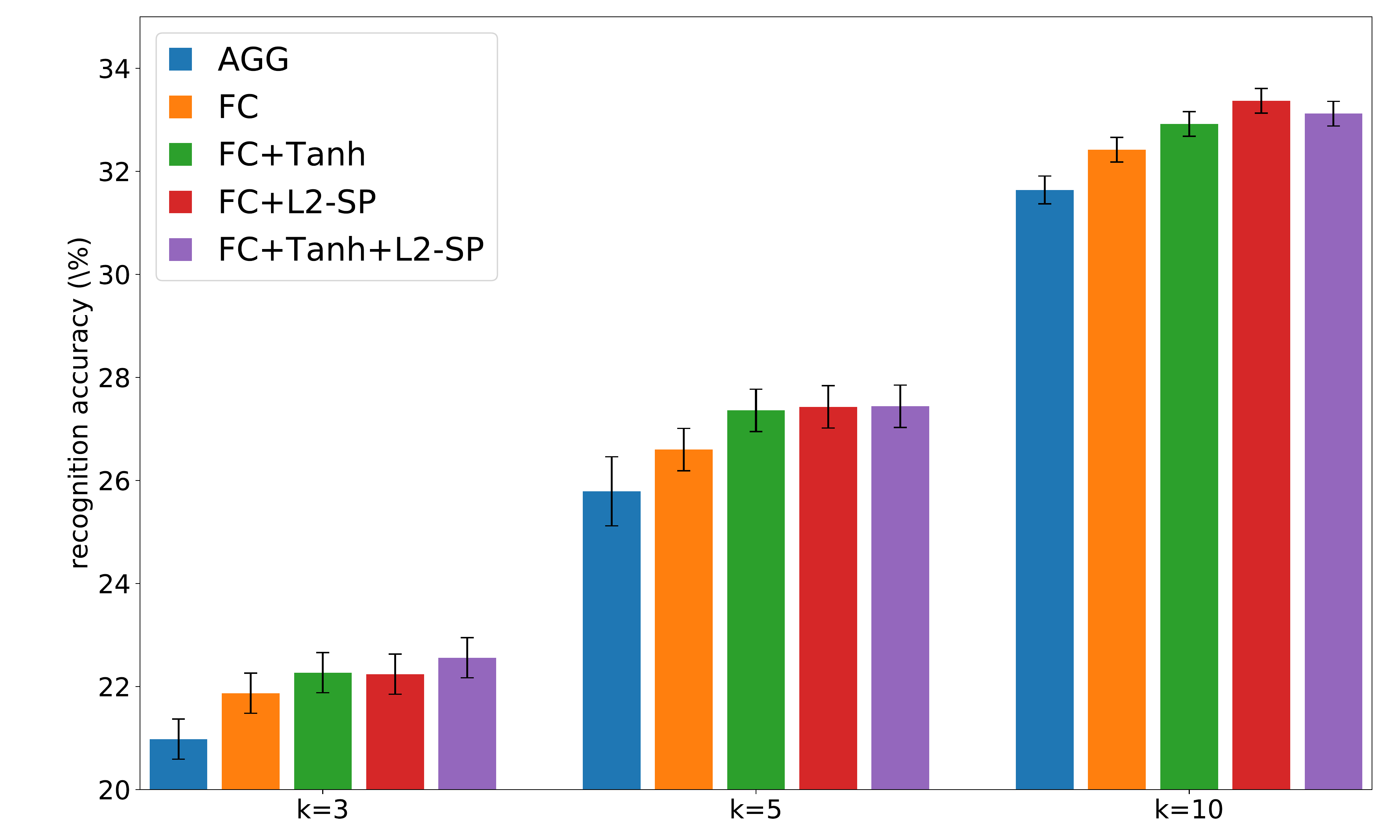}
	\vspace{-4mm}\caption{Recognition accuracies (\%) of all competing methods averaged over 5 test runs on VD K-shot learning
	}\label{fig4}
\end{figure*}

\textbf{Results.} Table \ref{table6} shows the classification accuracy on four hold-out target domain. The results of Reptile, CrossGrad, MR, MR-FL are directly taken from \citep{li2019feature}. We re-implemented the result of AGG \citep{li2019feature} and FC. Different from homogeneous DG, heterogeneous DG assumes that the label spaces among different domains are different, and thus the task-specific learners can not be shared. Thus learning a robust off-the-shelf feature extractor is very important for final performance of heterogeneous DG.
From the table, it can be seen that our proposed meta-regularization schemes produce more robust and effective feature extractor for unseen domains compared with other competing methods. Especially, although our methods are easily rebuilt from FC, they can attain evident improvements. Furthermore, we repeat the evaluation assuming that few-shot (3-shot, 5-shot, 10-shot) samples of training splits are available for SVM training in the meta-test stage. Fig. \ref{fig4} reports the target domain test accuracies under these settings. It can be easily observed that our proposed meta-regularization schemes provide a consistent improvement over the baseline, and produce a superior off-the-shelf feature representation. It can thus be substantiated that the proposed meta-regularization strategy is hopeful to generally improve the training quality of current meta learning methods to produce more robust and effective feature extractor.

\section{Online Methodology-Learning Strategy}

Here we further consider a sequential setting where an agent is faced with tasks one after another. Specifically, let's denote $T$ be the number of tasks and for each task $t \in \{1,\cdots,T\}$, and $D_t = (D_t^{tr}, D_t^{val})$ be the corresponding task sequence. Our goal is to find an estimator of $h\in \mathcal{H}$ that improves  incrementally as the number of observed tasks $T$ increases. Algorithm \ref{alg3:online meta-training} shows the online methodology-learning process, in which we take inspiration from the form of the ``follow the leader algorithm" (FTL) presented in the following Reference \cite{shalev2012online}. Correspondingly, we call our advanced algorithm in the meta-level as the ``follow the meta-leader algorithm" (FTML) as follows.

\begin{algorithm}[t]
	\caption{Online Methodology-Learning Algorithm: FTML}
	\label{alg3:online meta-training}
	\begin{algorithmic}
		\State {\bfseries Input:} $T$ number of tasks $\{D_t = (D_t^{tr}, D_t^{val})\}$.
		\State {\bfseries Initialization:} $h_0 \in \mathcal{H}$
		\State {\bfseries For $t=0$ to $T-1$:}
		\State \ \ \ \ \ {\bfseries (1)} The meta-learner incrementally receives a task dataset $D_t$;
		
		\State \ \ \ \ \ {\bfseries (2)} Run the inner online algorithm with meta-learner $h_t$ on $D_t^{tr}$, returning the predictor model $f_t^{h_t} = f_{t} - \alpha \frac{\partial \mathcal{L}^{task}(f, h_t, D_t^{tr})}{\partial f}$;
		
		\State \ \ \ \ \ {\bfseries (3)} Incrementally incur the errors $\ell_t(h_t) = \bm{L} (f_t^{h_t}, D_t^{val})$;
		
		\State \ \ \ \ \ {\bfseries (4)} Update the meta-learner via
		\begin{align}
			h_{t+1} = \arg\min_{h} \sum_{k=1}^t \ell_k (h).
		\end{align}
		\State {\bfseries End For}
	\end{algorithmic}
\end{algorithm}

The FTML algorithm template can be interpreted as the agent playing the best meta-learner in hindsight if the learning process was to stop at the round $t$. We will show that under standard assumptions on the losses of analyzing bi-level optimization, the online learning-the-methodology algorithm has the corresponding regret guarantees. Note that we may not have full access to $\ell_t(h_t)$ when it is the population risk, and we only have a finite sized dataset. We can draw upon stochastic optimization strategies to solve the optimization problem.

We make the following assumptions about each loss function in the learning problem for all $t$. Let  $z=(f,h)$ denote all parameters. Assumption 4 (1)-(3) are largely standard in online learning \citep{shalev2012online,cesa2006prediction}, and Assumption 4 (4)-(5)  are largely used in bi-level optimization analysis, e.g.,  \citep{ji2021bilevel}.

Let's make the following assumptions about each loss function in the learning problem for all $t$. Let  $z=(f,h)$ denote all parameters. We first list the following necessary assumptions. Note that in the following, assumptions (1)-(3) are largely standard in online learning (see References \citep{shalev2012online,cesa2006prediction}), and assumptions (4)-(5) are always used in bi-level optimization analysis, e.g., \citep{ji2021bilevel}.

\begin{assumption} \label{assumptiononline}
	(1) The meta-loss $\bm{L}$ has gradients bounded by $G$, i.e., $\|\nabla \bm{L} (f)\| \leq G$.
	
	(2) The meta-loss $\bm{L}$ is $\beta$-smooth, i.e., $\|\nabla \bm{L} (f) - \nabla \bm{L} (f') \| \leq \beta \|f-f'\|$.
	
	(3) The meta-loss $\bm{L}$ is $\mu$-strongly convex, i.e., $\|\nabla \bm{L} (f) - \nabla \bm{L} (f') \| \geq \mu \|f-f'\|$.
	
	(4) Suppose the task-specific loss $\bm{L}^{task}$ is $\rho$-strongly convex, i.e., $\left\|\frac{\partial \bm{L}^{task}(z)}{\partial f} - \frac{\partial \bm{L}^{task}(z')}{\partial f}\right\| \geq \rho \|z-z'\|, \forall z, z'$. 	
	
	(5) Suppose the task-specific loss $\bm{L}^{task}$ is $\tau$-smooth, i.e., $\left\|\frac{\partial \bm{L}^{task}(z)}{\partial f} - \frac{\partial \bm{L}^{task}(z')}{\partial f}\right\| \leq \tau \|z-z'\|, \forall z, z'$.

\end{assumption}

Based on the aforementioned assumptions, we can then deduce the following theorem:

\begin{theorem} \label{regret}
	Suppose that for all $t$, meta loss and task-specific loss satisfy Assumption \ref{assumptiononline}. Let $h_T$ be the output of Algorithm \ref{alg3:online meta-training}, then FTML enjoys the following regret guarantee
	\begin{align}
		\mathcal{R}_T = \sum_{t=1}^T \ell_t(h_t) - \min_h \sum_{t=1}^T \ell_t(h) \leq \frac{4 G^2 (1+\log(T)) }{\tau}.
	\end{align}
\end{theorem}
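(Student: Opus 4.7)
The plan is to reduce Theorem \ref{regret} to the classical Follow-The-Leader (FTL) regret bound for strongly convex loss sequences, by first showing that each per-round loss $\ell_t(h) = \bm{L}(f_t^h, D_t^{val})$ inherits strong convexity and bounded gradients as a function of $h$. Concretely, since $f_t^h = f_t - \alpha \frac{\partial \bm{L}^{task}(f, h, D_t^{tr})}{\partial f}$, the chain rule gives $\nabla_h \ell_t(h) = J_h(f_t^h)^{\mathsf{T}} \nabla_f \bm{L}(f_t^h)$, where the Jacobian is $J_h(f_t^h) = -\alpha \frac{\partial^2 \bm{L}^{task}}{\partial f \partial h}$. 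Using the $\tau$-smoothness bound on the cross Hessian of $\bm{L}^{task}$, together with the $G$-Lipschitz gradient of $\bm{L}$, I obtain $\|\nabla_h \ell_t(h)\| \leq \alpha \tau G$. For the curvature direction, the $\rho$-strong convexity and $\tau$-smoothness of $\bm{L}^{task}$ guarantee that the one-step map $h \mapsto f_t^h$ is invertible and Lipschitz with a controlled lower bound on its singular values (provided $\alpha$ is chosen small enough, e.g.\ $\alpha \le 1/\tau$), which then transfers the $\mu$-strong convexity of $\bm{L}$ into an effective strong convexity $\sigma_\ell$ of $\ell_t$ of order $\alpha^2 \rho^2 \mu / \tau$ (or similar).

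Second, I would invoke the standard Be-The-Leader lemma: for $h_{t+1} = \arg\min_h \sum_{k=1}^t \ell_k(h)$, one has $\sum_{t=1}^T \ell_t(h_{t+1}) \leq \min_h \sum_{t=1}^T \ell_t(h)$, so that the regret reduces to $\mathcal{R}_T = \sum_{t=1}^T [\ell_t(h_t) - \ell_t(h_{t+1})]$. Because the cumulative objective $\sum_{k<t} \ell_k$ is $(t-1)\sigma_\ell$-strongly convex and because $h_{t+1}$ differs from $h_t$ only by the addition of one new $\sigma_\ell$-strongly convex term with gradient bounded by $G_\ell = \alpha \tau G$, a routine displacement estimate gives $\|h_{t+1} - h_t\| \leq \tfrac{2 G_\ell}{\sigma_\ell\, t}$. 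Combining this with the Lipschitz bound on $\ell_t$ yields $\ell_t(h_t) - \ell_t(h_{t+1}) \leq \tfrac{2 G_\ell^2}{\sigma_\ell\, t}$, and summing via $\sum_{t=1}^T 1/t \leq 1 + \log T$ produces $\mathcal{R}_T \leq \tfrac{2 G_\ell^2 (1+\log T)}{\sigma_\ell}$. A careful choice of $\alpha$ together with tracking of the constants $\mu,\beta,\rho,\tau$ then collapses the prefactor into the claimed $\tfrac{4 G^2}{\tau}$.

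The main obstacle is the first step: propagating the strong convexity assumptions on the meta-loss $\bm{L}$ and on the task-specific loss $\bm{L}^{task}$ through the one-step inner update $f_t^h$ to obtain strong convexity of the composed map $\ell_t(\cdot)$. The inner update couples $h$ and $f$ through the mixed second derivative of $\bm{L}^{task}$, and controlling the smallest singular value of $J_h(f_t^h)$ (so that curvature of $\bm{L}$ is not annihilated after composition) is the delicate piece; it forces the step size $\alpha$ to sit in a narrow range governed jointly by $\rho,\tau$. Once this composition lemma is in hand, the remainder is a bookkeeping application of the textbook logarithmic FTL regret argument \citep{shalev2012online}, and no new ideas are required to reach the stated $O((\log T)/\tau)$ rate.
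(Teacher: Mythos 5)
Your overall route is the same as the paper's: first a composition lemma showing that each $\ell_t(h)=\bm{L}(f_t^{(h)},D_t^{val})$ is Lipschitz and strongly convex in $h$, then the textbook logarithmic-regret bound for follow-the-leader on strongly convex losses (your Be-The-Leader plus stability/displacement argument is just an inlined version of the Cesa-Bianchi--Lugosi result the paper invokes with $A=G$, $B=\tau$). The gap is in the composition step, and it is exactly the step you flag as delicate. Your suggestion that the curvature transfer works "provided $\alpha$ is chosen small enough, e.g.\ $\alpha\le 1/\tau$" points in the wrong direction. Decomposing $\nabla\ell_t(h)-\nabla\ell_t(h')$ as the paper does, the favourable term is $\frac{\partial f_t^{(h')}}{\partial h}\bigl(\nabla\bm{L}(f_t^{(h)})-\nabla\bm{L}(f_t^{(h')})\bigr)$, of size at least $\alpha\rho\cdot\mu\,\|f_t^{(h)}-f_t^{(h')}\|\ge\alpha\mu\rho\,L'(\mathcal{F})\,\|h-h'\|$, while the error term caused by the variation of the Jacobian, $\bigl(\frac{\partial f_t^{(h)}}{\partial h}-\frac{\partial f_t^{(h')}}{\partial h}\bigr)\nabla\bm{L}(f_t^{(h)})$, is of size up to $\alpha\tau G\|h-h'\|$. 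Since the lower Lipschitz constant $L'(\mathcal{F})$ of the inner update is itself of order $\alpha\rho$, the transferred curvature is effectively second order in $\alpha$ whereas the error term is first order; letting $\alpha$ shrink therefore destroys, rather than secures, strong convexity of $\ell_t$. This is why the paper imposes a \emph{lower} bound $\alpha\ge 2\tau/(\mu\rho L'(\mathcal{F}))$ together with the upper bounds $\alpha\le\min\{1/G,\,1/(\beta L(\mathcal{F}))\}$: only inside this two-sided window does one get $\|\nabla\ell_t(h)-\nabla\ell_t(h')\|\ge 2\tau\|h-h'\|-\tau\|h-h'\|=\tau\|h-h'\|$, i.e.\ strong convexity modulus exactly $\tau$ (and $2\tau$-smoothness).

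The second problem is that your constants do not close. With your $G_\ell=\alpha\tau G$ and $\sigma_\ell$ of order $\alpha^2\rho^2\mu/\tau$, the bound $2G_\ell^2(1+\log T)/\sigma_\ell$ is of order $\tau^3G^2(1+\log T)/(\rho^2\mu)$, which does not reduce to $4G^2(1+\log T)/\tau$ for any admissible $\alpha$ without additional, unassumed relations among $\mu,\rho,\tau$. The paper reaches the stated constant precisely because its step-size window pins the strong convexity parameter at $B=\tau$ while the gradient bound of Assumption 5(1) is used for the Lipschitz constant $A=G$; the sentence "a careful choice of $\alpha$ collapses the prefactor into $4G^2/\tau$" is the part of your plan that actually has to be proved, and as sketched it is not.
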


The above theorem provides the online methodology-learning bound, which tells us that there exists a large family of online meta-learning algorithms that can enjoy sub-linear regret under some mild conditions. Note that our theorem enjoys the same regret guarantees (up to constant factors) as FTL does in the comparable setting (with strongly convex losses) \citep{shalev2012online}.

\section{Conclusion}
This study has introduced a SLeM framework for understanding and formulating meta learning, in which a meta-learner is extracted to get the hyper-parameter prediction function for machine learning over training task set. The meta-learner is represented as an explicit function mapping from task information to the hyper-parameters involved in the learning process, facilitating the meta-learned meta-learner able to be readily transferred to new tasks to adaptively set their hyper-parametric configurations. Besides, the corresponding learning theory is developed for this SLeM meta learning framework. Very similar to the SRM principle used to improve generalization capability of the extracted learner in conventional machine learning, this theory can help conduct some useful meta-regularization strategies for ameliorating the generalization capability of the extracted meta-learner as a diverse-task-transferable learning methodology. We have further substantiated the beneficial effects brought by these meta-regularization strategies in typical meta learning applications, including few-shot regression, few-shot classification, and domain generalization. Especially, the new meta-regularization schemes can be easily embedded into the current meta learning programs by directly replacing the form of the output activation function in the learning model, or revising the parameter updating step for the learner or meta-learner from the unregularized solution to the regularized one. These meta-regularized strategies are thus hopeful to be easily and extensively used in more comprehensively meta learning tasks.

In future research, we'll further ameliorate the presented learning theory of SLeM, and especially try to deduce much tighter upper bound for evaluating the task transfer error to reveal more intrinsic generalization insight of this SLeM meta learning framework. Besides, we'll investigate more recent bi-level optimization techniques specifically considering the non-convexity of the inner and outer losses of our meta-learning framework, and ameliorate our theoretical results  to improve the learning bounds with relatively loose conditions. Furthermore, we'll make endeavor to explore more helpful meta-regularization schemes useful for more comprehensive and diverse meta learning problems, e.g., \citep{shu2023dac}. Specifically, we'll continue to explore how to effectively design the structure and rectify the parameter learning of meta-learner by enforcing meta-regularization on the model through certain meta-regularization tricks, so as to improve its generalization capability among variant tasks. We believe this will be potentially beneficial to the meta learning field, just like the significance of the SRM principle in the conventional machine learning field.


\acks{We thank the anonymous reviewers for insightful comments and suggestions. This research was supported in part by the National Key Research and Development
	 Program of China under Grant 2020YFA0713900, in part by the National Natural Science Foundation of China (NSFC) Project under Grants 61721002, and in part by the Macao Science and Technology Development Fund under Grant 061/2020/A2.}


\newpage

\appendix

\section{Theoretical Tools}\label{tools}
In this part, we will show some important auxiliary theoretical results and assumptions preparing for the proofs of the main theorems.

\subsection{Some Properties of Gaussian Complexity}
Here, we present some properties of Gaussian complexity, and the proofs can refer to the references therein.
\begin{proposition}[Ledoux-Talagrand contraction principle \citep{ledoux2013probability}]\label{proposition1}
	Let $\mathcal{X}$ be any set, $\mathcal{F}$ be a class of functions: $f:\mathcal{X}\rightarrow \mathbb{R}^d$. Then for $N$ data points, $\mathbf{X} = (x_1,\cdots,x_N)^{\mathsf{T}}$, and a fixed, centered $L$-Lipschitz function $\phi:\mathbb{R}^d\rightarrow \mathbb{R}$, we have
	\begin{align*}
		\hat{\mathcal{G}}_{\mathbf{X}}(\phi(\mathcal{F})) \leq L\hat{\mathcal{G}}_{\mathbf{X}}(\mathcal{F}); \ \ \ \hat{\mathfrak{R}}_{\mathbf{X}}(\phi(\mathcal{F})) \leq L\hat{\mathfrak{R}}_{\mathbf{X}}(\mathcal{F}),
	\end{align*}
	where $\hat{\mathcal{G}}_{\mathbf{X}}(\mathcal{F})$ and $\hat{\mathfrak{R}}_{\mathbf{X}}(\mathcal{F})$ denote the empirical Gaussian complexity and empirical Rademacher complexity, respectively.
\end{proposition}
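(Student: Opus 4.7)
The plan is to prove the two inequalities separately, treating the Rademacher case by the classical inductive peeling argument and the Gaussian case by a Slepian-type comparison of Gaussian processes. In both cases the centered $L$-Lipschitz hypothesis on $\phi$ is used to transfer contractivity from the scalar output of $\phi$ back to the vector-valued family $\mathcal{F}$. I would first reduce to the case $L=1$ by rescaling $\phi$, and note that since $\phi$ is centered (so $\phi(0)=0$), no offset term survives; this lets the Lipschitz estimate $|\phi(u)-\phi(v)|\leq \|u-v\|$ be applied coordinate-wise after a Cauchy–Schwarz reorganization.

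For the Rademacher bound, I would argue by induction on the sample size $N$. Fix a configuration of the Rademacher signs $\epsilon_2,\ldots,\epsilon_N$ and isolate the contribution of the first sample. Using the standard identity $\mathbb{E}_{\epsilon_1}\sup_f[a_f+\epsilon_1 b_f]=\tfrac12\sup_{f,f'}[a_f+a_{f'}+(b_f-b_{f'})]$, I would replace $b_f=\phi(f(x_1))-\phi(f'(x_1))$ by $\|f(x_1)-f'(x_1)\|$ via Lipschitzness, then re-symmetrize to reintroduce $\epsilon_1$ acting on the vector coordinates $f_k(x_1)$ with independent signs. Iterating this peeling over $i=1,\ldots,N$ converts the supremum over $\phi(\mathcal{F})$ into a supremum over $\mathcal{F}$ with the multi-index Rademacher sum as in the paper's definition, giving the claimed bound with constant $L$.

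For the Gaussian bound, I would introduce the two centered Gaussian processes $Y_f:=\frac{1}{N}\sum_i g_i\,\phi(f(x_i))$ and $X_f:=\frac{L}{N}\sum_{i,k}g_{ik}\,f_k(x_i)$ indexed by $f\in\mathcal{F}$. A direct computation using the $L$-Lipschitz property of $\phi$ and the i.i.d.\ standardness of the $g$-variables shows the increment comparison
\begin{equation*}
\mathbb{E}|Y_f-Y_{f'}|^2=\frac{1}{N^2}\sum_i|\phi(f(x_i))-\phi(f'(x_i))|^2\leq \frac{L^2}{N^2}\sum_{i,k}|f_k(x_i)-f_{k'}(x_i)|^2=\mathbb{E}|X_f-X_{f'}|^2.
\end{equation*}
The Slepian--Sudakov--Fernique inequality then yields $\mathbb{E}\sup_f Y_f\leq \mathbb{E}\sup_f X_f$, which is exactly $\hat{\mathcal{G}}_{\mathbf{X}}(\phi(\mathcal{F}))\leq L\,\hat{\mathcal{G}}_{\mathbf{X}}(\mathcal{F})$.

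The main obstacle is the vector-valued nature of $\mathcal{F}$: in the classical scalar Ledoux--Talagrand statement the contraction is essentially immediate, but here one must carefully match the sum over coordinate-indexed Rademacher/Gaussian variables in the definition of $\hat{\mathcal{G}}_{\mathbf{X}}(\mathcal{F})$ against the single-index sum that appears for the scalar class $\phi(\mathcal{F})$. In the Rademacher induction this shows up as a need to re-randomize the peeled term so the resulting bound has only one factor of $L$ rather than $L\sqrt{d}$; in the Gaussian case it is handled cleanly by the increment-variance calculation above, which is why the Slepian route is the cleanest path to the stated constant.
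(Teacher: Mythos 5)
The paper gives no proof of this proposition (it is imported wholesale from the cited reference), so the only question is whether your argument stands on its own. Your Gaussian half does: the increment computation $\mathbb{E}|Y_f-Y_{f'}|^2\le\mathbb{E}|X_f-X_{f'}|^2$ is correct (modulo the typo $f_{k'}$ where you mean $f'_k$), and Sudakov--Fernique then yields $\hat{\mathcal{G}}_{\mathbf{X}}(\phi(\mathcal{F}))\le L\,\hat{\mathcal{G}}_{\mathbf{X}}(\mathcal{F})$ with the stated constant. This is the standard route for the vector-valued Gaussian contraction. Note that the centering hypothesis $\phi(0)=0$ plays no role here, since Sudakov--Fernique needs only the increment comparison between two centered Gaussian processes.

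The Rademacher half has a genuine gap, and it sits exactly at the step you flag as the main obstacle. After the peeling identity and the Lipschitz bound you are left with $\tfrac12\sup_{f,f'}\bigl[a_f+a_{f'}+L\|f(x_1)-f'(x_1)\|\bigr]$, and to re-symmetrize you must dominate the Euclidean norm by a coordinatewise Rademacher average, i.e.\ an inequality of the form $\|v\|\le c\,\mathbb{E}_{\varepsilon}|\langle\varepsilon,v\rangle|$. By Khintchine the best such constant is $c=\sqrt{2}$ (attained at $v=(1,1)$ in $d=2$), so the re-randomization cannot be carried out with constant $1$; this is precisely why Maurer's vector-contraction inequality reads $\hat{\mathfrak{R}}_{\mathbf{X}}(\phi(\mathcal{F}))\le \sqrt{2}\,L\,\hat{\mathfrak{R}}_{\mathbf{X}}(\mathcal{F})$ rather than with constant $L$. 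Indeed the constant-$L$ Rademacher clause is false for $d\ge 2$: take $N=1$, $\phi(u)=(u_1+u_2)/\sqrt{2}$ (which is $1$-Lipschitz with $\phi(0)=0$) and let $\mathcal{F}$ realize the two values $(1,1)$ and $(-1,-1)$ at $x_1$; the left-hand side equals $\sqrt{2}$ while the right-hand side equals $\mathbb{E}|\varepsilon_1+\varepsilon_2|=1$. So your plan proves the Gaussian inequality as stated, but the Rademacher inequality can only be obtained with an extra absolute constant (either $\sqrt{2}$ via Maurer's argument, or worse by detouring through the Gaussian bound using $\hat{\mathfrak{R}}\le\sqrt{\pi/2}\,\hat{\mathcal{G}}$). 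The proposition's Rademacher clause should carry that factor; the paper's downstream uses are insensitive to such constants, but a proof of the statement as literally written is not attainable.
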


\begin{proposition}[The relationship between Gaussian complexity and Rademacher complexity \citep{wainwright2019high}] \label{proposition2}
	The	empirical Rademacher complexity	can be lower/upper bounded by empirical Gaussian complexity as follows:
	\begin{align*}
		\hat{\mathcal{G}}_{\mathbf{X}}(\mathcal{F}) \leq 2\sqrt{\log(N)}\cdot \hat{\mathfrak{R}}_{\mathbf{X}}(\mathcal{F}), \  \hat{\mathfrak{R}}_{\mathbf{X}}  (\mathcal{F}) \leq \sqrt{\frac{\pi}{2}} \hat{\mathcal{G}}_{\mathbf{X}}(\mathcal{F}) \leq \frac{3}{2} \hat{\mathcal{G}}_{\mathbf{X}}(\mathcal{F}),
	\end{align*}
	where $\mathbf{X} = (x_1,\cdots,x_N)^{\mathsf{T}}$.
\end{proposition}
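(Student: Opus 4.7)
I would prove the three inequalities in order of increasing difficulty. The rightmost inequality $\sqrt{\pi/2}\leq 3/2$ is a trivial numerical fact ($\sqrt{\pi/2}\approx 1.2533$), so only the first two inequalities require real work.

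For the middle inequality $\hat{\mathfrak{R}}_{\mathbf{X}}(\mathcal{F})\leq\sqrt{\pi/2}\,\hat{\mathcal{G}}_{\mathbf{X}}(\mathcal{F})$, my plan is to exploit the distributional identity $g_i\stackrel{d}{=}|g_i|\epsilon_i$, where the Rademacher variable $\epsilon_i=\mathrm{sign}(g_i)$ is independent of the magnitude $|g_i|$. Inserting this into the definition of $\hat{\mathcal{G}}_{\mathbf{X}}(\mathcal{F})$, swapping the outer and inner expectations so that $\mathbb{E}_{|g|}$ sits inside, and then applying Jensen's inequality to the convex supremum to pull $\mathbb{E}_{|g|}$ past it, one obtains
\begin{align*}
\hat{\mathcal{G}}_{\mathbf{X}}(\mathcal{F})
=\mathbb{E}_{\epsilon}\mathbb{E}_{|g|}\sup_{f\in\mathcal{F}}\tfrac{1}{N}\sum_{i=1}^N |g_i|\,\epsilon_i f(x_i)
\geq \mathbb{E}_{\epsilon}\sup_{f\in\mathcal{F}}\tfrac{1}{N}\sum_{i=1}^N \mathbb{E}|g_i|\,\epsilon_i f(x_i)
=\sqrt{\tfrac{2}{\pi}}\,\hat{\mathfrak{R}}_{\mathbf{X}}(\mathcal{F}),
\end{align*}
using $\mathbb{E}|g_1|=\sqrt{2/\pi}$. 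Rearranging gives the stated bound.

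For the leading inequality $\hat{\mathcal{G}}_{\mathbf{X}}(\mathcal{F})\leq 2\sqrt{\log N}\,\hat{\mathfrak{R}}_{\mathbf{X}}(\mathcal{F})$, the $\sqrt{\log N}$ factor must originate from a Gaussian maximal inequality. Starting again from the decomposition $g_i\stackrel{d}{=}|g_i|\epsilon_i$, I would condition on the magnitudes and let $M=\max_{j}|g_j|$. Since each normalized weight $|g_i|/M\in[0,1]$, the coordinate-wise linear maps $x\mapsto (|g_i|/M)x$ are centered and $1$-Lipschitz, so the Ledoux--Talagrand contraction principle (in the per-coordinate form extending Proposition~\ref{proposition1}) lets me absorb these weights into the Rademacher process, yielding $\mathbb{E}_{\epsilon}\sup_{f}\tfrac{1}{N}\sum |g_i|\epsilon_i f(x_i)\leq M\cdot \hat{\mathfrak{R}}_{\mathbf{X}}(\mathcal{F})$ pointwise in $|g|$. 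Taking the outer expectation over $|g|$ and invoking the classical Gaussian maximal inequality applied to the $2N$ symmetrized variables $\pm g_i$ gives $\mathbb{E}M\leq \sqrt{2\log(2N)}\leq 2\sqrt{\log N}$ for $N\geq 2$, completing the chain.

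The main obstacle is the contraction step in the first inequality: Proposition~\ref{proposition1} as stated handles a single fixed Lipschitz function, so one must invoke its well-known per-coordinate generalization and carefully normalize the magnitudes $|g_i|$ by $M$ so that the resulting scalar maps have Lipschitz constant at most $1$. Once this reduction is in place, the remaining ingredients are routine Jensen's inequality and the standard Gaussian maximal bound.
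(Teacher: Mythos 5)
The paper does not actually prove this proposition: it is stated as an imported result, and the appendix explicitly says the proofs ``can refer to the references therein'' (namely \citep{wainwright2019high}). So there is no in-paper argument to compare against; what matters is whether your reconstruction is sound, and it is. Your middle inequality is the standard symmetrization argument: write $g_i \stackrel{d}{=} |g_i|\epsilon_i$ with $\epsilon_i$ independent of $|g_i|$, apply Fubini and then $\sup_f \mathbb{E}_{|g|}(\cdot)\leq \mathbb{E}_{|g|}\sup_f(\cdot)$, and use $\mathbb{E}|g_i|=\sqrt{2/\pi}$; the last inequality in the chain is just $\sqrt{\pi/2}\leq 3/2$ together with nonnegativity of $\hat{\mathcal{G}}_{\mathbf{X}}(\mathcal{F})$. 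Your first inequality is also the textbook route: condition on the magnitudes, normalize by $M=\max_j|g_j|$, apply the contraction principle with the per-coordinate $1$-Lipschitz maps $t\mapsto (|g_i|/M)\,t$, and finish with the Gaussian maximal inequality $\mathbb{E}M\leq\sqrt{2\log(2N)}\leq 2\sqrt{\log N}$.

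Two small points worth making explicit. First, you correctly flag that Proposition \ref{proposition1} as stated in the paper uses a single fixed contraction $\phi$, whereas your argument needs the version with a different contraction in each coordinate; that is the original Ledoux--Talagrand form, and since your suprema carry no absolute values the factor-one version applies, so the step is legitimate but does rest on a slightly stronger lemma than the one the paper records. Second, the comparison $\sqrt{2\log(2N)}\leq 2\sqrt{\log N}$ requires $N\geq 2$, so the first inequality of the proposition is vacuous (indeed false) at $N=1$; this is a defect of the statement as quoted, not of your proof, and it is harmless everywhere the proposition is invoked in the paper since $nT\geq 2$ there.
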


\subsection{Task-averaged Estimation Error}
For the single-task setting, the uniform bounds on the estimation error based on the Rademacher complexity can be found in \citep{mohri2018foundations}. Here we show the estimation error for the multi-task setting.
\begin{theorem}\label{thaverage}
	Let $\mathcal{F}$ be a family of function mappings $f_t:\mathcal{X}\rightarrow [0,B]$, and let $\mu_1,\mu_2,\cdots,\mu_T$ be probability measures on $\mathcal{X}$ with $\mathbf{X} =(\mathbf{X}_1,\cdots,\mathbf{X}_T) \sim \prod_{t=1}^{T} (\mu_t)^{n_t}$, where $\mathbf{X}_t=(x_{t1}, \cdots,x_{t,n_t})$ for each $t\in[T]$. Let $\mathbf{f} = (f_1,\cdots,f_T)$, we define $\hat{\mathbb{E}}_{\mathbf{X}}[\mathbf{f}] = \frac{1}{T}\sum_{t=1}^T \frac{1}{n_t}\sum_{j=1}^{n_t}  f_t(x_{ti}) $, and $\mathbb{E}[\mathbf{f}]= \mathbb{E}_{\mathbf{X}} [\hat{\mathbb{E}}_{\mathbf{X}}[\mathbf{f}]]$. Then, for any $\delta>0$, with probability at least $1-\delta$ we have
	\begin{align*}
		\mathbb{E}[\mathbf{f}]\leq \hat{\mathbb{E}}_{S}[\mathbf{f}]  + 2\hat{\mathfrak{R}}_{\mathbf{X}}(\mathcal{F}^{\otimes T}) + 3\frac{B}{T} \sqrt{\sum_{t=1}^T \frac{1}{n_t}} \sqrt{\frac{\log \frac{2}{\delta}}{2}},
	\end{align*}
	where $\hat{\mathfrak{R}}_{\mathbf{X}}(\mathcal{F}^{\otimes T})=\mathbf{E}_{\sigma} [\sup_{\mathbf{f}\in \mathcal{F}^{\otimes T}}\frac{1}{T}\sum_{t=1}^T \frac{1}{n_t}\sum_{j=1}^{n_t} \sigma_{tj} f_t(x_{ti})] $, and $\mathcal{F}^{\otimes T} =\underbrace{\mathcal{F}\times \mathcal{F} \cdots \mathcal{F}}_{T} $.
\end{theorem}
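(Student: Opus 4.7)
\textbf{Proof Plan for Theorem \ref{thaverage}.} The plan is to follow the classical two-step Rademacher complexity argument of \cite{mohri2018foundations}, but adapted to the two-level (tasks $\times$ within-task samples) averaging structure. Define the one-sided deviation functional
\begin{align*}
\Phi(\mathbf{X}) \;=\; \sup_{\mathbf{f}\in \mathcal{F}^{\otimes T}} \bigl(\mathbb{E}[\mathbf{f}] - \hat{\mathbb{E}}_{\mathbf{X}}[\mathbf{f}]\bigr).
\end{align*}
The goal is to (i) concentrate $\Phi$ around its expectation via McDiarmid's inequality, (ii) bound $\mathbb{E}[\Phi]$ by the population Rademacher complexity via symmetrization, and (iii) relate the population Rademacher complexity to the empirical one via a second McDiarmid step.

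First, I verify the bounded-differences property of $\Phi$. Since each $f_t$ takes values in $[0,B]$ and enters $\hat{\mathbb{E}}_{\mathbf{X}}[\mathbf{f}]$ with weight $1/(T n_t)$, replacing a single sample $x_{ti}$ with another i.i.d.\ copy changes the empirical quantity by at most $B/(T n_t)$, hence changes $\Phi$ by the same amount. Thus the per-coordinate bounded-difference constant is $c_{ti}=B/(T n_t)$, and their sum of squares is
\begin{align*}
\sum_{t=1}^{T}\sum_{i=1}^{n_t} c_{ti}^2 \;=\; \sum_{t=1}^{T} n_t \cdot \frac{B^2}{T^2 n_t^2} \;=\; \frac{B^2}{T^2}\sum_{t=1}^{T}\frac{1}{n_t}.
\end{align*}
McDiarmid's inequality then yields, with probability at least $1-\delta/2$,
\begin{align*}
\Phi(\mathbf{X}) \;\leq\; \mathbb{E}[\Phi(\mathbf{X})] + \frac{B}{T}\sqrt{\sum_{t=1}^{T}\frac{1}{n_t}}\sqrt{\frac{\log(2/\delta)}{2}}.
\end{align*}

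Next, I apply the standard ghost-sample symmetrization to the two-level average: introduce an independent copy $\mathbf{X}'$ drawn from the same product distribution, write $\mathbb{E}[\mathbf{f}]=\mathbb{E}_{\mathbf{X}'}[\hat{\mathbb{E}}_{\mathbf{X}'}[\mathbf{f}]]$, pull the expectation outside the supremum via Jensen, and insert Rademacher signs $\sigma_{tj}\in\{\pm 1\}$ indexed jointly by task $t$ and within-task position $j$. This yields
\begin{align*}
\mathbb{E}[\Phi(\mathbf{X})] \;\leq\; 2\,\mathbb{E}_{\mathbf{X}}\bigl[\hat{\mathfrak{R}}_{\mathbf{X}}(\mathcal{F}^{\otimes T})\bigr] \;=\; 2\,\mathfrak{R}(\mathcal{F}^{\otimes T}),
\end{align*}
where the factor $1/T\cdot 1/n_t$ inside the sup is preserved because the sign flips act at the level of individual samples $x_{ti}$. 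This is the step where I must be careful to keep the multi-task weighting intact through the ghost-sample exchange.

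Finally, viewing the empirical Rademacher complexity $\hat{\mathfrak{R}}_{\mathbf{X}}(\mathcal{F}^{\otimes T})$ itself as a function of $\mathbf{X}$ with the same bounded-difference constants $c_{ti}=B/(T n_t)$, a second application of McDiarmid gives, with probability at least $1-\delta/2$,
\begin{align*}
\mathfrak{R}(\mathcal{F}^{\otimes T}) \;\leq\; \hat{\mathfrak{R}}_{\mathbf{X}}(\mathcal{F}^{\otimes T}) + \frac{B}{T}\sqrt{\sum_{t=1}^{T}\frac{1}{n_t}}\sqrt{\frac{\log(2/\delta)}{2}}.
\end{align*}
A union bound on the two $\delta/2$-events, followed by chaining the three displayed inequalities (one factor $2$ from symmetrization is distributed so that the two McDiarmid deviations sum to $3\tfrac{B}{T}\sqrt{\sum 1/n_t}\sqrt{\log(2/\delta)/2}$), yields the claimed bound. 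The main technical obstacle is the bookkeeping in the symmetrization step: I must ensure that the Rademacher variables are indexed over the flattened index set $\{(t,j):t\in[T], j\in[n_t]\}$ and that the weight $1/(T n_t)$ is carried correctly through the ghost-sample argument so the resulting complexity is exactly $\hat{\mathfrak{R}}_{\mathbf{X}}(\mathcal{F}^{\otimes T})$ as defined in the theorem.
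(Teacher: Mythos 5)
Your proposal is correct and follows essentially the same route as the paper's own proof: a McDiarmid bounded-differences step on $\Phi$ with constants $c_{ti}=B/(Tn_t)$, ghost-sample symmetrization preserving the $1/(Tn_t)$ weights to obtain $2\mathfrak{R}(\mathcal{F}^{\otimes T})$, a second McDiarmid step to pass from the population to the empirical Rademacher complexity, and a union bound giving the factor $3$ in the deviation term. Your accounting of how the factor $2$ from symmetrization combines with the two $\delta/2$-events to produce the final constant matches the paper exactly.
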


\begin{proof}
	We define the function $\Phi: \mathbf{X} \rightarrow \mathbb{R}$ as
	\begin{align*}
		\Phi(\mathbf{X})=\sup_{\mathbf{f} \in \mathcal{F}^{\otimes T}}(\mathbb{E}[\mathbf{f}]-\hat{\mathbb{E}}_{\mathbf{X}}[\mathbf{f}]).
	\end{align*}
	Let $\mathbf{X}$ and $\mathbf{X}'$ be two samples differing by exactly one point, say $x_{tj}$ in $\mathbf{X}$ and $x_{tj}'$ in $\mathbf{X}'$. Then, since the difference of suprema does not exceed the supremum of the difference, we have
	\begin{align*}
		|\Phi(\mathbf{X}')-\Phi(\mathbf{X})| \leq &\left|\sup_{\mathbf{f} \in \mathcal{F}^{\otimes T}} (\hat{\mathbb{E}}_{\mathbf{X}}[\mathbf{f}]-\hat{\mathbb{E}}_{\mathbf{X}'}[\mathbf{f}])\right|\\
		= &\frac{1}{T n_t} \left|\sup_{f_t \in \mathcal{F}}f_t(\mathbf{x}_{tj}')-f_t(\mathbf{x}_{tj})\right| 	\leq  \frac{B}{T n_t}.
	\end{align*}
	Then, based on the McDiarmid's inequality \citep{mohri2018foundations}, we have:
	\begin{align*}
		\mathbb{P}\left(\Phi(\mathbf{X})-\mathbb{E}_{\mathbf{X}}[\Phi(\mathbf{X})]\geq \epsilon\right) \leq &\exp\left(\frac{-2\epsilon^2}{\sum_{t=1}^T \sum_{j=1}^{n_t} (\frac{B}{Tn_t})^2}\right)\\
		= & \exp \left(\frac{-2 T^2 \epsilon^2 }{B^2 \sum_{t=1}^T \frac{1}{n_t}}\right).
	\end{align*}
	For $\delta>0$, setting the right-hand side above to be $\delta/2$, with probability at least $1-\delta/2$, the following holds:
	\begin{align*}
		\Phi(\mathbf{X})\leq \mathbb{E}_{\mathbf{X}}[\Phi(\mathbf{X})] + \frac{B}{T} \sqrt{\sum_{t=1}^T \frac{1}{n_t}} \sqrt{\frac{\log \frac{2}{\delta}}{2}}.
	\end{align*}
	We bound the expectation of the right-hand side as follows:
	\begin{align}
		&\mathbb{E}_{\mathbf{X}}[\Phi(\mathbf{X})] = \mathbb{E}_{\mathbf{X}}\left[\sup_{\mathbf{f} \in \mathcal{F}^{\otimes T}}\mathbb{E}[\mathbf{f}]-\hat{\mathbb{E}}_{\mathbf{X}}[\mathbf{f}]\right]\\ \label{eqr1}
		= & \mathbb{E}_{\mathbf{X}}\left[\sup_{\mathbf{f} \in \mathcal{F}^{\otimes T}} \mathbb{E}_{\mathbf{X}'} [\hat{\mathbb{E}}_{\mathbf{X}'}[\mathbf{f}]-\hat{\mathbb{E}}_{\mathbf{X}}[\mathbf{f}]]\right] \\ \label{eqr2}
		\leq& \mathbb{E}_{\mathbf{X},\mathbf{X}'}\left[\sup_{\mathbf{f} \in \mathcal{F}^{\otimes T}}\hat{\mathbb{E}}_{\mathbf{X}'}[\mathbf{f}]-\hat{\mathbb{E}}_{\mathbf{X}}[\mathbf{f}]\right]\\
		=& \mathbb{E}_{\mathbf{X},\mathbf{X}'}\left[ \sup_{\mathbf{f} \in \mathcal{F}^{\otimes T}} \frac{1}{T}\sum_{t=1}^T \frac{1}{n_t}\sum_{j=1}^{n_t} (f_t(\mathbf{x}_{tj}')-f_t(\mathbf{x}_{tj})) \right] \\ \label{eqr3}
		=&  \mathbb{E}_{\mathbf{\sigma},\mathbf{X},\mathbf{X}'}\left[ \sup_{\mathbf{f} \in \mathcal{F}^{\otimes T}} \frac{1}{T}\sum_{t=1}^T \frac{1}{n_t}\sum_{j=1}^{n_t} \sigma_{tj}(f_t(\mathbf{x}_{tj}')-f_t(\mathbf{x}_{tj})) \right] \\ \label{eqr4}
		\leq & \mathbb{E}_{\mathbf{\sigma},\mathbf{X}'} \left[ \sup_{\mathbf{f} \in \mathcal{F}^{\otimes T}} \frac{1}{T}\sum_{t=1}^T \frac{1}{n_t}\sum_{j=1}^{n_t} \sigma_{tj}f_t(\mathbf{x}_{tj}')\right]
		+\mathbb{E}_{\mathbf{\sigma},\mathbf{X}} \left[ \sup_{\mathbf{f} \in \mathcal{F}^{\otimes T}} \frac{1}{T}\sum_{t=1}^T \frac{1}{n_t}\sum_{j=1}^{n_t} -\sigma_{tj}f_t(\mathbf{x}_{tj})\right]\\ \label{eqr5}
		=& 2\mathbb{E}_{\mathbf{\sigma},\mathbf{X}} \left[ \sup_{\mathbf{f} \in \mathcal{F}^{\otimes T}} \frac{1}{T}\sum_{t=1}^T \frac{1}{n_t}\sum_{j=1}^{n_t} \sigma_{tj}f_t(\mathbf{x}_{tj})\right]=2 \hat{\mathfrak{R}}_{\mathbf{X}}(\mathcal{F}^{\otimes T}).
	\end{align}
	We introduce a ghost sample (denoted by $S'$), drawn from the same distribution as our original sample (denoted by $S$), and thus $\mathbb{E}[\mathbf{f}]=\mathbb{E}_{S'} [\hat{\mathbb{E}}_{S'}[\mathbf{f}]$ (Eq.(\ref{eqr1})). The inequality (\ref{eqr2}) holds due to the sub-additivity of the supremum function.
	In Eq. (\ref{eqr3}), we introduce Rademacher variables $\sigma_{tj}$, which are uniformly distributed independent random variables taking values in $\{-1,1\}$. Eq. (\ref{eqr4}) holds by the sub-additivity of the supremum function, and Eq. (\ref{eqr5}) stems from the definition of Rademacher complexity and the fact that the variables $\sigma_{tj}$ and $-\sigma_{tj}$ are
	distributed in the same way.
	
	By using again the McDiarmid's inequality, with probability $1-\delta/2$ the following holds
	\begin{align*}
		\mathfrak{R}_{\mathbf{X}}(\mathcal{F}^{\otimes T}) \leq \hat{\mathfrak{R}}_{\mathbf{X}}(\mathcal{F}^{\otimes T}) + \frac{B}{T} \sqrt{\sum_{t=1}^T \frac{1}{n_t}} \sqrt{\frac{\log \frac{2}{\delta}}{2}}.
	\end{align*}
	Finally, we use the union bound to combine above inequalities, which yields that with probability at least $1-\delta$ it holds:
	\begin{align}
		\mathbb{E}[\mathbf{f}]\leq \hat{\mathbb{E}}_{S}[\mathbf{f}]  + 2\hat{\mathfrak{R}}_{\mathbf{X}}(\mathcal{F}^{\otimes T}) + 3\frac{B}{T} \sqrt{\sum_{t=1}^T \frac{1}{n_t}} \sqrt{\frac{\log \frac{2}{\delta}}{2}}.
	\end{align}
\end{proof}

\begin{remark}
	When $n_t = n $ for $t=1,2,\cdots,T$, the conclusion becomes
	\begin{align*}
		\mathbb{E}[\mathbf{f}]\leq \hat{\mathbb{E}}_{\mathbf{X}}[\mathbf{f}]  + 2\hat{\mathfrak{R}}_{\mathbf{X}}(\mathcal{F}^{\otimes T}) + 3B \sqrt{\frac{\log \frac{2}{\delta}}{2nT}},
	\end{align*}
	where $\hat{\mathbb{E}}_{\mathbf{X}}[\mathbf{f}] =\frac{1}{nT}\sum_{t=1}^T \sum_{j=1}^{n}  f_t(\mathbf{x}_{ti}) $. This recovers the result in Theorem 9 of \citep{maurer2016benefit}.
\end{remark}
\begin{remark}
	When $T=1$, it recovers the result of the single task case \citep{mohri2018foundations}.
\end{remark}

\subsection{A Chain Rule for Gaussian Complexity $\mathcal{F}^{\otimes T}(\mathcal{H})$}
Suppose we have the training task dataset ${\bm{D}} = \{D_t, t \in [T]\}$, with $D_t = (D_t^{tr},D_t^{val})$, where $D_t^{tr} = \{ (x_{ti}^{(s)},y_{ti}^{(s)})\}_{i=1}^{m_t}, D_t^{val} = \{(x_{tj}^{(q)},y_{tj}^{(q)})\}_{j=1}^{n_t}$.
In this section, we aim to decouple the complexity of learning the class $\mathcal{F}^{\otimes T}(\mathcal{H})$ based on the chain rule tools \citep{maurer2016chain,wainwright2019high,tripuraneni2020theory}.
We let $f_t^{(h)}=\mathcal{LM}(D_t^{tr};h)$ in the following for simplicity.

The empirical Gaussian complexity of function class $\mathcal{F}^{\otimes T}(\mathcal{H})$ can be written as
\begin{align}
	\hat{\mathcal{G}}_{D^{val}}(\mathcal{F}^{\otimes T}_{\mathcal{H}}) = \mathbb{E} \sup_{f\in\mathcal{F}^{\otimes T},h\in \mathcal{H}} \frac{1}{T} \sum_{t=1}^{T} \frac{1}{n_t}\sum_{j=1}^{n_t} g_{tj} f_t^{(h)}(x^{(q)}_{tj}); \ g_{tj}\sim \mathcal{N}(0,1),
\end{align}
where $D^{val} = \{D_t^{val}\}_{t=1}^T$.
Generally, the Ledoux-Talagrand contraction principle in Proposition \ref{proposition1} can be applied to decoupling the complexity. However, we learn the meta-learner $h$ and the task-specific learners $f_t, t\in [T]$ simultaneously after observing the data, where learners are not fixed.

To present the result, we firstly list some important theorems for the proof.
\begin{theorem}[Dudley's entropy integral bound \citep{wainwright2019high}]\label{themdudley}
	Let $\{\mathbf{X}_{\vartheta}, \vartheta\in \mathbb{T}\}$  be a zero-mean sub-Gaussian process with respect to induced pseudometric $\rho_{2,\mathbf{X}}$, and $Dis(\mathbf{X})=\sup_{\vartheta,\vartheta' \in \mathbb{T}} \rho_{2,\mathbf{X}}(\vartheta,\vartheta')$. Then for any $\delta \in [0,E]$, we have
	\begin{align*}
		E\left[\sup_{\vartheta,\vartheta' \in \mathbb{T}} (\mathbf{X}_{\vartheta}-\mathbf{X}_{\vartheta'})\right]\leq 2\mathbf{E}\left[\sup_{\gamma,\gamma' \in \mathbb{T} ,\rho_{2,\mathbf{X}}(\gamma,\gamma')\leq \delta } (\mathbf{X}_{\gamma}-\mathbf{X}_{\gamma'})\right] + 32 \int_{\delta/4}^E \sqrt{\log N_{2,\mathbf{X}}(\mu,\rho_{2,\mathbf{X}},\mathbb{T})}d\mu,
	\end{align*}
	where $N_{2,\mathbf{X}}(\mu,\rho_{2,\mathbf{X}},\mathbb{T})$ denotes the $\mu$-covering number  of $\mathbb{T}$ in the metric $\rho_{2,\mathbf{X}}$.
\end{theorem}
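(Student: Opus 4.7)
The plan is to prove this by the standard \emph{generic chaining} (Dudley's chaining) argument. I would build a hierarchy of nets at geometrically decreasing scales and telescope the supremum along the chain, handling scales above a cutoff by sub-Gaussian maximal inequalities and scales below the cutoff by the small-ball residual term.

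First I would construct a decreasing sequence of radii $\epsilon_k = 2^{-k} E$ for $k = 0, 1, 2, \ldots$, and let $\mathcal{T}_k \subset \mathbb{T}$ be a minimal $\epsilon_k$-net of $\mathbb{T}$ in the pseudometric $\rho_{2,\mathbf{X}}$, so $|\mathcal{T}_k| = N_{2,\mathbf{X}}(\epsilon_k, \rho_{2,\mathbf{X}}, \mathbb{T})$. Note $\mathcal{T}_0$ can be taken as a single point since $\mathrm{Dis}(\mathbf{X}) \leq E$. For each $\vartheta \in \mathbb{T}$, let $\pi_k(\vartheta) \in \mathcal{T}_k$ be a nearest point, so $\rho_{2,\mathbf{X}}(\vartheta, \pi_k(\vartheta)) \leq \epsilon_k$. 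Choose $K$ to be the largest index with $\epsilon_K \geq \delta/2$, i.e. $K \approx \log_2(2E/\delta)$. Decompose
\begin{equation*}
\mathbf{X}_\vartheta - \mathbf{X}_{\pi_0(\vartheta)} = \sum_{k=1}^{K} \bigl(\mathbf{X}_{\pi_k(\vartheta)} - \mathbf{X}_{\pi_{k-1}(\vartheta)}\bigr) + \bigl(\mathbf{X}_\vartheta - \mathbf{X}_{\pi_K(\vartheta)}\bigr).
\end{equation*}

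Next I would bound each telescoped level. At level $k$, the increment $\mathbf{X}_{\pi_k(\vartheta)} - \mathbf{X}_{\pi_{k-1}(\vartheta)}$ is sub-Gaussian with parameter $\rho_{2,\mathbf{X}}(\pi_k(\vartheta),\pi_{k-1}(\vartheta)) \leq \epsilon_k + \epsilon_{k-1} = 3\epsilon_k$, and the number of such increments ranging over $\vartheta$ is at most $|\mathcal{T}_k|\cdot|\mathcal{T}_{k-1}| \leq |\mathcal{T}_k|^2$. The sub-Gaussian maximal inequality therefore yields
\begin{equation*}
\mathbb{E}\sup_{\vartheta \in \mathbb{T}}\bigl(\mathbf{X}_{\pi_k(\vartheta)} - \mathbf{X}_{\pi_{k-1}(\vartheta)}\bigr) \leq C \epsilon_k \sqrt{\log N_{2,\mathbf{X}}(\epsilon_k, \rho_{2,\mathbf{X}}, \mathbb{T})}
\end{equation*}
for an absolute constant $C$. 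Taking suprema over both $\vartheta$ and $\vartheta'$ in the original display and applying the same decomposition to $\vartheta'$ gives an extra factor of $2$. Summing the level-$k$ bounds over $k = 1, \ldots, K$ and comparing the geometric sum $\sum_k \epsilon_k \sqrt{\log N(\epsilon_k)}$ to the integral $\int \sqrt{\log N_{2,\mathbf{X}}(\mu, \rho_{2,\mathbf{X}}, \mathbb{T})}\,d\mu$ (using that the log-covering function is decreasing in $\mu$, so $\epsilon_k \sqrt{\log N(\epsilon_k)} \leq 2\int_{\epsilon_{k+1}}^{\epsilon_k} \sqrt{\log N(\mu)}\,d\mu$), produces the integral from $\epsilon_{K+1} \geq \delta/4$ up to $E$ with an overall constant of $32$ after bookkeeping.

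Finally, the residual $\mathbf{X}_\vartheta - \mathbf{X}_{\pi_K(\vartheta)}$ satisfies $\rho_{2,\mathbf{X}}(\vartheta, \pi_K(\vartheta)) \leq \epsilon_K \leq \delta$, so
\begin{equation*}
\mathbb{E}\sup_{\vartheta, \vartheta'}\bigl[(\mathbf{X}_\vartheta - \mathbf{X}_{\pi_K(\vartheta)}) - (\mathbf{X}_{\vartheta'} - \mathbf{X}_{\pi_K(\vartheta')})\bigr] \leq 2\,\mathbb{E}\sup_{\gamma,\gamma'\in\mathbb{T},\,\rho_{2,\mathbf{X}}(\gamma,\gamma')\leq\delta}(\mathbf{X}_\gamma - \mathbf{X}_{\gamma'}),
\end{equation*}
producing the first term. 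Combining the chain bound and the residual bound, and noting $\mathbb{E}[\mathbf{X}_{\pi_0(\vartheta)} - \mathbf{X}_{\pi_0(\vartheta')}] = 0$ since $\mathcal{T}_0$ is a single point and the process is zero-mean, completes the proof.

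The main obstacle will be bookkeeping the constants so as to land precisely on the factors $2$ and $32$ and the integration lower limit $\delta/4$; in particular, showing that the geometric sum $\sum_{k=1}^{K}\epsilon_k\sqrt{\log N(\epsilon_k)}$ can be dominated by the integral starting at $\delta/4$ (rather than $\delta/2$ or $\epsilon_{K+1}$) requires the standard trick of shifting the comparison interval so that consecutive radii $\epsilon_k, \epsilon_{k+1}$ differ by a factor of two and $\epsilon_{K+1} = \epsilon_K/2 \geq \delta/4$. Apart from this constant-tracking, each individual step above is a textbook sub-Gaussian estimate.
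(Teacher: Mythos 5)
The paper does not prove this statement at all: it is quoted as a known result (Theorem 5.22 of Wainwright's \emph{High-Dimensional Statistics}) and used as a black-box tool in the proof of the chain-rule bound for $\hat{\mathcal{G}}_{D^{val}}(\mathcal{F}^{\otimes T}_{\mathcal{H}})$, so there is nothing in the paper to compare your argument against. Your chaining proof is the standard textbook derivation of exactly this truncated Dudley bound and is sound in outline --- the dyadic nets $\epsilon_k = 2^{-k}E$, the sub-Gaussian maximal inequality over at most $|\mathcal{T}_k|^2$ increments of scale $3\epsilon_k$, the comparison $\epsilon_k\sqrt{\log N(\epsilon_k)} \leq 2\int_{\epsilon_{k+1}}^{\epsilon_k}\sqrt{\log N(\mu)}\,d\mu$, and the residual bounded by the $\delta$-ball supremum with a factor of $2$ are all correct, and the only remaining work is the constant bookkeeping you already flag (landing on $32$ and the lower limit $\delta/4$ via $\epsilon_{K+1} = \epsilon_K/2 \geq \delta/4$).
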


\begin{theorem}[Sudakov minoration \citep{wainwright2019high}] \label{themsudakov}
	Let $\{\mathbf{X}_{\vartheta}, \vartheta\in \mathbb{T}\}$  be a zero-mean sub-Gaussian process defined on the non-empty set $\mathbb{T}$. Then
	\begin{align*}
		\mathbb{E}\left[\sup_{\vartheta\in \mathbb{T}}\mathbf{X}_{\vartheta}\right]\geq \sup_{\delta>0} \frac{\delta}{2}	\sqrt{\log M_{2,\mathbf{X}}(\delta,\rho_{2,\mathbf{X}},\mathbb{T})},
	\end{align*}
	where $M_{2,\mathbf{X}}(\delta,\rho_{2,\mathbf{X}},\mathbb{T})$ is the $\delta$-packing number of $\mathbb{T}$ in the metric $\rho_{2,\mathbf{X}}$. 
\end{theorem}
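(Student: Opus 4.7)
The plan is to establish the classical Sudakov lower bound by combining a packing argument with a Gaussian comparison principle. First, I would fix $\delta > 0$ and let $N = M_{2,\mathbf{X}}(\delta, \rho_{2,\mathbf{X}}, \mathbb{T})$ denote the $\delta$-packing number. By definition of the packing number, one can extract points $\vartheta_1, \ldots, \vartheta_N \in \mathbb{T}$ such that $\rho_{2,\mathbf{X}}(\vartheta_i, \vartheta_j) \geq \delta$ for all $i \neq j$. Since $\{\vartheta_1, \ldots, \vartheta_N\} \subseteq \mathbb{T}$, we immediately get the reduction
\begin{equation*}
\mathbb{E}\left[\sup_{\vartheta \in \mathbb{T}} \mathbf{X}_\vartheta\right] \geq \mathbb{E}\left[\max_{1 \leq i \leq N} \mathbf{X}_{\vartheta_i}\right],
\end{equation*}
so the task is reduced to lower bounding the expected maximum over a finite, well-separated subfamily.

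Next, I would invoke the Sudakov--Fernique comparison inequality to dominate this finite Gaussian-type supremum from below by the supremum of an independent Gaussian sequence. Specifically, let $g_1, \ldots, g_N$ be i.i.d.\ standard normals, and set $Y_i = (\delta/\sqrt{2})\, g_i$. Then $\mathbb{E}[(Y_i - Y_j)^2] = \delta^2 \leq \rho_{2,\mathbf{X}}(\vartheta_i, \vartheta_j)^2 = \mathbb{E}[(\mathbf{X}_{\vartheta_i} - \mathbf{X}_{\vartheta_j})^2]$ for all $i \neq j$, which is precisely the hypothesis of Sudakov--Fernique. Applying the comparison yields
\begin{equation*}
\mathbb{E}\left[\max_{1 \leq i \leq N} \mathbf{X}_{\vartheta_i}\right] \geq \mathbb{E}\left[\max_{1 \leq i \leq N} Y_i\right] = \frac{\delta}{\sqrt{2}}\, \mathbb{E}\left[\max_{1 \leq i \leq N} g_i\right].
\end{equation*}

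Finally, I would apply the standard lower bound for the expected maximum of $N$ i.i.d.\ standard Gaussians, namely $\mathbb{E}[\max_{i \leq N} g_i] \geq c\sqrt{\log N}$ for an absolute constant $c$ (a clean derivation uses the formula $\mathbb{E}[\max g_i] = \int_0^\infty [1 - \Phi(t)^N]\, dt - \int_{-\infty}^0 \Phi(t)^N dt$ together with truncation at $t \asymp \sqrt{\log N}$, or alternatively Jensen applied to $\exp(\lambda \max_i g_i)$ combined with a clever rearrangement). Chaining this into the previous inequality gives
\begin{equation*}
\mathbb{E}\left[\sup_{\vartheta \in \mathbb{T}} \mathbf{X}_\vartheta\right] \geq \frac{\delta}{2}\sqrt{\log M_{2,\mathbf{X}}(\delta, \rho_{2,\mathbf{X}}, \mathbb{T})},
\end{equation*}
and taking the supremum over $\delta > 0$ delivers the stated bound.

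The main obstacle is bookkeeping the universal constants to recover the specific factor $1/2$ stated in the theorem: the Sudakov--Fernique step contributes a $1/\sqrt{2}$ and the Gaussian max lower bound contributes another universal factor, so one must choose the variance scaling in the auxiliary Gaussian process and the truncation level in the Gaussian max estimate carefully (or absorb into $\delta$) so that the product cleans up to exactly $1/2$. A secondary subtle point is that Sudakov--Fernique is classically stated for centered Gaussian processes, so one must either invoke the generalization to sub-Gaussian increments directly or first pass through a Gaussian majorant whose increments dominate those of $\{\mathbf{X}_{\vartheta_i}\}$.
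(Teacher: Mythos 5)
The paper does not prove this statement at all: it is imported verbatim as a tool from \citet{wainwright2019high} (Theorem 5.30 there), so there is no in-paper argument to compare against. For the record, your route --- reduce to a maximal $\delta$-packing, compare against the independent Gaussian family $Y_i=(\delta/\sqrt{2})g_i$ via Sudakov--Fernique, and finish with the lower bound $\mathbb{E}[\max_{i\le N}g_i]\gtrsim\sqrt{\log N}$ --- is exactly the standard proof of Sudakov minoration for Gaussian processes, and the constant bookkeeping you worry about is indeed where the factor $1/2$ comes from; that part is fine.

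The genuine gap is the one you half-acknowledge in your final sentence and then wave away. The theorem as transcribed in this paper is stated for a zero-mean \emph{sub-Gaussian} process, and for that class the conclusion is simply false: take $\mathbb{T}=\{e_1,\dots,e_n\}\subset\mathbb{R}^n$ and $\mathbf{X}_{e_i}=\varepsilon_i$ with i.i.d.\ Rademacher $\varepsilon_i$; then $\rho_{2,\mathbf{X}}(e_i,e_j)=\sqrt{2}$ for $i\neq j$, so the $\delta$-packing number at $\delta=\sqrt{2}$ is $n$ and the claimed bound would force $\mathbb{E}[\sup_i \mathbf{X}_{e_i}]\gtrsim\sqrt{\log n}$, whereas $\max_i\varepsilon_i\le 1$. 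There is no ``generalization to sub-Gaussian increments'' to invoke, and the ``Gaussian majorant'' escape hatch does not work either: a Gaussian process whose increments dominate those of $\{\mathbf{X}_{\vartheta_i}\}$ gives an upper comparison (Dudley direction), not the lower comparison you need, and Sudakov--Fernique requires both processes in the comparison to be Gaussian. So your proof is valid only under the hypothesis that the process is Gaussian --- which is how Wainwright states the theorem, and which happens to be the only case the paper actually uses (it applies the bound to the genuinely Gaussian process $\frac{1}{\sqrt{\sum_t n_t}}\sum_{t,j}g_{tj}\,h(D_t^{val})$ in the proof of its chain-rule theorem). You should either restrict the hypothesis to Gaussian processes or flag that the statement as written is not provable.
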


Now, the following gives the decomposition theorm for Gaussian complexity $\hat{\mathcal{G}}_{D^{val}}(\mathcal{F}^{\otimes T}_{\mathcal{H}})$.

\begin{theorem}\label{theorem4}
	Let the function class $\mathcal{F}$ consist of functions that are $\ell_2$-Lipschitz with constant $L(\mathcal{F})$. Define $Dis(D^{val}) = \sup_{h,h'} \rho_{2,D^{val}} (\mathbf{f}^{(h)},\mathbf{f}^{(h')})$, $\mathbf{f}^{(h)} = (f_1^{(h)},f_2^{(h)},\cdots, f_T^{(h)})$, and then the empirical Gaussian complexity of function class $\mathcal{F}^{\otimes T}_{\mathcal{H}}$ satisfies
	\begin{align*}
		\hat{\mathcal{G}}_{D^{val}}(\mathcal{F}^{\otimes T}_{\mathcal{H}}) \leq \frac{2{Dis(D^{val})}}{(\sum_{t=1}^T n_t)^2} \sqrt{\sum_{t=1}^T \frac{1}{\beta_t T} } + 128\log(4\sum_{t=1}^T n_t)\cdot L(\mathcal{F})\cdot \hat{\mathcal{G}}_{D^{val}}(\mathcal{H}),
	\end{align*}
	where $\rho_{2,D^{val}} (\mathbf{f}^{(h)},\mathbf{f}^{(h')})=\frac{1}{\sum_{t=1}^T n_t} \sum_{i=1}^{T}\sum_{j=1}^{n_t} (f_t^{(h)}(\mathbf{z}_{tj}^{(q)})-f_t^{(h')}(\mathbf{z}_{tj}^{(q)}))^2$, $\mathbf{f}^{(h)} = (f_1^{(h)}, f_2^{(h)}, \cdots, f_T^{(h)})$, $f_t^{(h)} = \mathcal{LM}(D_t^{tr};h)$, and $L(\mathcal{F})$ is the Lipschitz constant of $f^{(h)}$ with respect to $h$.
\end{theorem}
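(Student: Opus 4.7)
The plan is to view $\hat{\mathcal{G}}_{D^{val}}(\mathcal{F}^{\otimes T}_{\mathcal{H}})$ as the expected supremum of a zero-mean Gaussian process $\{\mathbf{X}_{(\mathbf{f}^{(h)})} : h \in \mathcal{H}\}$ indexed by the meta-learner $h$, where $\mathbf{X}_{(\mathbf{f}^{(h)})} = \frac{1}{T}\sum_{t,j} \frac{g_{tj}}{n_t} f_t^{(h)}(x_{tj}^{(q)})$, and then apply the Maurer-style chain-rule argument via Dudley's entropy integral, essentially following the technique of Maurer (2016) and the bound of Tripuraneni et al.\ (2020). The pseudometric induced by this process on pairs $(h,h')$ is exactly $\rho_{2,D^{val}}(\mathbf{f}^{(h)},\mathbf{f}^{(h')})$, whose diameter is $Dis(D^{val})$.

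The first step is to apply Dudley's entropy integral (Theorem \ref{themdudley}) to this process with threshold $\delta$ chosen on the order of $Dis(D^{val})/\sum_t n_t$. The residual small-ball term $\mathbb{E}[\sup_{\rho \le \delta}(\mathbf{X}_{\mathbf{f}^{(h)}}-\mathbf{X}_{\mathbf{f}^{(h')}})]$ can be crudely bounded by $\delta \sqrt{\sum_t 1/(\beta_t T)}$ using the sub-Gaussian increment property and the normalization; this yields the first summand $\tfrac{2\,Dis(D^{val})}{(\sum_t n_t)^2}\sqrt{\sum_t 1/(\beta_t T)}$. The second step is to control the integrand $\sqrt{\log N_{2,D^{val}}(\mu,\rho_{2,D^{val}},\mathcal{F}^{\otimes T}_{\mathcal{H}})}$. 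Using the $L(\mathcal{F})$-Lipschitz property of each $f_t \in \mathcal{F}$ in the composition $f_t \circ h$, an $(\mu/L(\mathcal{F}))$-cover of $\mathcal{H}$ in a suitably induced $\ell_2$ metric on $D^{val}$ lifts to an $\mu$-cover of $\mathcal{F}^{\otimes T}_{\mathcal{H}}$ in $\rho_{2,D^{val}}$, so $\log N_{2,D^{val}}(\mu,\mathcal{F}^{\otimes T}_{\mathcal{H}}) \le \log N_{2,D^{val}}(\mu/L(\mathcal{F}),\mathcal{H})$.

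The third step converts these covering numbers for $\mathcal{H}$ back to a Gaussian complexity. By Sudakov minoration (Theorem \ref{themsudakov}), $\sqrt{\log M_{2,D^{val}}(\nu,\mathcal{H})} \le \tfrac{2}{\nu}\,\hat{\mathcal{G}}_{D^{val}}(\mathcal{H})$, and packing and covering numbers differ only by a factor of two. Substituting this into the Dudley integral, the $1/\mu$ factor produces the logarithmic $\log(4\sum_t n_t)$ after integrating from $\delta/4$ up to the diameter (of order $L(\mathcal{F})\cdot$ diameter of $\mathcal{H}$, dominated by $1$ after normalization). Collecting the absolute constants from Dudley ($32$) and Sudakov ($2$), together with the factor $L(\mathcal{F})$ that appears once we rescale $\mu \mapsto \mu/L(\mathcal{F})$ inside the integral, yields the $128\log(4\sum_t n_t)\cdot L(\mathcal{F})\cdot \hat{\mathcal{G}}_{D^{val}}(\mathcal{H})$ summand.

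The main obstacle I expect is the covering-number reduction in the second step: the mapping $h\mapsto \mathbf{f}^{(h)}$ is not simply composition with a fixed Lipschitz function, because $f_t^{(h)} = \mathcal{LM}(D_t^{tr};h)$ depends on $h$ both through the task-specific learner and through the representation it feeds into. Handling this cleanly requires treating $\mathcal{LM}(D_t^{tr};\cdot)$ as a uniformly $L(\mathcal{F})$-Lipschitz map from $\mathcal{H}$ into the functional metric on $D^{val}$, which is precisely the hypothesis that $f \in \mathcal{F}$ is $L(\mathcal{F})$-Lipschitz encodes once one tracks how the inner training data enters. A secondary bookkeeping difficulty is keeping the varying $n_t$'s (encoded through $\beta_t = n_t T/\sum_t n_t$) consistent between the sub-Gaussian variance factor and the final normalization, which accounts for the unusual $(\sum_t n_t)^{-2}\sqrt{\sum_t 1/(\beta_t T)}$ scaling in the first term.
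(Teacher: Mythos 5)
Your proposal follows essentially the same route as the paper's own proof: define the zero-mean sub-Gaussian process indexed by $h$ with increment pseudometric $\rho_{2,D^{val}}$, apply Dudley's entropy integral with the small-ball cutoff $\delta \asymp Dis(D^{val})/(\sum_{t} n_t)^2$, lift an $\epsilon/L(\mathcal{F})$-cover of $\mathcal{H}$ to an $\epsilon$-cover of $\mathcal{F}^{\otimes T}_{\mathcal{H}}$ using the Lipschitzness of $h \mapsto \mathcal{LM}(D_t^{tr};h)$, and convert back via Sudakov minoration, collecting the same constants. Your closing remark about the Lipschitz hypothesis really being a condition on the map $h\mapsto \mathbf{f}^{(h)}$ (not merely on $f\in\mathcal{F}$ as a function of its input) is exactly how the paper's proof uses it, so there is no gap.
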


\begin{proof}
	For ease of notation we define $N = nT = \sum_{t=1}^{T} n_t, n_t=\beta_tn$ in the following.
	We define the mean-zero stochastic process $Z_{\mathbf{f}^{(h)}}= \frac{1}{\sqrt{\sum_{t=1}^T n_t}} \sum_{i=1}^{T} \sum_{j=1}^{n_t}  g_{tj}/\beta_t \cdot f_t^{(h)}(x_{tj}^{(q)})$
	for a sequence of data points $x_{tj}^{(q)}$. Note that the process $Z_{\mathbf{f}^{(h)}}$ has sub-Gaussian increments, and then $Z_{\mathbf{f}^{(h)}}-Z_{\mathbf{f}^{(h')}}$ is a sub-Gaussian random variable with parameter $\rho_{2,D^{val}} (\mathbf{f}^{(h)},\mathbf{f}^{(h')})= \frac{1}{\sum_{t=1}^T n_t}  \sum_{i=1}^{T} \sum_{j=1}^{n_t} (f_t^{(h)}(x_{tj}^{(q)})-f_t^{(h)}(x_{tj}^{(q)}))^2$. Since $Z_{\mathbf{f}^{(h)}}$ is a mean-zero stochastic process, we have
	\begin{align*}
		\mathbb{E}[\sup_{\mathbf{f}^{(h)}\in \mathcal{F}^{\otimes T}_{\mathcal{H}}} Z_{\mathbf{f}^{(h)}}] = \mathbb{E}[\sup_{\mathbf{f}^{(h)}\in \mathcal{F}^{\otimes T}_{\mathcal{H}}} Z_{\mathbf{f}^{(h)}}-Z_{\mathbf{f}^{(h')}}] \leq \mathbb{E}[\sup_{\mathbf{f}^{(h)},\mathbf{f}^{(h')}\in \mathcal{F}^{\otimes T}_{\mathcal{H}}} Z_{\mathbf{f}^{(h)}}-Z_{\mathbf{f}^{(h')}}].
	\end{align*}
	According to the Dudley entropy integral bound (Theorem \ref{themdudley}), we have
	\begin{align*}
		& \ \ \ \ \ \mathbb{E}\left[\sup_{\mathbf{f}^{(h)},\mathbf{f}^{(h')}\in \mathcal{F}^{\otimes T}_{\mathcal{H}}} Z_{\mathbf{f}^{(h)}}-Z_{\mathbf{f}^{(h')}}\right] \\
		\leq &2\mathbb{E}\left[\sup_{\substack{\mathbf{f}^{(h)},\mathbf{f}^{(h')}\in \mathcal{F}^{\otimes T}_{\mathcal{H}} \\ \rho_{2,\mathbf{Z}}(\mathbf{f}^{(h)},\mathbf{f}^{(h')})\leq \delta}} Z_{\mathbf{f}^{(h)}}-Z_{\mathbf{f}^{(h')}}\right] + 32\int_{\delta/4}^{Dis(D^{val})} \sqrt{\log N_{D^{val}}(\mu;\rho_{2,D^{val}},\mathcal{F}^{\otimes T}_{\mathcal{H}})}d\mu.
	\end{align*}
	The first term in the right hand can be computed as:
	\begin{align*}
		& \ \mathbb{E} \sup_{\substack{\mathbf{f}^{(h)},\mathbf{f}^{(h')}\in \mathcal{F}^{\otimes T}_{\mathcal{H}} \\		
				\rho_{2,D^{val}}(\mathbf{f}^{(h)},\mathbf{f}^{(h')})\leq \delta}} Z_{\mathbf{f}^{(h)}}-Z_{\mathbf{f}^{(h')}} \\
		= & \mathbb{E} \sup_{\substack{\mathbf{f}(h),\mathbf{f}(h')\in \mathcal{F}^{\otimes T}_{\mathcal{H}} \\		
				\rho_{2,D^{val}}(\mathbf{f}^{(h)},\mathbf{f}^{(h')})\leq \delta}} \frac{1}{\sqrt{\sum_{t=1}^T n_t}} \sum_{i=1}^{T} \sum_{j=1}^{n_t}  g_{tj}/\beta_t \cdot (f_t^{(h)}(x^{(q)}_{tj})-f_t^{(h')}(x_{tj}^{(q)}))\\
		\leq &  \mathbb{E} \sup_{\mathbf{v}:\|\mathbf{v}\|_2\leq \delta} \sqrt{ \sum_{t=1}^T\sum_{j=1}^{n_t}  (g_{ij}/\beta_t})^2 \cdot \|\mathbf{v}\|  \\
		\leq & \sqrt{ \sum_{t=1}^T\sum_{j=1}^{n_t} \frac{1}{\beta_t^2}} \cdot \delta \leq \delta \sqrt{n\sum_{t=1}^T \frac{1}{\beta_t} } .
	\end{align*}
	Let $C_{\mathcal{H}}$ be a covering of the function space $\mathcal{H}$ in the empirical $\ell_2$-norm at scale $\epsilon_1$ with respect to $D^{val}$. We will claim that $C_{\mathcal{F}^{\otimes T}_\mathcal{H}}=\cup_{h\in \mathcal{H}} C_{\mathcal{F}_h^{\otimes T}}$ is an $\epsilon_1 \cdot L(\mathcal{F})$-covering for the function space $\mathcal{F}^{\otimes T}_\mathcal{H}$ in the empirical $\ell_2$-norm. To see this, for any $h\in\mathcal{H}, \mathbf{f}\in \mathcal{F}^{\otimes T}$, we let $h'\in C_{\mathcal{H}}$ be $\epsilon_1$-close to $h$ in $\mathcal{H}$. By construction we have $\mathbf{f}^{(h')} \in C_{\mathcal{F}^{\otimes T}_{\mathcal{H}}}$. The following inequality estabalishes the claim,
	\begin{align*}
		\rho_{2,D^{val}}(\mathbf{f}^{(h)}, \mathbf{f}^{(h')}) =&{\frac{1}{\sum_{t=1}^{T} n_t}\sum_{t=1}^{T}\sum_{j=1}^{n_t} (f_t^{(h)}(x^{(q)}_{tj})-f_t^{(h')}(x^{(q)}_{tj}))^2}\\
		\leq & L(\mathcal{F})\cdot  {\frac{1}{\sum_{t=1}^T n_t }\sum_{t=1}^{T}\sum_{j=1}^{n_t}  (h(D_t^{val})-h'(D_t^{val}))^2}\\
		=& L(\mathcal{F})\cdot\rho_{2,D^{val}}(h, h') \leq \epsilon_1\cdot L(\mathcal{F}),
	\end{align*}
	where the first inequality holds since the Lipschitz property of the learning method $\mathcal{LM}$ with respect to $h$.
	Now, the cardinaity of $C_{\mathcal{F}_{\mathcal{H}}^{\otimes T}}$ can be bounded as
	\begin{align*}
		|C_{\mathcal{F}_{\mathcal{H}}^{\otimes T}}|=\sum_{h\in C_{\mathcal{H}}} | C_{\mathcal{F}_h}^{\otimes T}|\leq |C_{\mathcal{H}}|\cdot \max_{h\in C_{\mathcal{H}}} |C_{\mathcal{F}_h}^{\otimes T}|.
	\end{align*}
	Thus, it follows that
	\begin{align*}
		\log N_{2,D^{val}}(\epsilon_1\cdot L(\mathcal{F}), \rho_{2,D^{val}},C_{\mathcal{F}_h}^{\otimes T}) \leq \log N_{2,D^{val}} (\epsilon_1,\rho_{2,D^{val}},\mathcal{H}).
	\end{align*}
	We define $\epsilon_1 = \frac{\epsilon}{L(\mathcal{F})}$ to show that
	\begin{align*}
		\int_{\delta/4}^{Dis(D^{val})} \sqrt{\log N_{2,D^{val}}(\epsilon, \rho_{2,D^{val}}, C_{\mathcal{F}^{\otimes T}_{\mathcal{H}}})d\epsilon} \leq \int_{\delta/4}^{Dis(D^{val})} \sqrt{\log N_{2,D^{val}}(\epsilon/(L(\mathcal{F})), \rho_{2,D^{val}},\mathcal{H})d\epsilon}.
	\end{align*}
	Observing that the covering number can be bounded by packing number $M(\epsilon, \rho_{2,D^{val}},\mathcal{H})$, i.e., $N(\epsilon, \rho_{2,D^{val}},\mathcal{H}) \leq M(\epsilon, \rho_{2,D^{val}},\mathcal{H})$, we can then employ Sudakov minoration Theorem (Theorem \ref{themsudakov}) to upper bound the covering number by Gaussian complexity. For the covering number of $\mathcal{H}$, $N_{2,D^{val}}(\epsilon/(L(\mathcal{F})), \rho_{2,D^{val}},\mathcal{H})$, we can apply the theorem with mean-zero Gaussian process $\frac{1}{\sqrt{\sum_{t=1}^T n_t}} \sum_{t=1}^{T}\sum_{j=1}^{n_t} g_{tj} \cdot h(D_t^{val})$, with $g_{tj}\sim \mathcal{N}(0,1)$,
	\begin{align*}
		\log N_{2,D^{val}}(\epsilon/(L(\mathcal{F})), \rho_{2,D^{val}},\mathcal{H})d\epsilon \leq 4\left(\frac{\sqrt{\sum_{t=1}^T n_t}\hat{\mathcal{G}}_{D^{val}}(\mathcal{H})}{\epsilon/(L(\mathcal{F}))}\right)^2=4 \left(\frac{\sqrt{\sum_{t=1}^T n_t} L(\mathcal{F})\hat{\mathcal{G}}_{D^{val}}(\mathcal{H})}{\epsilon}\right)^2.
	\end{align*}
	Finally, we can show that
	\begin{align*}
		&\ \ \ \ \  \ \ \ \ \ \hat{\mathcal{G}}_{D^{val}}(\mathcal{F}^{\otimes T}_{\mathcal{H}}) =  \mathbb{E} [\frac{1}{\sqrt{n T}}\sup_{\mathbf{f}\in\mathcal{F}^{\otimes T},h\in \mathcal{H}} Z_{\mathbf{f}(h)}]  \\
		&\leq \frac{1}{\sqrt{n T}}\left(2\delta \sqrt{n\sum_{t=1}^T \frac{1}{\beta_t} } +64L(\mathcal{F})\cdot \hat{\mathcal{G}}_{D^{val}}(\mathcal{H}) \cdot \sqrt{\sum_{t=1}^T n_t} \int_{\delta/4}^{Dis(D^{val})} \frac{1}{\mu} d\mu \right) \\
		&\leq 2\delta \sqrt{\sum_{t=1}^T \frac{1}{\beta_t T} }    +64\log(\frac{4{Dis(D^{val})}}{\delta}) \left(L(\mathcal{F})\cdot \hat{\mathcal{G}}_{D^{val}}(\mathcal{H}) \sqrt{\frac{\sum_{t=1}^{T}n_t }{n T}}\right)\\
		&=2\delta \sqrt{\sum_{t=1}^T \frac{1}{\beta_t T} }  +64\log(\frac{4{Dis(D^{val})}}{\delta})\cdot L(\mathcal{F})\cdot \hat{\mathcal{G}}_{D^{val}}(\mathcal{H}).
	\end{align*}
	Let $\delta=\frac{Dis(D^{val})}{(n T)^2}$, we have
	\begin{align*}
		\hat{\mathcal{G}}_{D^{val}}(\mathcal{F}^{\otimes T}_{\mathcal{H}}) \leq & \frac{2{Dis(D^{val})}}{(n T)^2} \sqrt{\sum_{t=1}^T \frac{1}{\beta_t T} } + 128\log(4n T)\cdot L(\mathcal{F})\cdot \hat{\mathcal{G}}_{D^{val}}(\mathcal{H})\\
		= & \frac{2{Dis(D^{val})}}{(\sum_{t=1}^T n_t)^2} \sqrt{\sum_{t=1}^T \frac{1}{\beta_t T} } + 128\log(4\sum_{t=1}^T n_t)\cdot L(\mathcal{F})\cdot \hat{\mathcal{G}}_{D^{val}}(\mathcal{H}).
	\end{align*}
\end{proof}

%

\newpage

\section{Proof of Theorem \ref{the1} : Excess Task Avergae Risk in Meta-Training Stage}\label{average}

\begin{proof}
	Recall that 	
	\begin{align*}
		R_{\Gamma}(h) &=  \frac{1}{T}\sum_{t=1}^{T} \mathbb{E}_{D_t^{val}\sim (\mu_t^q)^{n_t}}\mathbb{E}_{D_t^{tr}\sim (\mu_t^s)^{m_t}}\bm{L} (\mathcal{LM}(D_t^{tr};h), D_t^{val}),\\
		\hat{R}_{\bm{D}}(h) &=  \frac{1}{T}\sum_{t=1}^{T} \bm{L} (\mathcal{LM}(D_t^{tr};h), D_t^{val}),
	\end{align*}
	and we denote $\tilde{R}_{\Gamma}(h),\overline{R}_{\Gamma}(h)$ as
	\begin{align*}
		\tilde{R}_{\Gamma}(h) & =  \frac{1}{T}\sum_{t=1}^{T} \mathbb{E}_{D_t^{val}\sim (\mu_t^q)^{n_t}} \bm{L} (\mathcal{LM}(D_t^{tr};h), D_t^{val})\\
		\overline{R}_{\Gamma}(h) & =  \frac{1}{T}\sum_{t=1}^{T} \mathbb{E}_{D_t^{tr}\sim (\mu_t^s)^{m_t}} \bm{L} (\mathcal{LM}(D_t^{tr};h), D_t^{val}).
	\end{align*}

	For the task average excess risk $R_{train}(\hat{\mathbf{f}}, \hat{h})-R_{train}(\mathbf{f}^*, h^*)$, we have the following decomposition
	\begin{align*}
		&R_{train}( \hat{h})-R_{train}( h^*)\\
		=&R_{\Gamma}(\hat{h})-R_{\Gamma}(h^*)\\
		=&  \underbrace{R_{\Gamma}(\hat{h}) - \overline{R}_{\Gamma}(\hat{h})}_{a}+ \underbrace{\overline{R}_{\Gamma}(\hat{h}) - \hat{R}_{\Gamma}(\hat{h})}_{b} + \underbrace{\hat{R}_{\Gamma}(\hat{h})-\hat{R}_{\Gamma}(h^*)}_{c} + \underbrace{\hat{R}_{\Gamma}(h^*) - \overline{R}_{\Gamma}(h^*)}_{d}  + \underbrace{\overline{R}_{\Gamma}(h^*) - {R}_{\Gamma}(h^*)}_{e}.
	\end{align*}

	For the terms (a) and (e), we have
	\begin{align*}
		&(a)+(e)\\
		\leq & \sup_{h\in \mathcal{H}}2\left|\frac{1}{T}\sum_{t=1}^{T} \mathbb{E}_{D_t^{val}\sim (\mu_t^q)^{n_t}}\mathbb{E}_{D_t^{tr}\sim (\mu_t^s)^{m_t}}\bm{L} (\mathcal{LM}(D_t^{tr};h), D_t^{val}) - \frac{1}{T}\sum_{t=1}^{T} \mathbb{E}_{D_t^{tr}\sim (\mu_t^s)^{m_t}} \bm{L} (\mathcal{LM}(D_t^{tr};h), D_t^{val}) \right| \\
		\leq &4L \hat{\mathfrak{R}}_{D^{val}}(\mathcal{F}^{\otimes T}_{\mathcal{H}}) + 3\frac{B}{T} \sqrt{\sum_{t=1}^T \frac{1}{n_t}} \sqrt{\frac{\log \frac{2}{\delta}}{2}} \leq  6L \hat{\mathcal{G}}_{D^{val}}(\mathcal{F}^{\otimes T}_{\mathcal{H}}) + 6\frac{B}{T} \sqrt{\sum_{t=1}^T \frac{1}{n_t}} \sqrt{\frac{\log \frac{2}{\delta}}{2}} \\
		\leq &  768L\log(4\sum_{t=1}^T n_t)\cdot L(\mathcal{F})\cdot \hat{\mathcal{G}}_{D^{val}}(\mathcal{H}) + \frac{12L{Dis(D^{val})}}{(\sum_{t=1}^T n_t)^2} \sqrt{\sum_{t=1}^T \frac{1}{\beta_t T} } + 6\frac{B}{T} \sqrt{\sum_{t=1}^T \frac{1}{n_t}} \sqrt{\frac{\log \frac{2}{\delta}}{2}},
	\end{align*}
	where $D^{val} = \{D_t^{val}\}_{t=1}^T$, and $|D_t^{val}| = n_t$, for $t \in [T]$. The second inequality we use the task-averaged estimation error in Theorem \ref{thaverage} and the Ledoux-Talagrand contraction principle in Proposition \ref{proposition1}. Proposition \ref{proposition2} is employed for the third inequallity. And the last inequality holds for Theorem \ref{theorem4}.
	
	For the term (b), we have
	\begin{align*}
		&\overline{R}_{\Gamma}(\hat{h}) - \hat{R}_{\Gamma}(\hat{h}) \\
		=& 	  \frac{1}{T}\sum_{t=1}^{T} \mathbb{E}_{D_t^{tr}\sim (\mu_t^s)^{m_t}} \bm{L} (\mathcal{LM}(D_t^{tr};\hat{h}), D_t^{val})-\frac{1}{T}\sum_{t=1}^{T} \bm{L} (\mathcal{LM}(D_t^{tr};\hat{h}), D_t^{val})  \\
		=& 	 \underbrace{\frac{1}{T}\sum_{t=1}^{T} \mathbb{E}_{D_t^{tr}\sim (\mu_t^s)^{m_t}} \bm{L} (\mathcal{LM}(D_t^{tr};\hat{h}), D_t^{val}) -  \frac{1}{T}\sum_{t=1}^{T} \mathbb{E}_{D_t^{tr}\sim (\mu_t^s)^{m_t}} \bm{L} (\mathcal{LM}(D_t^{tr};\hat{h}), D_t^{tr})}_{b1} \\
		+& \underbrace{ \frac{1}{T}\sum_{t=1}^{T} \mathbb{E}_{D_t^{tr}\sim (\mu_t^s)^{m_t}} \bm{L} (\mathcal{LM}(D_t^{tr};\hat{h}), D_t^{tr}) - \frac{1}{T}\sum_{t=1}^{T}  \bm{L} (\mathcal{LM}(D_t^{tr};\hat{h}), D_t^{tr})}_{b2} \\
		+& \underbrace{\frac{1}{T}\sum_{t=1}^{T}  \bm{L} (\mathcal{LM}(D_t^{tr};\hat{h}), D_t^{tr}) - \frac{1}{T}\sum_{t=1}^{T} \bm{L} (\mathcal{LM}(D_t^{tr};\hat{h}), D_t^{val}) }_{b3}.
	\end{align*}
	
	For the terms $(b1)$ and $(b3)$, we have
	\begin{align*}
		(b1)+(b3) \leq \frac{2}{T} \sum_{t=1}^{T} d_{\mathcal{F}}  ({\mu}_t^{s},{\mu}_t^{q}),
	\end{align*}
	where $d_{\mathcal{F}}  ({\mu}_t^{s},{\mu}_t^{q})$ denotes the discrepancy divergence \citep{ben2010theory} between samples from the probability distributions ${\mu}_t^{s}$ and ${\mu}_t^{q}$ with respect to the hypothesis class $\mathcal{F}$,
	\begin{align*}
		d_{\mathcal{F}} ({\mu}_t^{s},{\mu}_t^{q}) =   \sup_{f \in \mathcal{F}} \left| \mathbb{E}_{D_{t}^{val}\sim (\mu_t^q)^{n_t}}\bm{L} (f, D_t^{val}) - \mathbb{E}_{D_{t}^{tr}\sim (\mu_t^s)^{m_t}}\bm{L} (f, D_t^{tr})  \right|.
	\end{align*}
	
	For the term $(b2)$, we denote $\hat{f}_t^{(h^*)} = \mathcal{LM}(D_t^{tr};h^*)$, and then we have
	\begin{align*}
		b2 &= \frac{1}{T}\sum_{t=1}^{T}  \bm{L} (\hat{f}_t^{(h^*)}, D_t^{tr}) - \frac{1}{T}\sum_{t=1}^{T} \mathbb{E}_{D_t^{tr}\sim (\mu_t^s)^{m_t}} \bm{L} (\hat{f}_t^{(h^*)}, D_t^{tr})  \\
		&\leq \frac{1}{T}\sum_{t=1}^{T} \left(2L \hat{\mathfrak{R}}_{D_t^{tr}}(\mathcal{F}) + 3B\sqrt{\frac{\log \frac{2}{\delta}}{m_t}} \right) \leq  \frac{3L}{T}\sum_{t=1}^{T} \hat{\mathcal{G}}_{D_t^{tr}}(\mathcal{F})+ \frac{3B}{T}\sum_{t=1}^{T} \sqrt{\frac{\log \frac{2}{\delta}}{m_t}}.
	\end{align*}
	Thus we have
	\begin{align}
		b \leq \frac{2}{T} \sum_{t=1}^{T}  d_{\mathcal{F}}  ({\mu}_t^{s},{\mu}_t^{q}) +  \frac{3L}{T}\sum_{t=1}^{T} \hat{\mathcal{G}}_{D_t^{tr}}(\mathcal{F})+ \frac{3B}{T}\sum_{t=1}^{T} \sqrt{\frac{\log \frac{2}{\delta}}{m_t}}.
	\end{align}	
	Similar process can also be applied to the term $(d)$, and $(d)$ can be bounded by
	\begin{align}
		d \leq \frac{2}{T} \sum_{t=1}^{T}  d_{\mathcal{F}}  ({\mu}_t^{s},{\mu}_t^{q}) +  \frac{3L}{T}\sum_{t=1}^{T} \hat{\mathcal{G}}_{D_t^{tr}}(\mathcal{F})+ \frac{3B}{T}\sum_{t=1}^{T} \sqrt{\frac{\log \frac{2}{\delta}}{m_t}}.
	\end{align}	
	
	For the term (c), according to $\hat{h} = \arg\min_{h\in \mathcal{H}}\hat{R}_{\bm{D}}(h)$, we have $c \leq 0$.
	
	Combining the above results from the terms $(a)$ to $(e)$, we have
	%
	\begin{align*}
		&R_{train}( \hat{h})-R_{train}( h^*) \leq 768L \log(4\sum_{t=1}^T n_t)\cdot L(\mathcal{F})\cdot \hat{\mathcal{G}}_{\mathbf{\Gamma}^{(q)}}(\mathcal{H})+ \frac{6L}{T}\sum_{t=1}^{T} \hat{\mathcal{G}}_{D_t^{tr}}(\mathcal{F})\\
		&+ \frac{4}{T} \sum_{t=1}^{T} d_{\mathcal{F}} (D_t^{(tr)},D_t^{(val)}) + 6\frac{B}{T} \sqrt{\sum_{t=1}^T \frac{1}{n_t}} \sqrt{\frac{\log \frac{2}{\delta}}{2}}  + \frac{6B}{T}\sum_{t=1}^{T} \sqrt{\frac{\log \frac{2}{\delta}}{m_t}} + \frac{12L{Dis(D^{val})}}{(\sum_{t=1}^T n_t)^2} \sqrt{\sum_{t=1}^T \frac{1}{\beta_t T} }.
	\end{align*}
\end{proof}

%

\section{Proof of Theorem \ref{theomtest}: Excess Transfer Error in Meta-Test Stage}

\begin{proof}
	Recall that
	\begin{tiny}
		\begin{align*}
			& R_{test}(\hat{h}) - R_{test}(h^*) \\
			= \ & \mathbb{E}_{\mu \sim \eta}\mathbb{E}_{D_{\mu}^{val}\sim (\mu^q)^{n_{\mu}}} \bm{L}(\mathcal{LM}(D_{\mu}^{tr}; \hat{h})),  D_{\mu}^{val}) - \mathbb{E}_{\mu \sim \eta}\mathbb{E}_{D_{\mu}^{val}\sim (\mu^q)^{n_{\mu}}} \bm{L}(\mathcal{LM}(D_{\mu}^{tr}; h^*)),  D_{\mu}^{val}) \\
			=\ & \underbrace{\mathbb{E}_{\mu \sim \eta}\mathbb{E}_{D_{\mu}^{val}\sim (\mu^q)^{n_{\mu}}} \bm{L}(\mathcal{LM}(D_{\mu}^{tr}; \hat{h}), D_{\mu}^{val}) - \mathbb{E}_{\mu \sim \eta}\mathbb{E}_{D_{\mu}^{val}\sim (\mu^q)^{n_{\mu}}}\mathbb{E}_{D_{\mu}^{tr}\sim (\mu^s)^{m_{\mu}}} \bm{L}(\mathcal{LM}(D_{\mu}^{tr}; \hat{h}), D_{\mu}^{val})}_{a} \\
			+\ & \underbrace{\mathbb{E}_{\mu \sim \eta}\mathbb{E}_{D_{\mu}^{val}\sim (\mu^q)^{n_{\mu}}}\mathbb{E}_{D_{\mu}^{tr}\sim (\mu^s)^{m_{\mu}}} \bm{L}(\mathcal{LM}(D_{\mu}^{tr}; \hat{h}), D_{\mu}^{val})  - \mathbb{E}_{\mu \sim \eta}\mathbb{E}_{D_{\mu}^{val}\sim (\mu^q)^{n_{\mu}}}\mathbb{E}_{D_{\mu}^{tr}\sim (\mu^s)^{m_{\mu}}} \bm{L}(\mathcal{LM}(D_{\mu}^{tr}; h^*), D_{\mu}^{val})}_{b}\\
			+\ & \underbrace{\mathbb{E}_{\mu \sim \eta}\mathbb{E}_{D_{\mu}^{val}\sim (\mu^q)^{n_{\mu}}}\mathbb{E}_{D_{\mu}^{tr}\sim (\mu^s)^{m_{\mu}}} \bm{L}(\mathcal{LM}(D_{\mu}^{tr}; h^*), D_{\mu}^{val})  - \mathbb{E}_{\mu \sim \eta}\mathbb{E}_{D_{\mu}^{val}\sim (\mu^q)^{n_{\mu}}} \bm{L}(\mathcal{LM}(D_{\mu}^{tr}; h^*)),  D_{\mu}^{val})}_{c}.
		\end{align*}	
	\end{tiny}
	We now bound the first term (a) by
	\begin{tiny}
		\begin{align*}
			(a) =& \underbrace{ \mathbb{E}_{\mu \sim \eta}\mathbb{E}_{D_{\mu}^{val}\sim (\mu^q)^{n_{\mu}}} \bm{L}(\mathcal{LM}(D_{\mu}^{tr}; \hat{h})),  D_{\mu}^{val}) - \mathbb{E}_{\mu \sim \eta} \mathbb{E}_{D_{\mu}^{tr'}\sim (\mu^s)^{m_{\mu}}}\bm{L} (\mathcal{LM}(D_{\mu}^{tr};\hat{h}), D_{\mu}^{tr'})}_{a1} \\
			+ & \underbrace{\mathbb{E}_{\mu\sim \eta} \mathbb{E}_{D_{\mu}^{tr'}\sim (\mu^s)^{m_{\mu}}}\bm{L} (\mathcal{LM}(D_{\mu}^{tr};\hat{h}), D_{\mu}^{tr'}) - \mathbb{E}_{\mu\sim \eta} \bm{L}(\mathcal{LM}(D_{\mu}^{tr}; \hat{h}), D_{\mu}^{tr'})}_{a2} \\
			+& \underbrace{\mathbb{E}_{\mu\sim \eta} \bm{L}(\mathcal{LM}(D_{\mu}^{tr}; \hat{h}), D_{\mu}^{tr'}) - \mathbb{E}_{\mu\sim\eta} \mathbb{E}_{D_{\mu}^{tr}\sim (\mu^s)^{m_{\mu}}}\bm{L}(\mathcal{LM}(D_{\mu}^{tr}; \hat{h}), D_{\mu}^{tr'})}_{a3}\\
			+&\underbrace{ \mathbb{E}_{\eta} \mathbb{E}_{D_{\mu}^{tr}\sim (\mu^s)^{m_{\mu}}}\bm{L}(\mathcal{LM}(D_{\mu}^{tr}; \hat{h}), D_{\mu}^{tr'}) - \mathbb{E}_{\eta} \mathbb{E}_{D_{\mu}^{tr}\sim (\mu^s)^{m_{\mu}}} \mathbb{E}_{D_{\mu}^{tr'}\sim (\mu^s)^{m_{\mu}}} \bm{L}(\mathcal{LM}(D_{\mu}^{tr}; \hat{h}), D_{\mu}^{tr'})  }_{a4}\\
			+ & \underbrace{\mathbb{E}_{\eta} \mathbb{E}_{D_{\mu}^{tr}\sim (\mu^s)^{m_{\mu}}} \mathbb{E}_{D_{\mu}^{tr'}\sim (\mu^s)^{m_{\mu}}} \bm{L}(\mathcal{LM}(D_{\mu}^{tr}; \hat{h}), D_{\mu}^{tr'}) - \mathbb{E}_{\mu \sim \eta} \mathbb{E}_{D_{\mu}^{tr}\sim (\mu^s)^{m_{\mu}}} \mathbb{E}_{D_{\mu}^{val}\sim (\mu^q)^{n_{\mu}}} \bm{L}(\mathcal{LM}(D_{\mu}^{tr}; \hat{h}), D_{\mu}^{val})}_{a5},
		\end{align*}
	\end{tiny}
	where $D_t^{tr'}$ is equivalent to $D_t^{tr}$.
	
	For the terms $(a1)$ and $(a5)$, we have
	\begin{align*}
		(a1)+(a5) \leq  2 \mathbb{E}_{\mu \sim \eta} d_{\mathcal{F}} ({\mu}^{s},{\mu}^{q}),
	\end{align*}
	where $d_{\mathcal{F}} ({\mu}^{s},{\mu}^{q})$ denotes the discrepancy divergence \citep{ben2010theory} between samples from the probability distributions ${\mu}^{s}$ and ${\mu}^{q}$ with respect to the hypothesis class $\mathcal{F}$,
	\begin{align*}
		d_{\mathcal{F}} ({\mu}^{s},{\mu}^{q}) =   \sup_{f \in \mathcal{F}} \left| \mathbb{E}_{D_{\mu}^{val} \sim (\mu^q)^{n_{\mu}}}\bm{L} (f, D_t^{val}) - \mathbb{E}_{D_{\mu}^{tr} \sim (\mu^s)^{m_{\mu}}}\bm{L} (f, D_t^{tr})  \right|.
	\end{align*}
	
	For the term $(a2)$, we have
	\begin{align*}
		(a2) 	\leq 2L\hat{\mathfrak{R}}_{D_{\mu}^{tr}}(\mathcal{F}) + 3B\sqrt{\frac{\log \frac{2}{\delta}}{m_{\mu}}}\leq  3L\hat{\mathcal{G}}_{D_{\mu}^{tr}}(\mathcal{F})+ 3B\sqrt{\frac{\log \frac{2}{\delta}}{m_{\mu}}}.
	\end{align*}
	
	Similarly, For the term $(a4)$, we have
	\begin{align*}
		(a4) \leq  3L\hat{\mathcal{G}}_{D_{\mu}^{tr}}(\mathcal{F})+ 3B\sqrt{\frac{\log \frac{2}{\delta}}{m_{\mu}}}.
	\end{align*}
	
	For the term $(a3)$, suppose that the outer loss can be upper bounded by the inner loss, and then we have $a3<0$ according to the definition of $\mathcal{LM}$ function, i.e., $\mathcal{LM}(D_{\mu}^{tr};\hat{h}(D_{\mu}^{tr}))$ minimizes the $\bm{L}(f,D_{\mu}^{tr})$ equipped with hyper-parameters $\hat{h}(D_{\mu}^{tr})$.
	
	Therefore, combining the above analysis, the term $(a)$ can be upper bounded as
	\begin{align*}
		(a) \leq 2 \mathbb{E}_{\mu \sim \eta} d_{\mathcal{F}} ({\mu}^{s},{\mu}^{q})  + 6L\hat{\mathcal{G}}_{D_{\mu}^{tr}}(\mathcal{F})+ 6B\sqrt{\frac{\log \frac{2}{\delta}}{m_{\mu}}}.
	\end{align*}	
	
	For the term $(b)$, according to the task diversity definition (Assumption \ref{assumption4}), we have
	\begin{align*}
		(b) = R_{\eta}(\hat{h}) - R_{\eta}(h^*) \leq \alpha\left(R_{train}(\hat{h}) - R_{train}(h^*))\right)+\beta.
	\end{align*}
	
	For the term $(c)$, according to definition of $h^*$, we have $(c)\leq 0$.

	Combining above results from term $(a)$ to $(c)$, we have
	\begin{align*}
		& R_{test}(\hat{h}) - R_{test}(h^*)  \\
		\leq & \alpha \left(R_{train}(\hat{\mathbf{f}}, \hat{h})-R_{train}(\mathbf{f}^*, h^*)\right)+ \beta+ 6L\hat{\mathcal{G}}_{D_{\mu}^{tr}}(\mathcal{F})+ 2 \mathbb{E}_{\mu \sim \eta} d_{\mathcal{F}} ({\mu}^{s},{\mu}^{q}) +  6B\sqrt{\frac{\log \frac{2}{\delta}}{m_{\mu}}}.
	\end{align*}
	
\end{proof}

\section{Proof of the Proposition \ref{prop4}}
\begin{proof}
	We denote
	$$\tilde{\mathbf{w}} = \arg\min_{\mathbf{w}} \mathbb{E}_{(x^{(s)},y^{(s)}) \in \mu^s} \ell(\mathbf{w}^{T} \hat{h}(x^{(s)}),y^{(s)})$$ and
	$${\mathbf{w}}^* = \arg\min_{\mathbf{w}} \mathbb{E}_{(x^{(s)},y^{(s)}) \in \mu^s} \ell(\mathbf{w}^{T} h^*(x^{(s)}),y^{(s)}).$$
	Observe that
	\begin{align*}
		&\ \ \ \ \ R_{\eta}(\hat{h}) - R_{\eta}(h^*)  \\
		&=\mathbb{E}_{\mu \sim \eta}\mathbb{E}_{D_{\mu}^{val}\sim (\mu^q)^{n}}\mathbb{E}_{D_{\mu}^{tr}\sim (\mu^s)^{m}} \bm{L}(\mathcal{LM}(D_{\mu}^{tr}; \hat{h}), D_{\mu}^{val})  - \\
		&\ \ \ \ \ \ \ \ \ \ \ \ \ \ \  \mathbb{E}_{\mu \sim \eta}\mathbb{E}_{D_{\mu}^{val}\sim (\mu^q)^{n}}\mathbb{E}_{D_{\mu}^{tr}\sim (\mu^s)^{m}} \bm{L}(\mathcal{LM}(D_{\mu}^{tr}; h^*), D_{\mu}^{val})\\
		&=\mathbb{E}_{\mu\sim\eta} \mathbb{E}_{x\sim \mu^q} \left\{\left|\tilde{\mathbf{w}}^{\mathsf{T}}\hat{h}(x) - \mathbf{w}^{*\mathsf{T}}h^*(x)\right|^2\right\}, \\
		&= \sup_{\mathbf{w}_0} \inf_{{\mathbf{w}}} \mathbb{E}_{\mu\sim\eta} \mathbb{E}_{x\sim \mu^q} \left\{\left|{\mathbf{w}}^{\mathsf{T}}\hat{h}(x) - \mathbf{w}_0^{\mathsf{T}}h^*(x)\right|^2\right\} \\
		&= \sup_{\mathbf{w}_0} \inf_{\tilde{\mathbf{w}}}  \left\{ [\tilde{\mathbf{w}}^{\mathsf{T}} \ -\mathbf{w}_0^{\mathsf{T}}] \Lambda \begin{bmatrix}\tilde{\mathbf{w}} \\  -\mathbf{w}_0 \end{bmatrix}\right\},
	\end{align*}
	where $\Lambda$ is defined as
	\begin{align*}
		\Lambda(\hat{h},h^*) = \begin{bmatrix} \mathbb{E}_{\mu\sim\eta}\mathbb{E}_{x\sim \mu^q}[\hat{h}(x)\hat{h}(x)^{\mathsf{T}}] & \mathbb{E}_{\mu\sim\eta}\mathbb{E}_{x\sim \mu^q}[\hat{h}(x)h^*(x)^{\mathsf{T}}] \\ \mathbb{E}_{\mu\sim\eta} \mathbb{E}_{x\sim \mu^q}[h^*(x)\hat{h}(x)^{\mathsf{T}}] & \mathbb{E}_{\mu\sim\eta}\mathbb{E}_{x\sim \mu^q}[h^*(x)h^*(x)^{\mathsf{T}}] \end{bmatrix} =
		\begin{bmatrix} \mathbf{G}_{\hat{h}\hat{h}} & \mathbf{G}_{\hat{h}h^*} \\ \mathbf{G}_{h^*\hat{h}} & \mathbf{G}_{h^*h^*}  \end{bmatrix}.
	\end{align*}
	According to the partial minimization of a convex quadratic form \citep{Boyd2004convex}, we have
	\begin{align*}
		\inf_{\tilde{\mathbf{w}}}  \left\{ [\tilde{\mathbf{w}}^{\mathsf{T}} \ -\mathbf{w}_0^{\mathsf{T}}] \Lambda \begin{bmatrix}\tilde{\mathbf{w}} \\  -\mathbf{w}_0 \end{bmatrix}\right\} = \mathbf{w}_0^{\mathsf{T}}  \Lambda_S(\hat{h},h^*) \mathbf{w}_0,
	\end{align*}
	where $\Lambda_S(\hat{h},h^*) = \mathbf{G}_{h^*h^*}- \mathbf{G}_{h^*\hat{h}} (\mathbf{G}_{\hat{h}\hat{h}})^{\dagger} \mathbf{G}_{\hat{h}}h^*$ is the generalized Schur complement of the representation of $h^*$ with respect to $\hat{h}$.
	Furthermore, according to the variational characterization of the singular value, we have
	\begin{align*}
		\sup_{\mathbf{w}_0: \|\mathbf{w}_0\| \leq M } \mathbf{w}_0^{\mathsf{T}}  \Lambda_S \mathbf{w}_0  = M \sigma_1(\Lambda_S(\hat{h},h^*)),
	\end{align*}
	where $\sigma_1$ denotes the maximal singular value.

	Moreover, we denote
	$$\tilde{\mathbf{w}}_t = \arg\min_{\mathbf{w}_t} \mathbb{E}_{(x_t^{(s)},y_t^{(s)}) \in \mu_t^s} \ell(\mathbf{w}_t^{T} \hat{h}(x_t^{(s)}),y_t^{(s)}), t\in [T]$$ and 	$${\mathbf{w}}_t^* = \arg\min_{\mathbf{w}_t} \mathbb{E}_{(x_t^{(s)},y_t^{(s)}) \in \mu^s} \ell(\mathbf{w}_t^{T} h^*(x_t^{(s)}),y_t^{(s)}), t\in [T].$$
	Thus we have the following derivation:	
	\begin{align*}
		& \ \ \ \ \ R_{train}(\hat{h}) - R_{train}(h^*) \\
		&= \frac{1}{T}\sum_{t=1}^{T} \mathbb{E}_{D_t^{val}\sim (\mu_t^q)^{n}}\mathbb{E}_{D_t^{tr}\sim (\mu_t^s)^{m}}\bm{L} (\mathcal{LM}(D_t^{tr};\hat{h}), D_t^{val}) - \\
		&\ \ \ \ \ \ \ \ \ \ \ \ \ \ \ \frac{1}{T}\sum_{t=1}^{T} \mathbb{E}_{D_t^{val}\sim (\mu_t^q)^{n}}\mathbb{E}_{D_t^{tr}\sim (\mu_t^s)^{m}}\bm{L} (\mathcal{LM}(D_t^{tr};h^*), D_t^{val})\\
		&=\frac{1}{T} \sum_{t=1}^T  \mathbb{E}_{\mathbf{\mu}_t\sim \eta}\mathbb{E}_{x\sim \mu_t^q}  \left\{\left|\tilde{\mathbf{w}}_t^{\mathsf{T}}\hat{h}(x) - \mathbf{w}_t^{*T}h^*(x)\right|^2\right\}  \\
		&=\frac{1}{T} \sum_{t=1}^T \inf_{{\mathbf{w}}_t} \mathbb{E}_{\mathbf{\mu}_t\sim \eta}\mathbb{E}_{x\sim \mu_t^q} \left\{\left|{\mathbf{w}}_t^{\mathsf{T}}\hat{h}(x) - \mathbf{w}_t^{*T}h^*(x)\right|^2\right\}  \\
		&= \frac{1}{T} \sum_{t=1}^T   \mathbf{w}_t^{*T} \Lambda_S(\hat{h},h^*)  \mathbf{w}_t^{*} = tr(\Lambda_S(\hat{h},h^*) \mathbf{P}^{\mathsf{T}}\mathbf{P}/T) = tr(\Lambda_S(\hat{h},h^*) \mathbf{K}),
	\end{align*}
	where $\mathbf{K}=\mathbf{P}^{\mathsf{T}}\mathbf{P}/T$.
	Since $\Lambda_S \succeq 0$, and $\mathbf{K} \succeq 0$, through the Von-Neumann trace inequality, we have that
	\begin{align*}
		tr(\Lambda_S(\hat{h},h^*) \mathbf{K}) & \geq \sum_{i=1}^{d_L} \sigma_i(\Lambda_S(\hat{h},h^*)) \sigma_{d_L-i+1}(\mathbf{K}) \geq  \sum_{i=1}^{d_L} \sigma_i(\Lambda_S(\hat{h},h^*)) \sigma_{d_L}(\mathbf{K}) \\
		&= tr(\Lambda_S(\hat{h},h^*))\sigma_{d_L}(\mathbf{K}) \geq \sigma_1(\Lambda_S(\hat{h},h^*)) \sigma_{d_L}(\mathbf{K}).
	\end{align*}
	Now, we can deduce that
	\begin{align*}
		R_{\eta}(\hat{h}) - R_{\eta}(h^*) \leq \frac{M}{\sigma_{d_L}(\mathbf{K})} \left\{ R_{train}(\hat{h}) - R_{train}(h^*)\right\}.
	\end{align*}
	Based on the above derivation, we can have the conclusion.	
\end{proof}

\section{Proof of the Proposition \ref{prop2}}

\begin{proof}
	Denote $z_i(x)=f(h(x))_i$, the loss function can be written as
	\begin{align}
		\ell(f(h(x)),y) = -\sum_{i}y_i \log \left(\frac{e^{z_i}}{\sum_{k} e^{z_k}}\right).
	\end{align}
	
	Let $a_i = \frac{e^{z_i}}{\sum_{k} e^{z_k}}$,
	\begin{align}
		\frac{\partial \ell }{\partial z_i}= \sum_{j} \left(\frac{\partial \ell}{\partial a_j}\frac{\partial a_j}{\partial z_i}\right),
	\end{align}	
	where $\frac{\partial \ell}{\partial a_j} = -\frac{y_j}{a_j}$, for $\frac{\partial a_j}{\partial z_i}$,
	if $i=j$, $\frac{\partial a_j}{\partial z_i} = \frac{\partial (\frac{e^{z_i}}{\sum_{k} e^{z_k}}) }{\partial z_i}= \frac{e^{z_i}\sum_k e^{z_k}-(e^{z_i})^2}{(\sum_k e^{z_k})^2} = a_i(1-a_i)$; otherwise, $i\neq j$, $\frac{\partial a_j}{\partial z_i} =\frac{\partial (\frac{e^{z_j}}{\sum_{k} e^{z_k}}) }{\partial z_i} = \frac{-e^{z_i}e^{z_j}}{(\sum_k e^{z_k})^2}=-a_ia_j$.
	Therefore,
	\begin{align*}
		\frac{\partial \ell }{\partial z_i}&= -\sum_{j\neq i} (\frac{y_j}{a_j}(-a_ia_j)) - \frac{y_j}{a_j}(1-a_j) \\
		&=a_i \sum_{j} y_j  -y_i =a_i-y_i,
	\end{align*}
	since $a_i \in [0,1], y_i \in \{0,1\}$, we can obtain $\frac{\partial \ell }{\partial z_i} \in [-1,1]$.
	Therefore, we can demonstrate that the loss function is 1-Lipschitz with respect to $f(h(X))$.
\end{proof}

\section{Proof of the Proposition \ref{prop3}}
We first present some essential theoretical results preparing for proving the Proposition \ref{prop3}.
\begin{lemma}
	Consider the following generalized linear model
	\begin{align} \label{linear}
		p(\bm{y}|\bm{\eta})=h(\bm{y})\exp(\bm{\eta}^T t(\bm{y})-a(\bm{\eta})),
	\end{align}
	Then we have
	\begin{align}
		(\hat{\bm{\eta}}-\bm{\eta})^{\mathsf{T}}\frac{a''(\bm{c}_1)}{2} 	(\hat{\bm{\eta}}-\bm{\eta}) \leq KL(p(\bm{y}|\bm{\eta})|p(\bm{y}|\hat{\bm{\eta}})) \leq (\hat{\bm{\eta}}-\bm{\eta})^{\mathsf{T}}\frac{a''(\bm{c}_2)}{2} 	(\hat{\bm{\eta}}-\bm{\eta}),
	\end{align}
	where $\bm{c}_1 = \inf_{\bm{c}\in [\hat{\bm{\eta}},\bm{\eta}]} a''(\bm{c}), \bm{c}_2=\sup_{\bm{c}\in [\hat{\bm{\eta}},\bm{\eta}]} a''(\bm{c})$.
\end{lemma}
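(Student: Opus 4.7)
The plan is to compute the KL divergence in closed form using the exponential-family structure, and then apply a Taylor expansion of the log-partition function $a$ to match the sandwiching Hessian bounds on the right-hand side.

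First I would write out the KL divergence directly from the definition in \eqref{linear}. Since the base measure $h(\bm{y})$ cancels in $\log(p(\bm{y}|\bm{\eta})/p(\bm{y}|\hat{\bm{\eta}}))$, one gets
\begin{align*}
KL(p(\bm{y}|\bm{\eta})\,\|\,p(\bm{y}|\hat{\bm{\eta}})) = \mathbb{E}_{\bm{\eta}}\bigl[(\bm{\eta}-\hat{\bm{\eta}})^{\mathsf{T}} t(\bm{y})\bigr] - a(\bm{\eta}) + a(\hat{\bm{\eta}}).
\end{align*}
The key identity I would invoke next is the standard moment formula for exponential families, namely $\nabla a(\bm{\eta}) = \mathbb{E}_{\bm{\eta}}[t(\bm{y})]$, which follows from differentiating the normalization identity $\int h(\bm{y})\exp(\bm{\eta}^{\mathsf{T}} t(\bm{y})-a(\bm{\eta}))d\bm{y}=1$. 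This reduces the KL expression to
\begin{align*}
KL = a(\hat{\bm{\eta}}) - a(\bm{\eta}) - \nabla a(\bm{\eta})^{\mathsf{T}}(\hat{\bm{\eta}}-\bm{\eta}),
\end{align*}
which is precisely the Bregman divergence associated with the convex function $a$.

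Next I would apply Taylor's theorem with the Lagrange (second-order) remainder to $a$ along the segment from $\bm{\eta}$ to $\hat{\bm{\eta}}$. This gives the existence of some $\bm{c}$ in the segment $[\bm{\eta},\hat{\bm{\eta}}]$ such that
\begin{align*}
a(\hat{\bm{\eta}}) = a(\bm{\eta}) + \nabla a(\bm{\eta})^{\mathsf{T}}(\hat{\bm{\eta}}-\bm{\eta}) + \tfrac{1}{2}(\hat{\bm{\eta}}-\bm{\eta})^{\mathsf{T}} a''(\bm{c})(\hat{\bm{\eta}}-\bm{\eta}).
\end{align*}
Substituting back collapses the first-order terms and leaves $KL = \tfrac{1}{2}(\hat{\bm{\eta}}-\bm{\eta})^{\mathsf{T}} a''(\bm{c})(\hat{\bm{\eta}}-\bm{\eta})$. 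To finish, I would sandwich $a''(\bm{c})$ (as a symmetric positive semi-definite matrix, since $a$ is convex) between its infimum $a''(\bm{c}_1)$ and supremum $a''(\bm{c}_2)$ on the segment in the Loewner order sense; plugging these into the quadratic form yields both the lower and upper bounds of the claim.

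The main obstacle I anticipate is purely a matter of formulation: one must be careful that $\bm{c}_1$ and $\bm{c}_2$ in the statement refer to the points attaining inf/sup of $a''$ over the segment $[\bm{\eta},\hat{\bm{\eta}}]$ (interpreted in a positive semi-definite sense rather than pointwise scalar sense when $a''$ is matrix-valued). Apart from this notational subtlety and a brief justification of the differentiation-under-the-integral step needed for $\nabla a=\mathbb{E}_{\bm{\eta}}[t]$, the argument is essentially the well-known fact that a Bregman divergence equals a quadratic form in the Hessian at some intermediate point.
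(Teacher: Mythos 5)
Your proposal is correct and follows essentially the same route as the paper's proof: compute the KL divergence via the exponential-family structure and the moment identity $\nabla a(\bm{\eta})=\mathbb{E}_{\bm{\eta}}[t(\bm{y})]$, apply Taylor's theorem with second-order remainder at an intermediate point $\bm{c}$ on the segment, and then sandwich the resulting quadratic form; the paper's own proof even resolves the notational subtlety you flag by redefining $\bm{c}_1,\bm{c}_2$ as the points extremizing the quadratic form $(\hat{\bm{\eta}}-\bm{\eta})^{\mathsf{T}}a''(\bm{c})(\hat{\bm{\eta}}-\bm{\eta})$ over the segment, exactly in the spirit of your Loewner-order caveat.
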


\begin{proof}
	Observe that
	\begin{align*}
		& KL(p(\bm{y}|\bm{\eta})|p(\bm{y}|\bm{\hat{\eta}}))=\int (p(\bm{y}|\bm{\eta})\log \frac{(p(\bm{y}|\bm{\eta})}{(p(\bm{y}|\hat{\bm{\eta}})} d\bm{y} \\
		=&\int p(\bm{y}|\bm{\eta})^{\mathsf{T}} [t(\bm{y})(\bm{\eta} - \hat{\bm{\eta}}) +a(\hat{\bm{\eta}})-a(\bm{\eta})] d\bm{y}\\
		=&  a'(\bm{\eta})^{\mathsf{T}}(\bm{\eta} - \hat{\bm{\eta}})+a(\hat{\bm{\eta}})-a(\bm{\eta}),
	\end{align*}
	where $a'(\bm{\eta}) = \int t(\bm{y})^{\mathsf{T}} p(\bm{y}|\bm{\eta})d\bm{y}$.
	Based on the Taylor's theorem we have that
	\begin{align*}
		a(\hat{\bm{\eta}})=a(\bm{\eta}) +a'(\bm{\eta})^{\mathsf{T}} (\hat{\bm{\eta}}-\bm{\eta}) + (\hat{\bm{\eta}}-\bm{\eta})^{\mathsf{T}}\frac{a''(\bm{c})}{2} (\hat{\bm{\eta}}-\bm{\eta}),
	\end{align*}
	where $\bm{c}\in [\hat{\bm{\eta}},\bm{\eta}]$. Therefore, we can obtain
	\begin{align*}
		KL(p(\bm{y}|\bm{\eta})|p(\bm{y}|\bm{\hat{\eta}})) = (\hat{\bm{\eta}}-\bm{\eta})^{\mathsf{T}}\frac{a''(\bm{c})}{2} (\hat{\bm{\eta}}-\bm{\eta}).
	\end{align*}
	We can then obtain
	\begin{align*}
		(\hat{\bm{\eta}}-\bm{\eta})^{\mathsf{T}}\frac{a''(\bm{c}_1)}{2} (\hat{\bm{\eta}}-\bm{\eta}) \leq KL(p(\bm{y}|\bm{\eta})|p(\bm{y}|\bm{\hat{\eta}})) \leq (\hat{\bm{\eta}}-\bm{\eta})^{\mathsf{T}}\frac{a''(\bm{c}_2)}{2} (\hat{\bm{\eta}}-\bm{\eta}),
	\end{align*}
	where $\bm{c}_1 = \inf_{\bm{c} \in [\hat{\bm{\eta}},\bm{\eta}]} (\hat{\bm{\eta}}-\bm{\eta})^{\mathsf{T}} a''(\bm{c}) (\hat{\bm{\eta}}-\bm{\eta}), \bm{c}_2=\sup_{\bm{c}\in [\hat{\bm{\eta}},\bm{\eta}]} (\hat{\bm{\eta}}-\bm{\eta})^{\mathsf{T}} a''(\bm{c}) (\hat{\bm{\eta}}-\bm{\eta}) $.
\end{proof}
\begin{remark} \rm
	If the data generating model satisfies the conditional likelihood defined in Eq. (\ref{distribution}), for the cross-entropy loss we have
	\begin{align*}
		&\mathbb{E}_{(x,y) \in D_t^{val}} [\ell(f(h(x)),y) - \ell(f(h(x)),y)] \\
		=& \mathbb{E}_x [KL(\text{Multi}(\text{Softmax}(f(h(x))))| KL(\text{Multi}(\text{Softmax}(f(h(x))))],
	\end{align*}
	where $\text{Multi}$ denote the multinomial distribution.
	
	When consider the multinomial distribution, the generalized linear model in Eq. (\ref{linear}) satisfies that $h(\bm{y})=1, t(\bm{y}) =\bm{y}, a(\bm{\eta})= \log(\sum_{k=1}^{K} \exp(\eta_k)), \bm{\eta} = (\eta_1,\cdots,\eta_K)^{\mathsf{T}}$. Then, we have
	\begin{align*}
		a'(\bm{\eta}) = \left( \frac{\exp(\eta_1)}{\sum_{k=1}^{K} \exp(\eta_k)}, \cdots,  \frac{\exp(\eta_{K})}{\sum_{k=1}^{K} \exp(\eta_k)}        \right)^{\mathsf{T}},
	\end{align*}
	\begin{equation*}
		a''(\bm{\eta}) = \left(
		\begin{array}{cccc}
			\frac{\exp(\eta_1)}{S}  - \frac{\exp(\eta_1)^2}{S^2}  & - \frac{\exp(\eta_1)\exp(\eta_2)}{S^2} &   \cdots &  - \frac{\exp(\eta_1)\exp(\eta_{K-1})}{S^2} \\
			-\frac{\exp(\eta_2)\exp(\eta_1)}{S^2}  & \frac{\exp(\eta_2)}{S}  - \frac{\exp(\eta_2)^2}{S^2} &   \cdots &  - \frac{\exp(\eta_2)\exp(\eta_{K})}{S^2}   \\
			\vdots & \vdots & \vdots & \vdots  \\
			-\frac{\exp(\eta_{K})\exp(\eta_1)}{S^2}  &  -\frac{\exp(\eta_{K})\exp(\eta_2)}{S^2}   & \cdots &  \frac{\exp(\eta_K)}{S}  - \frac{\exp(\eta_K)^2}{S^2}  \\
		\end{array}
		\right),
	\end{equation*}
	where $S =[\sum_{k=1}^{K} \exp(\eta_k)]$.
	We denote $p_i = \exp(\eta_i) /S$, Then we have
	\begin{align*}
		& \mathbb{E}_{(x,y) \in D_t^{val}} [\ell(f(h(x)),y) - \ell(f(h(x)),y)] =  (\hat{\bm{\eta}}-\bm{\eta})^{\mathsf{T}}\frac{a''(\bm{c})}{2} (\hat{\bm{\eta}}-\bm{\eta})\\
		= & \sum_{k=1}^{K} \left[p_k-p_k^2\right]\eta_k^2 - \sum_{k=1}^{K} \sum_{j=1,j\neq k}^{K}  p_kp_j \eta_k \eta_j \\
		\geq & \min_{k,j\in[K],k\neq j} \{p_k-p_k^2, p_kp_j\}  \|\hat{\bm{\eta}}-\bm{\eta} \|_2^2	:=  C \|\hat{\bm{\eta}}-\bm{\eta} \|_2^2.
	\end{align*}

\end{remark}

{\noindent \textbf{Proof of Proposition \ref{prop3}}}
\begin{proof}
	We denote
	$$\tilde{\bm{A}} = \arg\min_{\bm{A}} \mathbb{E}_{(x^{(s)},y^{(s)})\sim \mu^s} \ell(\bm{A}^{\mathsf{T}} \hat{h}(x^{(s)}),y^{(s)})$$ and
	$$\bm{A}^{*} = \arg\min_{\bm{A}} \mathbb{E}_{(x^{(s)},y^{(s)})\sim \mu^s} \ell(\bm{A}^{\mathsf{T}} h^*(x^{(s)}),y^{(s)}).$$
	Observe that
	\begin{align*}
		&\ \ \ \ \ R_{\eta}(\hat{h}) - R_{\eta}(h^*)  \\
		&= \frac{1}{T}\sum_{t=1}^{T} \mathbb{E}_{D_t^{val}\sim (\mu_t^q)^{n}}\mathbb{E}_{D_t^{tr}\sim (\mu_t^s)^{m}}\bm{L} (\mathcal{LM}(D_t^{tr};\hat{h}), D_t^{val}) - \\
		&\ \ \ \ \ \ \ \ \ \ \ \ \ \ \ \frac{1}{T}\sum_{t=1}^{T} \mathbb{E}_{D_t^{val}\sim (\mu_t^q)^{n}}\mathbb{E}_{D_t^{tr}\sim (\mu_t^s)^{m}}\bm{L} (\mathcal{LM}(D_t^{tr};h^*), D_t^{val})\\
		&=\mathbb{E}_{\mu\sim\eta} \mathbb{E}_{(x,y)\sim \mu^q} \left\{ \ell(\tilde{\bm{A}}^{\mathsf{T}}\hat{h}(x),y) - \ell({\bm{A}}^{*\mathsf{T}}h^*(x),y)  \right\} \\
		& \leq    \mathbb{E}_{\mu\sim\eta} \mathbb{E}_{x\sim \mu^q} \|\tilde{\bm{A}}^{\mathsf{T}}\hat{h}(x) - {\bm{A}}^{*\mathsf{T}}h^*(x)\|_2^2 \\
		& =    \mathbb{E}_{\mu\sim\eta} \mathbb{E}_{x\sim \mu^q} \left\{ \sum_{k=1}^K \left|(\tilde{\bm{A}})_k^{\mathsf{T}}\hat{h}(x) - ({\bm{A}})_k^{*\mathsf{T}}h^*(x)\right|^2 \right\}\\
		&= \sup_{\bm{A}'} \inf_{{\bm{A}}} \mathbb{E}_{\mu\sim\eta} \mathbb{E}_{x\sim \mu^q}\left\{ \sum_{k=1}^K \left|({\bm{A}})_k^{\mathsf{T}}\hat{h}(x) - ({\bm{A}})_k^{'\mathsf{T}}h^*(x)\right|^2 \right\} \\
		&= \sum_{k=1}^K   \left\{ \sup_{(\bm{A})_k'} \inf_{{(\bm{A})_k}} [({\bm{A}})_k^{\mathsf{T}} \ -({\bm{A}})_k^{'\mathsf{T}}] \Lambda \begin{bmatrix}({\bm{A}})_k \\  -({\bm{A}})_k^{'} \end{bmatrix}\right\},
	\end{align*}
	where $\Lambda$ is defined as
	\begin{align*}
		\Lambda(\hat{h},h^*) = \begin{bmatrix} \mathbb{E}_{\mu\sim\eta}\mathbb{E}_{x\sim \mu^q}[\hat{h}(x)\hat{h}(x)^{\mathsf{T}}] & \mathbb{E}_{\mu\sim\eta}\mathbb{E}_{x\sim \mu^q}[\hat{h}(x)h^*(x)^{\mathsf{T}}] \\ \mathbb{E}_{\mu\sim\eta} \mathbb{E}_{x\sim \mu^q}[h^*(x)\hat{h}(x)^{\mathsf{T}}] & \mathbb{E}_{\mu\sim\eta}\mathbb{E}_{x\sim \mu^q}[h^*(x)h^*(x)^{\mathsf{T}}] \end{bmatrix} =
		\begin{bmatrix} \mathbf{G}_{\hat{h}\hat{h}} & \mathbf{G}_{\hat{h}h^*} \\ \mathbf{G}_{h^*\hat{h}} & \mathbf{G}_{h^*h^*}  \end{bmatrix}.
	\end{align*}
	The first inequality holds since the 1-Lipschitz continuity of the cross-entropy loss.
	According to the partial minimization of a convex quadratic form \citep{Boyd2004convex}, we have
	\begin{align*}
		\inf_{({\bm{A})_k}}  \left\{ [({\bm{A}})_k^{\mathsf{T}} \ -({\bm{A}})_k^{'\mathsf{T}}] \Lambda \begin{bmatrix}({\bm{A}})_k \\  -({\bm{A}})_k^{'} \end{bmatrix}\right\} = ({\bm{A}})_k^{'\mathsf{T}}  \Lambda_S(\hat{h},h^*) ({\bm{A}})_k^{'},
	\end{align*}
	where $\Lambda_S(\hat{h},h^*) = \mathbf{G}_{h^*h^*}- \mathbf{G}_{h^*\hat{h}} (\mathbf{G}_{\hat{h}\hat{h}})^{\dagger} \mathbf{G}_{\hat{h}}h^*$ is the generalized Schur complement of the representation of $h^*$ with respect to $\hat{h}$.
	Furthermore, according to the variational characterization of the singular value, we have
	\begin{align*}
		\sum_{k=1}^K  \sup_{({\bm{A}})_k^{'\mathsf{T}}: \|({\bm{A}})_k^{'\mathsf{T}}\| \leq M_k } ({\bm{A}})_k^{'\mathsf{T}}  \Lambda_S(\hat{h},h^*) ({\bm{A}})_k^{'}  = \sum_{k=1}^K  M_k \sigma_1(\Lambda_S(\hat{h},h^*)) \leq  M \sigma_1(\Lambda_S(\hat{h},h^*)),
	\end{align*}
	where $\sigma_1$ denotes the maximal singular value.
	
	Moreover, we denote
	$$\tilde{\bm{A}}_t = \arg\min_{\bm{A}_t} \mathbb{E}_{(x_t^{(s)},y_t^{(s)})\sim \mu_t^s} \ell(\bm{A}_t^{\mathsf{T}} \hat{h}(x_t^{(s)}),y_t^{(s)}), t \in [T]$$ and
	$$\bm{A}_t^{*} = \arg\min_{\bm{A}_t} \mathbb{E}_{(x_t^{(s)},y_t^{(s)})\sim \mu_t^s} \ell(\bm{A}_t^{\mathsf{T}} h^*(x_t^{(s)}),y_t^{(s)}), t \in [T].$$ Thus we have the following derivation	
	\begin{align*}
		& \ \ \ \ \ R_{train}(\hat{h}) - R_{train}(h^*) \\
		&= \frac{1}{T}\sum_{t=1}^{T} \mathbb{E}_{D_t^{val}\sim (\mu_t^q)^{n}}\mathbb{E}_{D_t^{tr}\sim (\mu_t^s)^{m}}\bm{L} (\mathcal{LM}(D_t^{tr};\hat{h}), D_t^{val}) - \\
		&\ \ \ \ \ \ \ \ \ \ \ \ \ \ \ \frac{1}{T}\sum_{t=1}^{T} \mathbb{E}_{D_t^{val}\sim (\mu_t^q)^{n}}\mathbb{E}_{D_t^{tr}\sim (\mu_t^s)^{m}}\bm{L} (\mathcal{LM}(D_t^{tr};h^*), D_t^{val})\\
		&= \frac{1}{T}\sum_{t=1}^{T}  \mathbb{E}_{(x,y)\sim \mu^q_t} \left\{ \ell(\tilde{\bm{A}}_t^{\mathsf{T}}\hat{h}(x),y) - \ell({\bm{A}}_t^{*\mathsf{T}}h^*(x),y)  \right\} \\
		&\geq\frac{C}{T} \sum_{t=1}^T  \mathbb{E}_{\mathbf{\mu}_t\sim \eta}\mathbb{E}_{x\sim \mu_t^q}  \left\{\|\tilde{\bm{A}}_t^{\mathsf{T}}\hat{h}(x) - {\bm{A}}_t^{*\mathsf{T}}h^*(x)\|_2^2 \right\}  \\
		&=\frac{C}{T} \sum_{t=1}^T \inf_{{\bm{A}}_t} \mathbb{E}_{\mathbf{\mu}_t\sim \eta}\mathbb{E}_{x\sim \mu_t^q} \left\{\|\tilde{\bm{A}}_t^{\mathsf{T}}\hat{h}(x) - {\bm{A}}_t^{*\mathsf{T}}h^*(x)\|_2^2  \right\}  \\
		&=\frac{C}{T} \sum_{t=1}^T \inf_{{\bm{A}}_t} \mathbb{E}_{\mathbf{\mu}_t\sim \eta}\mathbb{E}_{x\sim \mu_t^q} \left\{ \sum_{k=1}^K |(\tilde{\bm{A}}_t)_k^{\mathsf{T}}\hat{h}(x) - ({\bm{A}}_t)_k^{*\mathsf{T}}h^*(x)|^2  \right\}  \\
		&= \frac{C}{T} \sum_{t=1}^T  \sum_{k=1}^K ({\bm{A}}_t)_k^{*\mathsf{T}} \Lambda_S(\hat{h},h^*)  ({\bm{A}}_t)_k^{*} = \sum_{k=1}^K tr(\Lambda_S(\hat{h},h^*) (\mathbf{Q})_k^{\mathsf{T}}(\mathbf{Q})_k/T) = \sum_{k=1}^K tr(\Lambda_S(\hat{h},h^*) (\mathbf{K})_k),
	\end{align*}
	where $(\mathbf{K})_k=(\mathbf{Q})_k^{\mathsf{T}}(\mathbf{Q})_k/T$.
	Since $\Lambda_S \succeq 0$, and $(\mathbf{K})_k \succeq 0$, through the Von-Neumann trace inequality, we have that
	\begin{align*}
		\sum_{k=1}^K tr(\Lambda_S(\hat{h},h^*) (\mathbf{K})_k) & \geq \sum_{k=1}^K \sum_{i=1}^{d_L} \sigma_i(\Lambda_S(\hat{h},h^*)) \sigma_{d_L-i+1}(\mathbf{(\mathbf{K})_k}) \geq  \sum_{k=1}^K\sum_{i=1}^{d_L} \sigma_i(\Lambda_S(\hat{h},h^*)) \sigma_{d_L}((\mathbf{K})_k) \\
		&= \sum_{k=1}^K tr(\Lambda_S(\hat{h},h^*))\sigma_{d_L}((\mathbf{K})_k) \geq \sigma_1(\Lambda_S(\hat{h},h^*)) \sum_{k=1}^K  \sigma_{d_L}((\mathbf{K})_k).
	\end{align*}
	Now, we can deduce that
	\begin{align*}
		R_{\eta}(\hat{h}) - R_{\eta}(h^*) \leq \frac{M}{\sum_{k=1}^K  \sigma_{d_L}((\mathbf{K})_k)} \left\{ R_{train}(\hat{h}) - R_{train}(h^*)\right\}.
	\end{align*}
	Based on the above derivation, the conclusion can then obtained.	
\end{proof}

\section{Proof of Online Methodology-Learning algorithm}
\subsection{Proof of Theorem \ref{regret}}
We firstly present Lemma \ref{lemma1} and then provide the proof of Theorem \ref{regret}.
\begin{lemma} \label{lemma1}
	If we set the step size $ \frac{2\tau }{ \mu \rho L'(\mathcal{F})} \leq \alpha \leq \min\{ \frac{1}{G}, \frac{1}{\beta L(\mathcal{F})}\}$, then $\ell_t(h)$ is convex, where $L'(\mathcal{F}) \|h-h'\| \leq \|f_t^{(h)} - f_t^{(h')}\| \leq L(\mathcal{F}) \|h-h'\| $. Furthermore, it is also $2\tau$-smooth and $\tau$-strongly convex. 
\end{lemma}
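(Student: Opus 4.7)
My plan is to view $\ell_t(h) = \bm{L}(f_t^{(h)}, D_t^{val})$ as the composition of the meta-loss $\bm{L}$ (strongly convex and smooth in $f$) with the one-step inner adaptation map $h \mapsto f_t^{(h)} := f_t - \alpha \nabla_f \bm{L}^{task}(f_t, h, D_t^{tr})$, and to transport the analytic properties of $\bm{L}$ through this map under the two-sided step-size window.

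I would begin by identifying the bi-Lipschitz constants explicitly. By Assumption~\ref{assumptiononline}(4)--(5) with $z=(f,h)$, one has
\begin{equation*}
\alpha\rho\|h-h'\| \;\leq\; \|f_t^{(h)} - f_t^{(h')}\| \;=\; \alpha\bigl\|\nabla_f \bm{L}^{task}(f_t,h,\cdot) - \nabla_f \bm{L}^{task}(f_t,h',\cdot)\bigr\| \;\leq\; \alpha\tau\|h-h'\|,
\end{equation*}
so that $L'(\mathcal{F}) = \alpha\rho$ and $L(\mathcal{F}) = \alpha\tau$, and the Jacobian $J(h) := \nabla_h f_t^{(h)} = -\alpha\,\nabla_h\nabla_f \bm{L}^{task}(f_t,h,\cdot)$ has singular values confined to $[L'(\mathcal{F}), L(\mathcal{F})]$.

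Next, using the chain-rule identity $\nabla\ell_t(h) = J(h)^\top \nabla_f \bm{L}(f_t^{(h)})$, I would decompose
\begin{equation*}
\nabla\ell_t(h) - \nabla\ell_t(h') = \underbrace{[J(h) - J(h')]^\top \nabla_f \bm{L}(f_t^{(h)})}_{(\mathrm{I})} \;+\; \underbrace{J(h')^\top \bigl[\nabla_f \bm{L}(f_t^{(h)}) - \nabla_f \bm{L}(f_t^{(h')})\bigr]}_{(\mathrm{II})}.
\end{equation*}
Term $(\mathrm{II})$ carries the leading contribution: pairing it with $h-h'$ and using the fundamental-theorem-of-calculus identity $f_t^{(h)} - f_t^{(h')} = \int_0^1 J(h'+s(h-h'))(h-h')\,ds$ to relate $J(h')(h-h')$ to $f_t^{(h)} - f_t^{(h')}$, the $\mu$-strong convexity of $\bm{L}$ together with the lower bi-Lipschitz inequality yields $\langle(\mathrm{II}), h-h'\rangle \gtrsim \mu L'(\mathcal{F})^2\|h-h'\|^2 = \mu\alpha^2\rho^2\|h-h'\|^2$, which rearranges to $\geq 2\tau\|h-h'\|^2$ precisely under the lower step-size condition $\alpha \geq 2\tau/[\mu\rho L'(\mathcal{F})]$. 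Symmetrically, $\beta$-smoothness of $\bm{L}$ and the upper bi-Lipschitz inequality give $\|(\mathrm{II})\| \leq \beta L(\mathcal{F})^2\|h-h'\| = \beta\alpha^2\tau^2\|h-h'\| \leq \tau\|h-h'\|$ via the upper step-size condition $\alpha \leq 1/[\beta L(\mathcal{F})]$.

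The main obstacle is term $(\mathrm{I})$, since Assumption~\ref{assumptiononline} furnishes only the operator-norm bound on $J$ through the smoothness of $\bm{L}^{task}$ and not a Lipschitz constant for $J$ itself. My plan is to absorb its contribution using $\|\nabla_f \bm{L}\|\leq G$ together with the step-size bound $\alpha\leq 1/G$, so that its magnitude sits below the $\tau\|h-h'\|^2$ safety margin built into the lower estimate on $(\mathrm{II})$ and the $\tau\|h-h'\|$ margin on the upper estimate; if necessary, I would invoke an implicit Hessian-Lipschitz regularity on $\bm{L}^{task}$ (standard in one-step MAML analyses) to bound $\|J(h)-J(h')\|$ explicitly. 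Combining both directions then yields $\tau\|h-h'\|^2 \leq \langle\nabla\ell_t(h)-\nabla\ell_t(h'), h-h'\rangle$ and $\|\nabla\ell_t(h)-\nabla\ell_t(h')\|\leq 2\tau\|h-h'\|$, establishing $\tau$-strong convexity (and hence convexity) together with $2\tau$-smoothness of $\ell_t$.
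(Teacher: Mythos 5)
Your decomposition is exactly the paper's: the same chain-rule split of $\nabla\ell_t(h)-\nabla\ell_t(h')$ into a Jacobian-difference term (your (I)) and a gradient-difference term (your (II)), the same identifications $L'(\mathcal{F})=\alpha\rho$ and $L(\mathcal{F})=\alpha\tau$, and the same use of the two ends of the step-size window. Where you genuinely diverge is the target inequality. The paper never forms inner products: consistently with its norm-based reading of strong convexity in Assumption \ref{assumptiononline}, it only establishes the two-sided bound $\tau\|h-h'\|\le\|\nabla\ell_t(h)-\nabla\ell_t(h')\|\le 2\tau\|h-h'\|$, bounding term (I) by $\alpha\tau G\|h-h'\|\le\tau\|h-h'\|$ and term (II) above by $\alpha\tau\,\beta L(\mathcal{F})\|h-h'\|\le\tau\|h-h'\|$ and below by $\alpha\mu\rho L'(\mathcal{F})\|h-h'\|\ge 2\tau\|h-h'\|$. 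You instead aim at the standard monotonicity characterization $\langle\nabla\ell_t(h)-\nabla\ell_t(h'),h-h'\rangle\ge\tau\|h-h'\|^2$, which is a strictly stronger conclusion (it is what actually yields the claimed convexity; a lower bound on the gradient-difference norm alone does not). The price is twofold: (i) you must read Assumption \ref{assumptiononline}(3)--(4) as genuine strong convexity (strong monotonicity of the gradients), not merely the stated norm inequalities; and (ii) your primary plan for term (I) — absorbing it via $\|\nabla\bm{L}\|\le G$ and $\alpha\le 1/G$ — cannot work against a quadratic margin, since without a Lipschitz bound on $J(h)$ that term is only of order $\|h-h'\|$ and dominates $\|h-h'\|^2$ for nearby $h,h'$; the Hessian-Lipschitz regularity you mention as a fallback is therefore mandatory, and it is also needed for your FTC comparison of $J(h')(h-h')$ with $f_t^{(h)}-f_t^{(h')}$ inside term (II). Notably, the paper's own estimate $\|J(h)-J(h')\|\le\alpha\tau\|h-h'\|$ tacitly assumes exactly this Jacobian Lipschitzness (the stated $\tau$-smoothness of $\bm{L}^{task}$ bounds $\|J\|$, not its variation), so your hedge makes explicit a regularity the paper uses silently; with it, your route goes through and in fact delivers the lemma's conclusions in their standard sense, at the cost of more bookkeeping than the paper's purely norm-based argument.
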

\begin{proof}
	Using the chain rule and our definitions, we have
	\begin{align*}
		\nabla \ell_t(h) -\nabla\ell_t(h') & = \frac{\partial f_t^{(h)}}{\partial h} \nabla \bm{L} (f_t^{(h)}, D_t^{val}) -\frac{\partial f_t^{(h')}}{\partial h} \nabla \bm{L} (f_t^{(h')}, D_t^{val})  \\
		& = \left(\frac{\partial f_t^{(h)}}{\partial h} -\frac{\partial f_t^{(h')}}{\partial h}\right) \nabla\bm{L} (f_t^{(h)}, D_t^{val}) + \frac{\partial f_t^{(h')}}{\partial h} \left(\nabla\bm{L} (f_t^{(h)}, D_t^{val}) -  \nabla \bm{L} (f_t^{(h')}, D_t^{val})\right)
	\end{align*}
	Taking the norm on both sides, and noticing that $f_t^{h_t} = f_{t} - \alpha \frac{\partial \mathcal{L}^{task}(f, h_t, D_t^{tr})}{\partial f}$, for the specified $\alpha$, we have:
	\begin{align*}
		& \ \ \ \ \|\nabla \ell_t(h) -\nabla\ell_t(h')\|  \\
		& \leq \left\|\left(\frac{\partial f_t^{(h)}}{\partial h} -\frac{\partial f_t^{(h')}}{\partial h} \right) \nabla\bm{L} (f_t^{(h)}, D_t^{val}) \right\| +  \left\|\frac{\partial f_t^{(h')}}{\partial h} \left(\nabla\bm{L} (f_t^{(h)}, D_t^{val}) -  \nabla \bm{L} (f_t^{(h')}, D_t^{val})\right) \right\| \\
		& \leq \alpha \tau G \|h-h'\|+ \alpha \tau \beta L(\mathcal{F}) \|h-h'\| \\
		& \leq \tau \|h-h'\| + \tau  \|h-h'\| =  2 \tau\|h-h'\|
	\end{align*}
	Similarly, we obtain the following lower bound
	\begin{align*}
		& \ \ \ \ \|\nabla \ell_t(h) -\nabla\ell_t(h')\|   \\
		& \geq \left\|\frac{\partial f_t^{(h')}}{\partial h} \left(\nabla\bm{L} (f_t^{(h)}, D_t^{val}) -  \nabla \bm{L} (f_t^{(h')}, D_t^{val})\right) \right\| - \left\|\left(\frac{\partial f_t^{(h)}}{\partial h} -\frac{\partial f_t^{(h')}}{\partial h} \right) \nabla\bm{L} (f_t^{(h)}, D_t^{val}) \right\| \\
		& \geq \alpha \mu L'(\mathcal{F})\rho \|h-h'\| -  \alpha \tau G \|h-h'\| \\
		& \geq 2  \tau \|h-h'\| - \tau  \|h-h'\|   =\tau   \|h-h'\|,
	\end{align*}
	which completes the proof.
\end{proof}
Then we present the proof of Theorem \ref{regret}.
\begin{proof}
	The FTML algorithm is identical to FTL on the sequence of loss functions $a_t, t \in [T]$, which has a $\frac{4A^2 \log(1+T)}{B}$ regret bound, where $a_t$ is $A$-Lipschitz, and $B$-strongly convex (see \citep{cesa2006prediction} Theorem 3.1). Using $A=G, B=\tau$ completes the proof.
\end{proof}

\section{Comparing Our Meta-Regulization with Related Works }
\subsection{Comparing Tanh with Neural Architecture Search} \label{tanh}
Recall the bounds that we instantiate our general-purpose bounds in Theorem \ref{theomtest} for few-shot regression in Section \ref{application},
\begin{align}\label{fewtest2}
	\begin{split}
		&  \ \ \ \ \ \ \ \ R_{test}(\hat{f}_{\mu},\hat{h}) - R_{test}(f^*_{\mu},h^*) \\
		& \ \ \ \ \leq \frac{M}{\sigma_{d_L}(\mathbf{K})} \left( 768L \log(4nT) L(\mathcal{F}) 2d_L \sqrt{\log(nT)}  \frac{R \left(\sqrt{2\log(2)L}+1\right)\prod\limits_{i=1}^L B_i}{\sqrt{nT}} \right. \\&\phantom{=\;\;}\left.
		+ \frac{6L\|\mathbf{w}\|}{\sqrt{m}T} \!\sum_{t=1}^T \! \max_{x_{ti}^{(s)} \!\in\! D_t^{val}} {\color{red}\|h(x_{ti}^{(s)} )\| } + 6B\! \sqrt{\frac{\log \frac{2}{\delta}}{2nT}}  + 6B\!\sqrt{\frac{\log \frac{2}{\delta}}{m}} + \frac{48L\sup_{h,x} M {\color{red}\|h(x)\|}}{n^2T^2}\right) \\
		&\ \ \ \ \ + \frac{6L\|\mathbf{w}\|}{\sqrt{m_{\mu}}} \cdot \max_{x_{i}^{(s)} \in D_{\mu}^{val}} {\color{red}\|h(x_{i}^{(s)} )\|}+ 6B\sqrt{\frac{\log \frac{2}{\delta}}{m_{\mu}}}.
	\end{split}
\end{align}
It can be seen that the transfer error defined in Eq.(\ref{fewtest2}) contains an important term $\|h(x)\|$, i.e., the output range of the meta-learner $h(x)$ in our few-shot regression setting defined as follows:
\begin{align}\label{eqmlp2}
	h(x) = \phi_{L}(\mathbf{W}_{L}(\phi_{L-1}(\mathbf{W}_{L-1}\cdots \phi_1 (\mathbf{W}_1 x)))).
\end{align}
Thus $\|h(x)\|$ can be bounded by:
\begin{align} \label{eqhbound2}
	\begin{split}
		&\| h(x)\| = \| \mathbf{r}_L(x)\|  = \|\phi_L (\mathbf{W}_{L} \mathbf{r}_{L-1}(x))\| \\
		\leq & \|\mathbf{W}_{L}\mathbf{r}_{L-1}(x)\| \leq \|\mathbf{W}_{L}\| \|\mathbf{r}_{L-1}(x)\| \leq D \prod_{k=1}^{L} \|\mathbf{W}_k\|,
	\end{split}
\end{align}
where $ \mathbf{r}_k(\cdot)$ denotes the vector-valued output of the $k$-layer for $k\in [L]$.
The above bound assumes that the activation function $\phi_k, k \in [L]$ is ReLU. \emph{If we revise the last activation function as Tanh (i.e., $\phi_{L}=\frac{e^z-e^{-z}}{e^z+e^{-z}}$), then we have}
\begin{align*}
	\| h(\mathbf{x})\| = \| \mathbf{r}_L(\mathbf{x})\|  = \|\phi_L (\mathbf{W}_{L} \mathbf{r}_{L-1}(\mathbf{x}))\| \leq \sqrt{d_L}.
\end{align*}
Generally, $\sqrt{d_L}$ is much smaller than $D \prod_{k=1}^{L} \|\mathbf{W}_k\|_F$, and the complexity can thus be expected to substantially decrease. This means that the generalization capability of the calculated meta-learner is hopeful to be improved by replacing the last activation function from ReLU with Tanh.

Recall that the structural risk minimization (SRM) principle in conventional machine learning is often used to improve generalization capability of the extracted learner through inducing some rational meta-regularization strategies. Similarly, changing the last activation function from ReLU as conventional to Tanh can be thought as a meta-regularization strategy for ameliorating the generalization capability of the extracted meta-learner. This meta-regularization strategy is completely inspired from our derived learning theory,
and this should be the first time to use such meta-regularization strategy in meta-learning, to the best of our knowledge.

Besides, we wang to clarify that our proposed method should be evidently different from the approaches that apply neural architecture search to meta-learning, e.g., \citep{lian2019towards,wang2020m}. Specifically, T-NAS \citep{lian2019towards} learns a meta-architecture that is capable of adapting to a new task quickly through a few gradient steps, and M-NAS \citep{wang2020m} proposes meta neural architecture search to learn a controller for architecture generation and then fast adapt to new tasks. Compared with current meta-learning methods using existing backbone networks, both of them attempt to learn a new architecture using neural architecture search techniques for the specific tasks. Comparatively, our method proposes to revise the activation function inspired from statistical learning theory perspective, rather than treat activation function as hyper-parameters to learn or tune from training data. This means that we explore a new orthogonal research direction to understand and improve meta-learning. Though
T-NAS \citep{lian2019towards}, M-NAS \citep{wang2020m} can obtain nice performance improvement, our method is designed upon more comprehensively solid theoretical guarantee and with more efficient computation implementation. In this sense, we believe that it is still rational to say our meta-learning framework and its statistical learning theory, as well as its deduced meta-regularization strategies, should be meaningful to the field.

\subsection{Comparing $L^2$-SP with Meta-MinibatchProx and iMAML} \label{l2sp}

Meta-MinibatchProx proposed in \citep{zhou2019efficient}, as well as iMAML proposed in \citep{rajeswaran2019meta}, mainly focuses on improving the computation efficiency of the well-known MAML algorithm.
The original MAML seeks to find the task-specific model by fine-tuning meta-parameters $\theta$ with stochastic gradient descent, i.e.,
\begin{align}
	\mathcal{A}lg^*(\theta,\mathcal{D}_i^{tr}) = \theta - \alpha \nabla  \bm{L} (\theta,\mathcal{D}_i^{tr}),
\end{align}
where $\alpha$ is the learning rate. To have sufficient learning in the task-specific model learning while also avoiding over-fitting,
Meta-MinibatchProx and iMAML consider the following proximal regularization for the task-specific model:
\begin{align} \label{proximal}
	\mathcal{A}lg^*(\theta,\mathcal{D}_i^{tr} ) = \mathop{\arg\min}_{\phi \in \Phi} \bm{L}(\phi, D_i^{tr}) + \frac{\lambda}{2} \|\phi-\theta\|^2,
\end{align}
where $\theta$ and $\phi$ are the meta-parameters and model-parameters, respectively. The overall bi-level meta-learning objective can be written as
\begin{align}
	\theta^* &= \mathop{\arg\min}_{\theta \in \Theta} F(\theta), \ \text{where}\ F(\theta)= \frac{1}{T} \sum_{i=1}^T \bm{L}_i (\mathcal{A}lg_i^*(\theta), \mathcal{D}_i^{test}),  \label{eqout}\\
	&\mathcal{A}lg_i^*(\theta) = \mathop{\arg\min}_{\phi \in \Phi} G(\phi, \theta), \ \text{where}\ G(\phi, \theta) = \bm{L}_i (\phi,\mathcal{D}_i^{tr}) + \frac{\lambda}{2} \|\phi-\theta\|^2,   \label{eqin}
\end{align}
where $\mathcal{A}lg_i^*(\theta) = \mathcal{A}lg^*(\theta,\mathcal{D}_i^{tr} )$.

Comparatively, our $L^2$-SP meta-regularizer is with evident differences with the above regularizers mainly from the following aspects:
\begin{itemize}
	\item \emph{Formulation.} The proximal regularization in Eq.(\ref{proximal}) encourages the model-parameters to remain close to $\theta$, which regulates the \textit{inner-level model capacity}. While our $L^2$-SP meta-regularizer minimizes the distance between the weights from the starting point weights of meta-model, which regulates the \textit{outer-level meta-model (meta-learner) capacity}. For example, if we use $L^2-SP$, the outer-level objective in Eq.(\ref{eqout},\ref{eqin}) can be written as
	\begin{align} \label{SP}
		\theta^* &= \mathop{\arg\min}_{\theta \in \Theta} \tilde{F}(\theta), \ \text{where}\ \tilde{F}(\theta)= \frac{1}{T} \sum_{i=1}^T \bm{L}_i (\mathcal{A}lg_i^*(\theta), \mathcal{D}_i^{test}) + \lambda \|\theta-\theta_0\|^2, \\
		&\mathcal{A}lg_i^*(\theta) = \mathop{\arg\min}_{\phi \in \Phi} G(\phi, \theta), \ \text{where}\ G(\phi, \theta) = \bm{L}_i (\phi,\mathcal{D}_i^{tr})
	\end{align}
	where $\theta_0$ is the starting point weights of $\theta$.
	
	\item \emph{Theory.}  The proximal regularization explores memory and computation efficient optimization steps via the efficient proximal learning from the \emph{optimization theory perspective}, and bring better results on few-shot learning benchmarks. Comparatively, our method induces the $L^2$-SP meta-regularization from the \emph{statistical learning theory perspective}. Specifically, the upper bound of Rademacher complexity for meta-model in Theorem 5 of our manuscript is given by
	\begin{align}
		\hat{\mathfrak{R}}_{N}(\mathcal{H}) \leq \frac{2\sqrt{2} d_L R \sum_{j=1}^L \frac{D_j}{2B_j \prod_{i=1}^j \sqrt{c_i} } \prod_{j=1}^L 2B_j \sqrt{c_j} }{\sqrt{N}},
	\end{align}
	where $ D_j$ controls the distance between the weights and the starting point weights of $j$-th layer, i.e., $ \|\mathbf{W}_j - \mathbf{W}_j^0\|_F \leq D_j, j\in [L]$. Therefore, to reduce the meta generalization error of meta-learner, we can minimize $\|\mathbf{W}_j - \mathbf{W}_j^0\|_F$ to reduce $D_j$, $j\in [L]$,	
	so as to improve the transferable generalization capability of meta-learner.
	
	\item \emph{Algorithm.} Compared with the original MAML, Meta-MinibatchProx and iMAML need to re-design inner-level algorithm for new formulation. Comparatively, our method can be easily integrated into existing meta-learning methods, and still use their out-level algorithms to update meta-learner, and do not revise the inner-level algorithm.
	
\end{itemize}

\newpage

\bibliography{sample}

\end{document}